\newtheorem{theorem}{Theorem}[section]
\newtheorem{corollary}{Corollary}[theorem]
\newtheorem{lemma}[theorem]{Lemma}
\newcommand{\w}{\mathbf{w}}
\newcommand{\smlw}{\mathbf{w}}
\newcommand{\smlh}{\mathbf{h}}
\newcommand{\smla}{\mathbf{a}}
\newcommand{\smlc}{\mathbf{c}}
\newcommand{\x}{\mathbf{x}}
\newcommand{\z}{\mathbf{z}}
\newcommand{\y}{\mathbf{y}}
\newcommand{\e}{\mathbf{e}}
\newcommand{\Bigh}{\mathbf{H}}
\newcommand{\Bigx}{\mathbf{X}}
\newcommand{\Bigy}{\mathbf{Y}}
\newcommand{\A}{\mathbf{A}}
\newcommand{\one}{\mathds{1}}
\newcommand{\boldmu}{\pmb{\mu}}
\newcommand{\boldeps}{\pmb{\epsilon}}
\newcommand{\prox}{\mathcal{S}}
\newcommand{\R}{{\mathbb{R}}}
\newcommand{\W}{\mathbf{W}}
\newcommand{\bvec}{\mathbf{b}}
\newcommand{\cvec}{\mathbf{c}}
\newcommand{\eye}{\mathbf{I}}
\newcommand{\vectornorm}[1]{\left|\left|#1\right|\right|}
\newcommand{\ip}[2]{\langle #1,#2 \rangle}
\DeclareMathOperator*{\argmax}{arg\,max}
\DeclareMathOperator*{\argmin}{arg\,min}
\icmltitlerunning{Convolutional dictionary learning based auto-encoders for natural exponential-family distributions}
\begin{document}

\twocolumn[
\icmltitle{Convolutional dictionary learning based auto-encoders\\for natural exponential-family distributions}

\icmlsetsymbol{equal}{*}

\begin{icmlauthorlist}
\icmlauthor{Bahareh Tolooshams}{equal,harvard}
\icmlauthor{Andrew H. Song}{equal,mit}
\icmlauthor{Simona Temereanca}{brown}
\icmlauthor{Demba Ba}{harvard}
\end{icmlauthorlist}

\icmlaffiliation{harvard}{School of Engineering and Applied Sciences, Harvard University, Cambridge, MA}
\icmlaffiliation{mit}{Massachusetts Institute of Technology, Cambridge, MA}
\icmlaffiliation{brown}{Brown University, Boston, MA}

\icmlcorrespondingauthor{Bahareh Tolooshams}{btolooshams@seas.harvard.edu}

% You may provide any keywords that you
% find helpful for describing your paper; these are used to populate
% the "keywords" metadata in the PDF but will not be shown in the document
\icmlkeywords{Auto-Encoders, Exponential-Family Distributions, Dictionary Learning, Image Denoising}

\vskip 0.3in
]

\printAffiliationsAndNotice{\icmlEqualContribution} % otherwise use the standard text.

\begin{abstract}
	
We introduce a class of auto-encoder neural networks tailored to data from the natural exponential family (e.g., count data). The architectures are inspired by the problem of learning the filters in a convolutional generative model with sparsity constraints, often referred to as convolutional dictionary learning (CDL). Our work is the first to combine ideas from convolutional generative models and deep learning for data that are naturally modeled with a non-Gaussian distribution (e.g., binomial and Poisson). This perspective provides us with a scalable and flexible framework that can be re-purposed for a wide range of tasks and assumptions on the generative model. Specifically, the iterative optimization procedure for solving CDL, an unsupervised task, is mapped to an unfolded and constrained neural network, with iterative adjustments to the inputs to account for the generative distribution. We also show that the framework can easily be extended for discriminative training, appropriate for a supervised task. We demonstrate 1) that fitting the generative model to learn, in an unsupervised fashion, the latent stimulus that underlies neural spiking data leads to better goodness-of-fit compared to other baselines, 2) competitive performance compared to state-of-the-art algorithms for supervised Poisson image denoising, with significantly fewer parameters, and 3) gradient dynamics of shallow binomial auto-encoder.

\end{abstract}

\section{Introduction}
\label{sec:intro}

Learning shift-invariant patterns from a dataset has given rise to work in different communities, most notably in signal processing (SP) and deep learning. In the former, this problem is referred to as convolutional dictionary learning (CDL) \cite{garcia-2018-convolutional}. CDL imposes a linear \textit{generative model} where the data are generated by a sparse linear combination of shifts of localized patterns. In the latter, convolutional neural networks (NNs) \cite{dl} have excelled in identifying shift-invariant patterns. 

Recently, the iterative nature of the optimization algorithms for performing CDL has inspired the utilization of NNs as an efficient and scalable alternative, starting with the seminal work of \cite{LISTA}, and followed by~\cite{TolooshamsBahareh2018SCDL, SreterHillel2018LCSC, sulam2019multi}. Specifically, the iterative steps are expressed as a recurrent NN, and thus solving the optimization simply becomes passing the input through an unrolled NN~\cite{HersheyJohnR2014DUMI,monga2019algorithm}. At one end of the spectrum, this perspective, through weight-tying, leads to architectures with significantly fewer parameters than a generic NN. At the other end, by untying the weights, it motivates new architectures that depart, and could not be arrived at, from the generative perspective~\cite{LISTA,SreterHillel2018LCSC}.

The majority of the literature at the intersection of generative models and NNs assumes that the data are real-valued and therefore are not appropriate for binary or count-valued data, such as neural spiking data and photon-based images~\cite{yang2011bits}. Nevertheless, several works on Poisson image denoising, arguably the most popular application involving non real-valued data, can be found separately in both communities.
In the SP community, the negative Poisson data likelihood is either explicitly minimized~\cite{Salmon2014, Giryes2014} or used as a penalty term added to the objective of an image denoising problem with Gaussian noise~\cite{Ma2013}. Being rooted in the dictionary learning formalism, these methods operate in an \textit{unsupervised} manner. Although they yield good denoising performance, their main drawbacks are scalability and computational efficiency.

In the deep learning community, NNs tailored to image denoising~\cite{dncnn, Remez2018, Feng18}, which are reminiscent of residual learning, have shown great performance on Poisson image denoising. However, since these 1) are not designed from the generative model perspective and/or 2) are \textit{supervised} learning frameworks, it is unclear how they can be adapted to the classical CDL, where the task is \textit{unsupervised} and the interpretability of the parameters is important. NNs with a generative flavor, namely variational auto-encoders (VAEs), have been extended to utilize non real-valued data \cite{Nazabal2018VAE,liang2018VAE}. However, these architectures cannot be adapted to solve the CDL task.

To address this gap, we make the following contributions\footnote{The code can be found at https://github.com/ds2p/dea}:

\textbf{Auto-encoder inspired by CDL for non real-valued data} We introduce a flexible class of auto-encoder (AE) architectures for data from the natural exponential-family that combines the perspectives of generative models and NNs. We term this framework, depicted in Fig.~\ref{fig:pipeline}, the deep convolutional exponential-family auto-encoder (DCEA). 

\textbf{Unsupervised learning of convolutional patterns} We show through simulation that DCEA performs CDL and learns convolutional patterns from binomial observations. We also apply DCEA to real neural spiking data and show that it fits the data better than baselines.

\textbf{Supervised learning framework} DCEA, when trained in a supervised manner, achieves similar performance to state-of-the-art algorithms for Poisson image denoising with orders of magnitude fewer parameters compared to other baselines, owing to its design based on a generative model.

\textbf{Gradient dynamics of shallow exponential auto-encoder} Given some assumptions on the binomial generative model with dense dictionary and ``good'' initializations, we prove in Theorem~\ref{theo:descentrelu} that shallow exponential auto-encoder (SEA), when trained by gradient descent, recovers the dictionary.

\section{Problem Formulation}
\paragraph{Natural exponential-family distribution}
For a given observation vector $\mathbf{y}\in\mathbb{R}^N$, with mean $\pmb{\mu}\in \mathbb{R}^N$, we define the log-likelihood of the \textit{natural exponential family}~\cite{glm} as
\begin{equation}\label{eq:likelihood}
\log p(\mathbf{y}\vert \bm{\mu}) = f\big(\boldmu\big)^{\text{T}} \y+g(\mathbf{y})-B\big(\boldmu\big),
\end{equation}
where we have assumed that, conditioned on $\boldmu$, the elements of $\y$ are independent. The natural exponential family includes a broad family of probability distributions such as the Gaussian, binomial, and Poisson. The functions $g(\cdot)$, $B(\cdot)$, as well as the invertible \emph{link function} $f(\cdot)$, all depend on the choice of distribution. 

\paragraph{Convolutional generative model}
We assume that $f(\boldmu)$ is the sum of scaled and time-shifted copies of $C$ finite-length filters (dictionary) $\{\mathbf{h}_c\}_{c=1}^C\in\mathbb{R}^K$, each localized, i.e., $K$ $\ll$ $N$. We can express $f(\boldmu)$ in a convolutional form: $f(\pmb{\mu})=\sum_{c=1}^C \smlh_c\ast\x^c$, where $\ast$ is the convolution operation, and $\x^c\in\mathbb{R}^{N-K+1}$ is a train of scaled impulses which we refer to as \emph{code vector}. Using linear-algebraic notation, $f(\pmb{\mu})=\sum_{c=1}^C \smlh_c\ast\x^c=\Bigh\x$, where $\mathbf{H}\in \mathbb{R}^{N\times C(N-K+1)}$ is a matrix that is the concatenation of $C$ Toeplitz (i.e., banded circulant) matrices $\mathbf{H}^c\in \mathbb{R}^{N\times (N-K+1)}$, $c=1,\ldots,C$, and $\mathbf{x}=[(\mathbf{x}^1)^{\text{T}},\ldots,(\mathbf{x}^C)^{\text{T}}]^\text{T}\in\mathbb{R}^{C(N-K+1)}$.

We refer to the input/output domain of $f(\cdot)$ as the data and dictionary domains, respectively. We interpret $\y$ as a time-series and the non-zero elements of $\x$ as the times when each of the $C$ filters are active. When $\y$ is two-dimensional (2D), i.e., an image, $\x$ encodes the spatial locations where the filters contribute to its mean $\boldmu$.

\paragraph{Exponential convolutional dictionary learning (ECDL)}
Given $J$ observations $\{\mathbf{y}^j \}_{j=1}^J$, we estimate $\{\mathbf{h}_c\}_{c=1}^C$ and $\{\mathbf{x}^j\}_{j=1}^J$ that minimize the negative log-likelihood $\sum_{j=1}^Jl(\x^j)=-\sum_{j=1}^J\log p(\y^j\vert \{\smlh_c\}_{c=1}^C,\x^j)$ under the convolutional generative model, subject to sparsity constraints on $\{\mathbf{x}^j\}_{j=1}^J$. We enforce sparsity using the $\ell_1$ norm, which leads to the non-convex optimization problem
\begin{equation}\label{eq:gopt}
\min_{\substack{\{\mathbf{h}_c\}_{c=1}^C\\ \{\x^j\}_{j=1}^J}} \sum_{j=1}^J \overbrace{-(\Bigh\x^j)^{\text{T}} \mathbf{y}^j+B(f^{-1}\big(\Bigh\x^j)\big)}^{l(\x^j)}+\lambda\|\x^j\|_1,
\end{equation}
where the regularizer $\lambda$ controls the degree of sparsity. A popular approach to deal with the non-convexity is to minimize the objective over one set of variables, while the others are fixed, in an alternating manner, until convergence~\cite{Agarwal2016LearningSU}. When $\{\x^{j}\}_{j=1}^J$ is being optimized with fixed $\Bigh$, we refer to the problem as convolutional sparse coding (CSC). When $\Bigh$ is being optimized with $\{\x^{j}\}_{j=1}^J$ fixed, we refer to the problem as convolutional dictionary update (CDU).

\begin{figure}[htb]
	\begin{minipage}[b]{1.0\linewidth}
		\centering
		\tikzstyle{block} = [draw, fill=none, rectangle, 
		minimum height=2em, minimum width=2em]
		\tikzstyle{sum} = [draw, fill=none, minimum height=0.1em, minimum width=0.1em, circle, node distance=1cm]
		\tikzstyle{cir} = [draw, fill=none, circle, line width=0.7mm, minimum width=0.5cm, node distance=1cm]
		\tikzstyle{loss} = [draw, fill=none, color=black, ellipse, line width=0.5mm, minimum width=0.7cm, node distance=1cm]
		\tikzstyle{blueloss} = [draw, fill=none, color=black, ellipse, line width=0.5mm, minimum width=0.7cm, node distance=1cm, color=black]
		\tikzstyle{input} = [coordinate]
		\tikzstyle{output} = [coordinate]
		\tikzstyle{pinstyle} = [pin edge={to-,thin,black}]
		\begin{tikzpicture}[auto, node distance=2cm,>=latex']
		cloud/.style={
			draw=red,
			thick,
			ellipse,
			fill=none,
			minimum height=1em}
		% We start by placing the blocks
		\node [input, name=input] {};
		\node [cir, node distance=1.cm, right of=input] (Y) {$\y^j$};
		\node [cir, right of=Y, node distance=1.3cm] (ytilde) {$\tilde \y_{t}^j$};
		
		\node [block, right of=ytilde,  minimum width=0.5cm, node distance=1.23cm] (HT) {$\alpha\Bigh^{\text{T}}$};
		\node [block, right of=HT,  minimum width=0.7cm, node distance=1.2cm] (relu) {$\prox_{b}$};
		
		\node [cir, right of=relu, node distance=1.1cm] (xt) {$\x_{t}^j$};
		
		\node [fill=none, below of=xt, node distance=0.53cm] (connection_1) {$$};
		\node [fill=none, below of=xt,left=3.6pt, node distance=0.53cm] (connection_2) {$$};
			
		\node [cir, right of=xt, node distance=1.33cm] (xT) {$\x_{T}^j$};
		\node [output, node distance=1.2cm, right of=xT] (output) {};
		\node [block, right of=xT, node distance=1.1cm] (H) {$\Bigh$};
		\node [output, right of=H, node distance=0.6cm] (out) {};
		
		\node [rectangle, below of=H, minimum width=0.01cm, node distance=0.651cm, left=0.000cm] (Y1m) {};
		\node [rectangle, below of=H, minimum width=0.01cm, node distance=0.58cm, left=0.006cm] (Y1ml) {};
		\node [rectangle, below of=H, minimum width=0.01cm, node distance=0.58cm, right=0.006cm] (Y1mr) {};
		
		\node [block, below of=xt, node distance=1.0cm] (H_cns) {$\Bigh$};
		\node [block, left of=H_cns, node distance=1.5cm] (sigmoid) {$f^{-1}(\cdot)$};
		\node [rectangle, fill=none,  node distance=0.6cm,  below of=H] (middle) {};
			
		% decoder
		\draw[thick, line width=2, black, ->]     ($(xt.north east)+(0.55,0.8)$) -- ($(Y.north east)+(7.4,0.8)$);
		\draw[thick, line width=2, black, ->]     ($(Y.north east)+(0,0.8)$) -- ($(xt.north east)+(0.4,0.8)$);
		
		\draw[thick,dotted]     ($(xT.north east)+(-1.06,0.25)$) rectangle ($(ytilde.south west)+(-0.47,-1.25)$); 
		\node [rectangle, fill=none,  node distance=1.32cm,  right=-15pt,above of=H] (text) {\footnotesize{Decoder}};
		\node [rectangle, fill=none,  node distance=1.32cm,  right=0pt,above of=HT] (encoder) {\footnotesize{Encoder}};
			
		% Once the nodes are placed, connecting them is easy. 
		\draw [->] (Y) -- node [name=m, pos=0.3, above] {} (ytilde);
		\draw [->] (ytilde) -- node {} (HT);
		\draw [->] (HT) -- node[name=s, pos=0.3, above] {} (relu);
		\draw [->] (relu) -- node[] {} (xt);
		\draw [-] (xt) |- node[] {} (connection_2);
		\draw [->] (connection_1) -| node[] {} (s);
		\draw [->] (xt) -- node[name=loop, pos=0.28, above] {} (xT);
		\draw [->] (xT) -- node[] {} (H);
		\draw [->] (loop) |- node[] {} (H_cns);
		\draw [->] (H_cns) -- node[] {} (sigmoid);
		\draw [->] (sigmoid) -| node[pos=1,right] {} (m);
		\draw [->] (sigmoid) -| node[pos=0.9,right] {-} (m);
		
		\draw [->] (H) -- node[] {} (out);	
		
		\node [rectangle, fill=none,  node distance=0.75cm,  above of=HT] (text) {\footnotesize{Repeat $T$ times}};	
		\end{tikzpicture}
	\end{minipage}
	\caption{DCEA architecture for ECDL. The encoder/decoder structure mimics the CSC/CDU sequence in CDL. The encoder performs $T$ iterations of CSC. Each iteration uses \textit{working observations} (residuals) $\widetilde \y_t^j$ obtained by iteratively modifying $\y^j$ using the filters $\Bigh$ and a nonlinearity $f^{-1}(\cdot)$ that depends on the distribution of $\y^j$. The dictionary $\Bigh$ is updated through the backward pass.}
	\label{fig:pipeline}
\end{figure}
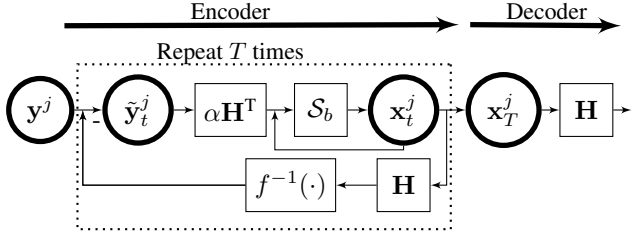
\section{Deep convolutional exponential-family auto-encoder}
We propose a class of auto-encoder architectures to solve the ECDL problem, which we term deep convolutional exponential-family auto-encoder (DCEA). Specifically, we make a one-to-one connection between the CSC/CDU steps and the encoder/decoder of DCEA depicted in Fig.~\ref{fig:pipeline}. We focus only on CSC for a single $\y^j$, as the CSC step can be parallelized across examples.

\subsection{The architecture}

\paragraph{Encoder} The forward pass of the encoder maps the input $\y^{j}$ into the sparse code $\x^j$. Given the filters $\{\smlh_c\}_{c=1}^C$, the encoder solves the $\ell_1$-regularized optimization problem from Eq.~(\ref{eq:gopt})
\begin{equation}\label{eq:l1csc}
\min_{\substack{\x^j}\\} l(\x^j) + \lambda \|\x^j\|_1
\end{equation}
in an iterative manner by unfolding $T$ iterations of the proximal gradient algorithm~\cite{parikh2014}.
For Gaussian observations, Eq.~(\ref{eq:l1csc}) becomes an $\ell_1$-regularized least squares problem, for which several works have unfolded the proximal iteration into a recurrent network~\cite{LISTA, wang2015deep,SreterHillel2018LCSC,TolooshamsBahareh2019deepresidualAE}.

We use $\prox_{b}\in\{\text{ReLU}_{b}, \text{Shrinkage}_{b}\}$ to denote a proximal operator with bias $b\geq 0$. We consider three operators,
\begin{equation}
\begin{aligned}
\text{ReLU}_{b}(\z)&=(\z-b)\cdot\mathds{1}_{\{\z\geq b\}}\\
\text{Shrinkage}_{b}(\z)&=\text{ReLU}_{b}(\z) - \text{ReLU}_{b}(-\z),
\end{aligned}
\end{equation} 
where $\mathds{1}$ is an indicator function. If we constrain the entries of $\x^j$ to be non-negative, we use $\prox_{b}=\text{ReLU}_{b}$. Otherwise, we use $\prox_{b}=\text{Shrinkage}_{b}$. A single iteration of the proximal gradient step is given by
\begin{equation}\label{eq:fista}
\begin{aligned}
\x_t^j &= \prox_{b}\left(\x^j_{t-1} - \alpha \nabla_{\mathbf{x}^j_{t-1}}\log{p\big(\mathbf{y}^j}\vert \{\smlh_c \}_{c=1}^C,\x_{t-1}^j\big)\right)\\
&= \prox_{b}\big(\x_{t-1}^j + \alpha \Bigh^{\text{T}}(\underbrace{\y-f^{-1}\big(\Bigh\x_{t-1}^j\big)}_{\widetilde{\y}_t^j})\big),
\end{aligned}
\end{equation}
where $\x_t^j$ denotes the sparse code after $t$ iterations of unfolding, and $\alpha$ is the step size of the gradient update. The term $\widetilde{\y}_t^j$ is referred to as \textit{working observation}.
The choice of $\alpha$, which we explore next, depends on the generative distribution. We also note that there is a one-to-one mapping between the regularization parameter $\lambda$ and the bias $b$ of $\prox_b$. We treat $\lambda$, and therefore $b$, as hyperparameters that we tune to the desired sparsity level. The matrix $\Bigh^{\text{T}}$ effectively computes the correlation between $\widetilde{\y}_t^j$ and $\{\smlh_c\}_{c=1}^C$. Assuming that we unfold $T$ times, the output of the encoder is $\x_T^j$.

The architecture consists of two nonlinear activation functions: $\prox_{b}(\cdot)$ to enforce sparsity, and $f^{-1}(\cdot)$, the inverse of the link function. For Gaussian observations $f^{-1}(\cdot)$ is linear with slope $1$. For other distributions in the natural exponential family, the encoder uses $f^{-1}(\cdot)$, a mapping from the dictionary domain to the data domain, to transform the input $\y^j$ at each iteration into a working observation $\widetilde\y^j_t$.

\paragraph{Decoder \& training} We apply the decoder $\Bigh$ to $\x_T^j$ to obtain the linear predictor $\Bigh \x_T^j$. This decoder completes the forward pass of DCEA, also summarized in Algorithm 2 of the \textbf{Appendix}. We use the negative log-likelihood $\mathcal{L}_{\Bigh}^{\text{unsup.}}=\sum_{j=1}^J l(\x^j)$ as the loss function applied to the decoder output for updating the dictionary. We train the weights of DCEA, fully specified by the filters  $\{\smlh_c\}_{c=1}^C$, by gradient descent through backpropagation. Note that the $\ell_1$ penalty is not a function of $\Bigh$ and is not in the loss function.

\begin{table*}[htb]
	\centering
	\caption{Generative models for DCEA.}
	\begin{tabular}{l||c|c|c|c|c}
		\toprule
		& $\y^j$ & $f^{-1}(\cdot)$ & $B(\z)$ & $\widetilde{\y}_t^j$ & $\x_t^j$\\
		\hline
		Gaussian & $\mathbb{R}$ & $I(\cdot)$ & $\z^{\text{T}}\z$ & $\y^j-\Bigh \x_{t-1}^j$& $\prox_{b}\left(\x^j_{t-1} + \alpha \Bigh^{\text{T}}\widetilde{\y}_t^{j}\right)$\\
		Binomial & $[0..M_j]$ & sigmoid$(\cdot)$ & $-\mathbf{1}^{\text{T}} \log(\mathbf{1}-\z)$&$\y^{j}-M_j\cdot \text{sigmoid}(\Bigh\x^j_{t-1})$ & $\prox_{b}\left(\x^j_{t-1} + \alpha \Bigh^{\text{T}}(\frac{1}{M_j}\widetilde{\y}_t^{j})\right)$\\
		Poisson & $[0..\infty)$&  $\exp(\cdot)$ & $\mathbf{1}^{\text{T}} \z$ &$\y^{j}-\exp(\Bigh \x_{t-1}^j)$& $\prox_{b}\left(\x^j_{t-1} + \alpha \Bigh^{\text{T}}\left(\text{Elu}(\widetilde{\y}_t^{j})\right)\right)$\\
		\bottomrule
	\end{tabular}
	\label{tab:generative}
\end{table*}

\subsection{Binomial and Poisson generative models}
We focus on two representative distributions for the natural exponential family: binomial and Poisson. For the binomial distribution, $\y^j$ assumes integer values from $0$ to $M_j$. For the Poisson distribution, $\y^j$ can, in principle, be any non-negative integer values, although this is rare due to the exponential decay of the likelihood for higher-valued observations. Table~\ref{tab:generative} summarizes the relevant parameters for these distributions.

The fact that binomial and Poisson observations are integer-valued and have limited range, whereas the underlying $\boldmu_j = f^{-1}(\Bigh\x^j)$ is real-valued, makes the ECDL challenging. This is compounded by the nonlinearity of $f^{-1}(\cdot)$, which distorts the error in the \textit{data} domain, when mapped to the \textit{dictionary} domain. In comparison, in Gaussian CDL 1) the observations are real-valued and 2) $f^{-1}(\cdot)$ is linear.

This implies that, for successful ECDL, $\y^j$ needs to assume a diverse set of integer values. For the binomial distribution, this suggests that $M_j$ should be large. For Poisson, as well as binomial, the maximum of $\boldmu_j$ should also be large. This explains why the performance is generally lower in Poisson image denoising for a lower peak, where the peak is defined as the maximum value of $\boldmu_j$~\cite{Giryes2014}.

\paragraph{Practical design considerations for architecture}
As the encoder of DCEA performs iterative proximal gradient steps, we need to ensure that $\x_T^j$ converges. Convergence analysis of ISTA~\cite{beck2009fast} shows that if $l(\x^j)$ is convex and has $L$--Lipschitz continuous gradient, which loosely means the Hessian is upper-bounded everywhere by $L>0$, choosing $\alpha \in(0,1/L]$ guarantees convergence. For the Gaussian distribution, $L$ is the square of the largest singular value of $\Bigh$~\cite{Daubechies2003AnIT}, denoted $\sigma_{\text{max}}^2(\Bigh)$, and therefore $\alpha \in(0,1/\sigma_{\text{max}}^2(\Bigh)]$. For the Binomial distribution, $L=\frac{1}{4}\sigma_{\text{max}}^2(\Bigh)$, and therefore $\alpha \in(0,4/\sigma_{\text{max}}^2(\Bigh)]$.

The gradient of the Poisson likelihood is not Lipschitz continuous. Therefore, we cannot set $\alpha$ a priori. In practice, the step size at every iteration is determined through a back-tracking line search~\cite{boyd_vandenberghe_2004}, a process that is not trivial to replicate in the forward pass of a NN. We observed that the lack of a principled approach for picking $\alpha$ results, at times, in the residual assuming large negative values, leading to instabilities. Therefore, we imposed a finite lower-bound on the residual through an exponential linear unit (Elu)~\cite{elu15} which, empirically, we found to work the best compared to other nonlinear activation units. The convergence properties of this approach require further theoretical analysis that is outside of the scope of this paper.

\section{Connection to unsupervised/supervised paradigm}
We now analyze the connection between DCEA and ECDL. We first examine how the convolutional generative model places constraints on DCEA. 
Then, we provide a theoretical justification for using DCEA for ECDL by proving dictionary recovery guarantees under some assumptions. Finally, we explain how DCEA can be modified for a supervised task.
   
\subsection{Constrained structure of DCEA}
We discuss key points that allow DCEA to perform ECDL.\\\\
\textbf{- Linear 1-layer decoder} In ECDL, the only sensible decoder is a one layer decoder comprising $\Bigh$ and a linear activation. In contrast, the decoder of a typical AE consists of multiple layers along with nonlinear activations.\\
\textbf{- Tied weights across encoder and decoder} Although our encoder is \textit{deep}, the same weights ($\Bigh$ and $\Bigh^{\text{T}}$) are repeated across the layers.\\
\textbf{- Alternating minimization} The forward pass through the encoder performs the CSC step, and the backward pass, via backpropagation, performs the CDU step.

\subsection{Theory for unsupervised ECDL task}
Here, we prove that under some assumptions, when training the shallow exponential-family auto-encoder by approximate gradient descent, the network recovers the dictionary corresponding to the binomial generative model. Consider a SEA, where the encoder is unfolded only once. Recent work~\cite{Nguyen2019} has shown that, using observations from a sparse \emph{linear} model with a \emph{dense} dictionary, the Gaussian SEA trained by approximate gradient descent recovers the dictionary. We prove a similar result for the binomial SEA trained with binomial observations with mean $f(\boldmu) = \A\x^*$, i.e., a sparse \emph{nonlinear} generative model with a \emph{dense} dictionary $\A \in \R^{n \times p}$.
\begin{theorem}\label{theo:descentrelu}
Suppose the generative model satisfies (A1) - (A14) (see {\bf Appendix}). Given infinitely many examples (i.e., $J\rightarrow \infty$), the binomial SEA with $\prox_{\bvec} = \text{ReLU}_{\bvec}$ trained by approximate gradient descent followed by normalization using the learning rate of $\kappa = O(p/s)$ (i.e., $\smlw^{(l+1)}_i = \text{normalize}(\smlw^{(l)}_i - \kappa g_i)$) recovers $\A$. More formally, there exists $\delta$$\in$$(0,1)$ such that at every iteration $l$, $\forall i\ \| \smlw_i^{(l+1)} - \smla_i \|_2^2 \leq (1- \delta) \| \smlw_i^{(l)} - \smla_i \|_2^2 + \kappa \cdot O(\frac{\max(s^2, s^3/ p^{\frac{2}{3} + 2\xi})}{p^{1+6\xi}})$.
\end{theorem}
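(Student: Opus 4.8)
The plan is to mirror the linear-model analysis of \cite{Nguyen2019} but to carefully track the distortion introduced by the nonlinear link $f^{-1}(\cdot)=\text{sigmoid}(\cdot)$ in the binomial generative model. The key observation is that one step of the SEA encoder produces $\x = \text{ReLU}_{\bvec}\!\left(\W^{\text{T}}\big(\y - M\cdot\text{sigmoid}(\W\x_0)\big)/M\right)$ with $\x_0 = 0$ initialization, so $\x = \text{ReLU}_{\bvec}(\tfrac{1}{M}\W^{\text{T}}\y)$, and the gradient of the reconstruction loss with respect to $\smlw_i$ decomposes into a ``signal'' term pointing toward $\smla_i$ and several ``error'' terms. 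First I would compute, conditionally on the sign/support pattern of the code, the expectation $g_i = \mathbb{E}[\nabla_{\smlw_i}\ell]$ over the data distribution; because $J\to\infty$, the empirical gradient equals this population gradient, so the analysis is deterministic. Using assumptions (A1)--(A14) — which I expect to include a dense, incoherent, near-unit-norm dictionary, a sparsity level $s$ with $s\ll p$, bounded code magnitudes, a large binomial count parameter $M$, and ``good'' warm-start initializations $\smlw_i^{(0)}$ within a small cone of $\smla_i$ — I would show $g_i = \beta(\smlw_i - \smla_i) + \smlv_i + \smln_i$, where $\beta>0$ is bounded below, $\smlv_i$ is a systematic bias term of controlled norm, and $\smln_i$ collects the higher-order cross-terms.

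The next step is the descent inequality itself. Writing $\smlw_i^{(l+1)} = \text{normalize}(\smlw_i^{(l)} - \kappa g_i)$, I would expand $\|\smlw_i^{(l+1)} - \smla_i\|_2^2$, using that normalization onto the unit sphere only decreases distance to the unit-norm target $\smla_i$ (a standard projection argument), then bound the inner product $\langle g_i,\ \smlw_i^{(l)} - \smla_i\rangle$ from below using the $\beta(\smlw_i-\smla_i)$ term, and bound $\kappa^2\|g_i\|_2^2$ from above. Choosing the learning rate $\kappa = O(p/s)$ balances the linear-contraction gain against the quadratic overshoot, yielding the claimed one-step contraction $\|\smlw_i^{(l+1)} - \smla_i\|_2^2 \leq (1-\delta)\|\smlw_i^{(l)} - \smla_i\|_2^2 + \kappa\cdot O\!\big(\max(s^2, s^3/p^{2/3+2\xi})/p^{1+6\xi}\big)$, where the additive floor comes precisely from $\|\smlv_i\|_2$ and $\|\smln_i\|_2$. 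The parameter $\xi$ presumably appears in (A1)--(A14) as a slack exponent governing how large $M$ and the dynamic range of $\boldmu$ must be relative to $p$; tracking its powers through the sigmoid Taylor expansion is what produces the $p^{6\xi}$ and $p^{2/3+2\xi}$ factors.

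The main obstacle will be controlling the nonlinear term $M\cdot\text{sigmoid}(\W\x^*)$ and its interaction with the ReLU thresholding inside the encoder. Unlike the linear case, $\mathbb{E}[\y\mid\x^*] = M\cdot\text{sigmoid}(\A\x^*)$ is not linear in $\x^*$, so the ``signal'' term in $g_i$ is not simply proportional to $\smla_i$; I would handle this by a second-order Taylor expansion of $\text{sigmoid}$ around $0$ (legitimate because $K\ll N$ keeps $\A\x^*$ bounded and the ``good'' initialization keeps $\W\x^*$ bounded), writing $\text{sigmoid}(z) = \tfrac12 + \tfrac14 z + O(z^3)$, so that the leading term reproduces the linear analysis and the cubic remainder feeds into $\smln_i$. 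A secondary difficulty is that the working-observation residual $\y - M\,\text{sigmoid}(\W\x_0)$ has variance scaling with $M$, so the normalization by $1/M$ in Table~\ref{tab:generative} is essential to keep the noise term $O(1/\sqrt{M})$ — I would need $M$ growing with $p$ (encoded in the $\xi$ assumption) for the additive error to vanish as $p\to\infty$. Finally, I would verify that the contraction keeps every iterate inside the region where the sign pattern of the code is stable and the Taylor expansions remain valid, closing the induction.
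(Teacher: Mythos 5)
Your overall strategy does match the paper's: both follow the Nguyen et al.\ template, Taylor-expand the sigmoid around $\mathbf{0}$ so the leading term reproduces the linear analysis, write the population gradient as $g_i = \frac{1}{4}p_i\nu\tau_i(\smlw_i - \smla_i) + v$ with $p_i = s/p$, bound $\|v\|_2$, and convert the correlation inequality $2\langle g_i, \smlw_i - \smla_i\rangle \geq \cdots$ into the contraction via the descent machinery of Arora et al.\ (Theorem 6). But there are three genuine gaps. First, your claim that normalization onto the unit sphere ``only decreases distance to the unit-norm target (a standard projection argument)'' is false: the sphere is not convex, and normalizing $0.5\,e_2$ moves it \emph{farther} from $e_1$. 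The paper instead invokes Lemma 5 of Nguyen et al.\ (assumption (A11)), which preserves descent only under the $(q,\varepsilon)$-nearness hypothesis; you need that lemma or an equivalent angle argument. Second, you never address the ReLU-specific bias terms. Because $\text{ReLU}_{\bvec}$ subtracts $\bvec$ from the code, the reconstruction contains $-\W_S\bvec_S$ and the gradient acquires the extra pieces the paper calls $g_i^{(4)},g_i^{(5)},g_i^{(6)}$; in particular $g_{i,S}^{(5)} = \frac{1}{4}b_i^2\smlw_i + \frac{1}{4}b_i\sum_{j\in S^{\backslash i}}\smlw_j b_j$ contributes $\frac{1}{4}p_i\big(\nu\tau_i(\tau_i-1)+b_i^2\big)\smlw_i$ to $v$, which is not automatically small; assumption (A14), $|\nu\tau_i(\tau_i-1)+b_i^2| \leq 2\tau_i(1-\tau_i)$, is precisely what lets it be absorbed into the $q\|\smlw_i-\smla_i\|_2$ part of Lemma~\ref{lemma:v_relu}. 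This is the one place the ReLU theorem genuinely differs from the hard-thresholding version, and your sketch misses it entirely.

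Third, you misread the probabilistic setup: the paper takes $M_j\to\infty$ so that $\frac{1}{M_j}\y^j \to \boldmu = \sigma(\A\x^*)$ exactly, so there is no $O(1/\sqrt{M})$ noise term and $\xi$ has nothing to do with $M$. Rather, $\xi$ enters through (A2), the code-amplitude bound $C_x = O(p^{-1/3-\xi})$; substituting this into the error floor $O\big(C_x^6\, p\max(s^2/\tau_i,\, C_x^2 s^3/\tau_i)\big)$ is what produces $p^{1+6\xi}$ and $p^{2/3+2\xi}$ in the statement. Relatedly, the boundedness of $\A\x^*$ that licenses the Taylor expansion comes from (A2) together with $\|\A_S\|_2=O(1)$, not from $K\ll N$ --- the theorem concerns a dense, non-convolutional dictionary. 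You also drop the $-\frac{1}{2}=\sigma(\mathbf{0})$ centering when simplifying the encoder output (the paper's encoder is $\cvec_1 = 4\W^{\text{T}}(\y-\frac{1}{2})$, with step size $4$ because $\sigma'\leq\frac{1}{4}$), which is needed for the leading-order cancellation. None of these is fatal to the plan, but each requires a specific assumption or lemma you have not supplied.
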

\noindent Theorem~\ref{theo:descentrelu} shows recovery of $\A$ for the cases when $p$ grows faster than $s$ and the amplitude of the codes are bounded on the order of $O(\frac{1}{p^{\frac{1}{3}+\xi}})$. We refer the reader to the {\bf Appendix} for the implications of this theorem and its interpretation.
%%%%%%%
\subsection{DCEA as a supervised framework}\label{section:supervised}
For the \textit{supervised} paradigm, given the desired output (i.e., clean image, $\y_{\text{clean}}^j$, in the case of image denoising), we relax the DCEA architecture and untie the weights~\cite{SreterHillel2018LCSC, rethinking, TolooshamsBahareh2019deepresidualAE} as follows
\begin{equation}\label{eq:supervised_proximal}
\x_t^j =\prox_{\bvec}\big(\x_{t-1}^j + \alpha (\mathbf{W}^e)^{\text{T}}(\y-f^{-1}\big(\mathbf{W}^d\x_{t-1}^j\big))\big),
\end{equation}
where we still use $\Bigh$ as the decoder. We use $\{\mathbf{w}_c^e\}_{c=1}^C$ and $\{\mathbf{w}_c^d\}_{c=1}^C$ to denote the filters associated with $\mathbf{W}^{e}$ and $\mathbf{W}^d$, respectively. We train the bias vector $\bvec$, unlike in the unsupervised setting where we tune it by grid search~\cite{TolooshamsBahareh2019deepresidualAE, tasissa2020dense}. Compared to DCEA for ECDL, the number of parameters to learn has increased three-fold.

Although the introduction of additional parameters implies the framework is no longer exactly optimizing the parameters of the convolutional generative model, DCEA still maintains the core principles of the convolutional generative model. First, DCEA performs CSC, as $\mathbf{W}^e,\mathbf{W}^d,\text{ and }\Bigh$ are convolutional matrices and $\prox_{\bvec}$ ensures sparsity of $\x_T^j$. Second, the encoder uses $f^{-1}(\cdot)$, as specified by natural exponential family distributions.
Therefore, we allow only a moderate departure from the generative model to balance the problem formulation and the problem-solving mechanism. Indeed, as we show in the Poisson image denoising of Section~\ref{sec:exp}, the denoising performance for DCEA with untied weights is superior to that of DCEA with tied weights.

Indeed, the constraints can be relaxed further. For instance, 1) the proximal operator $\prox_{\bvec}$ can be replaced by a deep NN~\cite{Mardani2018}, 2) the inverse link function $f^{-1}(\cdot)$ can be replaced by a NN~\cite{Gao2016}, 3) $\textbf{W}^d$, $\textbf{W}^e$, and $\Bigh$ can be untied across different iterations~\cite{HersheyJohnR2014DUMI}, and 4) the linear 1-layer decoder can be replaced with a deep nonlinear decoder. These would increase the number of trainable parameters, allowing for more expressivity and improved performance. Nevertheless, as our goal is to maintain the connection to \textit{sparsity} and the \textit{natural exponential family}, while keeping the number of parameters small, we do not explore these possibilities in this work. 

\section{Experiments}\label{sec:exp}
We apply our framework in three different settings.\\\\
\textbf{- Poisson image denoising} (\textit{supervised}) We evaluate the performance of supervised DCEA in Poisson image denoising and compare it to state-of-the-art algorithms.\\
\textbf{- ECDL for simulation} (\textit{unsupervised}) We use simulations to examine how the unsupervised DCEA performs ECDL for \textit{binomial} data. With access to ground-truth data, we evaluate the accuracy of the learned dictionary. Additionally, we conduct ablation studies in which we relax the constraints on the DCEA architecture and assess how accuracy changes.\\
\textbf{- ECDL for neural spiking data} (\textit{unsupervised}) Using neural spiking data collected from mice~\cite{Temereanca2008}, we perform unsupervised ECDL using DCEA. As is common in the analysis of neural data~\cite{Truccolo2005}, we assume a \textit{binomial} generative model.

\begin{table*}[htb]
	\renewcommand{\arraystretch}{1.3}
	\centering
	\caption{PSNR performance (in dB) of Poisson image denoising for five different models on test images for peak 1, 2, and 4: 1) SPDA, 2) BM3D+VST, 3) Class-agnostic, 4) DCEA constrained (DCEA-C), and 5) DCEA unconstrained (DCEA-UC).}
	\setlength\tabcolsep{4pt}
	\begin{tabular}{|c|c|c|c|c|c|c|c|c|c|c|c|}
		\hline
		 && Man & Couple & Boat & Bridge & Camera & House & Peppers & Set12 & BSD68 & \# of Params\\
		 \hline
		 \multirow{5}{*}{Peak 1}
		 &SPDA &.&.&21.42 &19.20&20.23&22.73&19.99& 20.39 & $\cdot$ &160,000\\
		 &BM3D+VST &21.62&21.14& 21.47 &19.22&20.37&22.35&19.89& $\cdot$ & 21.01 & N/A\\
		 &Class-agnostic &\textbf{22.49}&22.11& \textbf{22.38} & 19.83 & \textbf{21.59} & 22.87 & \textbf{21.43} & \textbf{21.51} & 21.78 &655,544\\
		 &DCEA-C (ours) &22.03&21.76&21.80&19.72&20.68&21.70&20.22&20.72&21.27&20,618\\
		 &DCEA-UC (ours) &22.44&\textbf{22.16}&22.34&\textbf{19.87}&21.47&\textbf{23.00}&20.91&21.37&\textbf{21.84}&61,516\\
		\hline
		\multirow{5}{*}{Peak 2}
		&SPDA &.&.&21.73&20.15&21.54&\textbf{25.09}&21.23&21.70&$\cdot$&160,000\\
		 &BM3D+VST &23.11&22.65& 22.90&20.31&22.13&24.18&21.97&$\cdot$& 22.21 & N/A\\
		 &Class-agnostic &\textbf{23.64}&\textbf{23.30}& \textbf{23.66} & 20.80 & \textbf{23.25} & 24.77& \textbf{23.19} &\textbf{22.97}&22.90&655,544\\
		 &DCEA-C (ours) &23.10&2.79&22.90&20.60&22.01&23.22&21.70&22.02&22.31&20,618\\
		 &DCEA-UC (ours) &23.57&\textbf{23.30}&23.51&\textbf{20.82}&22.94&24.52&22.94&22.79&\textbf{22.92}&61,516\\
		 \hline
		 \multirow{5}{*}{Peak 4}
		 &SPDA &.&.&22.46&20.55&21.90&26.09&22.09&22.56&$\cdot$&160,000\\
		 &BM3D+VST &24.32&24.10&24.16&21.50&23.94&26.04&24.07& $\cdot$ & 23.54 & N/A\\
		 &Class-agnostic &24.77&24.60& 24.86 & 21.81 & \textbf{24.87} & \textbf{26.59} & \textbf{24.83}& \textbf{24.40} & 23.98 &655,544\\
		 &DCEA-C (ours) &24.26&24.08&24.25&21.59&23.60&25.11&23.68&23.51&23.54&20,618\\
		 &DCEA-UC (ours) &\textbf{24.82}&\textbf{24.69}&\textbf{24.89}&\textbf{21.83}&24.66&26.47&24.71&24.37&\textbf{24.10}&61,516\\
		 \hline
	\end{tabular}
	\label{tab:psnrbest}
\end{table*}

\begin{figure*}[!htb]
	\begin{minipage}[b]{1.0\linewidth}
		\centering
		\tikzstyle{input} = [coordinate]
		\tikzstyle{output} = [coordinate]
		\tikzstyle{pinstyle} = [pin edge={to-,thin,black}]
		\begin{tikzpicture}[auto, node distance=2cm,>=latex']
		cloud/.style={
			draw=red,
			thick,
			ellipse,
			fill=none,
			minimum height=1em}

		\node [input, name=input] {};
		
		\node [rectangle, fill=none, node distance=0.001cm, right of=input] (A) {$\includegraphics[width=0.24\linewidth]{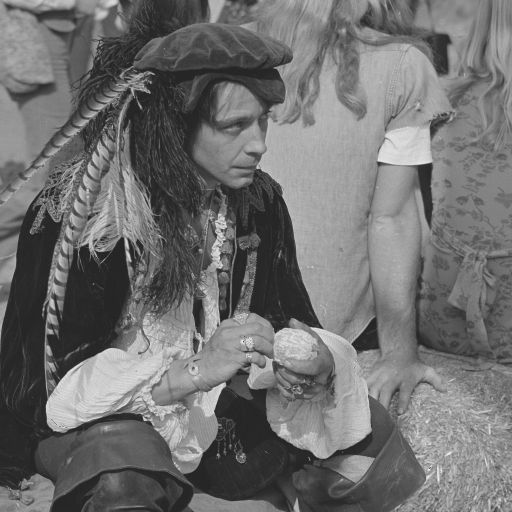}$};
		\node [rectangle, fill=none, node distance=4.15cm, right of=A] (B) {$\includegraphics[width=0.24\linewidth]{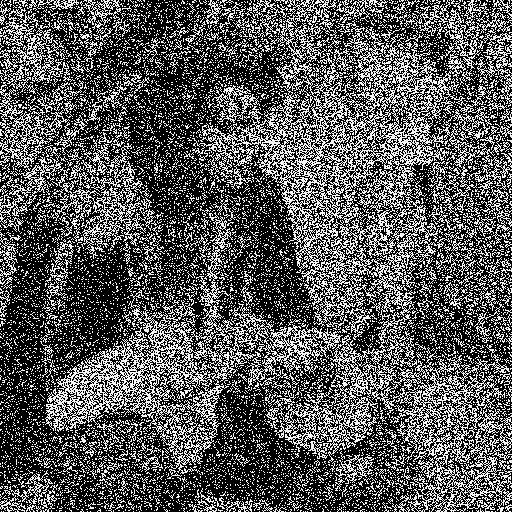}$};
		\node [rectangle, fill=none, node distance=4.15cm, right of=B] (C) {$\includegraphics[width=0.24\linewidth]{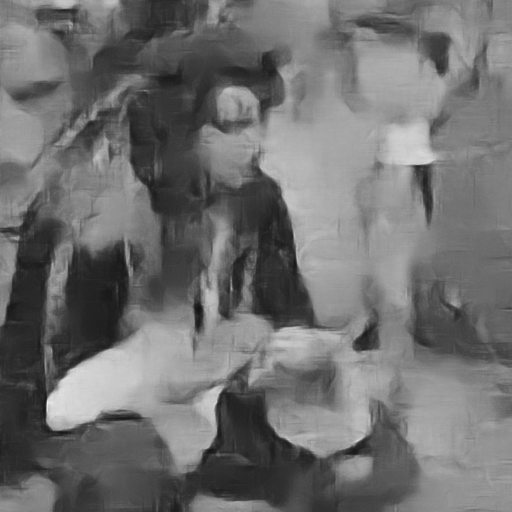}$};
		\node [rectangle, fill=none, node distance=4.15cm, right of=C] (D) {$\includegraphics[width=0.24\linewidth]{./figures/man_untied_1}$};
				
		\node [rectangle, fill=none,  node distance=2.25cm,  above of=A] (text) {(a) Original};
		\node [rectangle, fill=none,  node distance=2.25cm,  above of=B] (text) {(b) Noisy peak$=1$};
		\node [rectangle, fill=none,  node distance=2.25cm,  above of=C] (text) {(c) DCEA-C};
		\node [rectangle, fill=none,  node distance=2.25cm,  above of=D] (text) {(d) DCEA-UC};     
		
		\node [rectangle, fill=none, node distance=4.15cm, below of=A] (A) {$\includegraphics[width=0.24\linewidth]{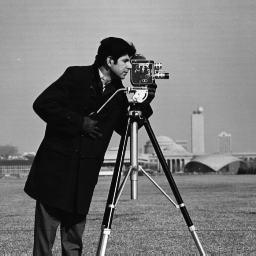}$};
		\node [rectangle, fill=none, node distance=4.15cm, right of=A] (B) {$\includegraphics[width=0.24\linewidth]{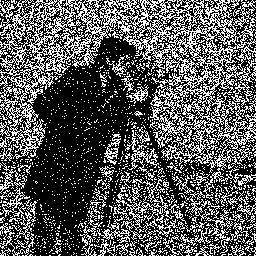}$};
		\node [rectangle, fill=none, node distance=4.15cm, right of=B] (C) {$\includegraphics[width=0.24\linewidth]{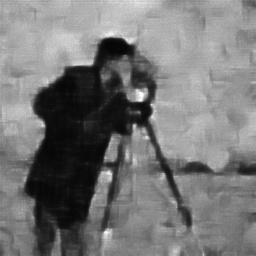}$};
		\node [rectangle, fill=none, node distance=4.15cm, right of=C] (D) {$\includegraphics[width=0.24\linewidth]{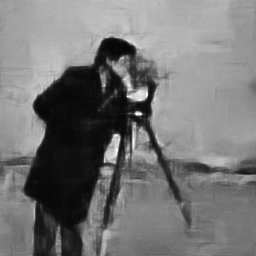}$};
						
		\end{tikzpicture}
	\end{minipage}
	\caption{Denoising performance on test images with peak$=1$. (a) Original, (b) noisy, (c) DCEA-C, and (d) DCEA-UC.}
	\label{fig:vis}
\end{figure*}

\subsection{Denoising Poisson images}
We evaluated the performance of DCEA on Poisson image denoising for various peaks. We used the peak signal-to-noise-ratio (PSNR) as a metric. DCEA is trained in a \textit{supervised} manner on the PASCAL VOC image set~\cite{pascal-voc-2012} containing $J = 5{,}700$ training images. $\prox_{\bvec}$ is set to $\text{ReLU}_{\bvec}$. We used two test datasets: 1) Set12 (12 images) and 2) BSD68 (68 images)~\cite{MartinFTM01}.

\paragraph{Methods} We trained two versions of DCEA to assess whether relaxing the generative model, thus increasing the number of parameters, helps improve the performance: 1) DCEA constrained (DCEA-C), which uses $\Bigh$ as the convolutional filters and 2) DCEA unconstrained (DCEA-UC), which uses $\Bigh$, $\mathbf{W}^e$, and $\mathbf{W}^d$, as suggested in Eq.~(\ref{eq:supervised_proximal}). We used $C=169$ filters of size $11\times 11$, where we used convolutions with strides of 7 and followed a similar approach to~\cite{rethinking} to account for all shifts of the image when reconstructing. In terms of the number of parameters, DCEA-C has $20{,}618 \,(= 169\times 11 \times 11 + 169)$ and DCEA-UC has $61{,}516\,(= 3\times 169\times 11 \times 11 + 169)$, where the last terms refer to the bias $\bvec$. We set $\alpha = 1$.

We unfolded the encoder for $T=15$ iterations. We initialized the filters using draws from a standard Gaussian distribution scaled by $\sqrt{1/L}$, where we approximate $L$ using the iterative power method. 
%$\mathcal{N}(0, 1/K^2 L)$ where $K$ is the kernel size, and $L$ is approximated through iterative power method as $\ell_2$ norm of $\mathbf{W}^e(\y-f^{-1}\big(\mathbf{W}^d\x \big))$ given a random Gaussian vector $\x$.
We used the ADAM optimizer with an initial learning rate of $10^{-3}$, which we decrease by a factor of 0.8 every 25 epochs, and trained the network for 400 epochs. At every iteration, we crop a random $128 \times 128$ patch, $\y_{\text{clean}}^j$, from a training image and normalize it to $\boldmu_{j,\text{clean}} = \y_{\text{clean}}^j/Q^j$, where $Q^j = \max(\y_{\text{clean}}^j)/ \text{peak}$, such that the maximum value of $\boldmu_{j,\text{clean}}$ equals the desired peak. Then, we generate a count-valued Poisson image with rate $\boldmu_{j,\text{clean}}$, i.e., $\y^j\sim\operatorname{Poisson}(\boldmu_{j,\text{clean}})$. We minimized the mean squared error between the clean image, $\y_{\text{clean}}^j$, and its reconstruction, $Q^j \widehat{\boldmu}_j = Q^j \exp(\Bigh \x_T^j)$.

\noindent We compared DCEA against the following baselines. For a fair comparison, we do not use the binning strategy~\cite{Salmon2014} of these methods, as a pre-processing step.\\\\
\textbf{- Sparse Poisson dictionary algorithm (SPDA)} This is a patch-based dictionary learning framework~\cite{Giryes2014}, using the Poisson generative model with the $\ell_0$ pseudo-norm to learn the dictionary in an \textit{unsupervised} manner, for a given noisy image. SPDA uses 400 filters of length 400, which results in $160{,}000$ parameters.\\  
\textbf{- BM3D + VST} BM3D is an image denoising algorithm based on a sparse representation in a transform domain, originally designed for Gaussian noise. This algorithm applies a variance-stabilizing transform (VST) to the Poisson images to make them closer to Gaussian-perturbed images~\cite{vst}.\\
\textbf{- Class-agnostic denoising network (CA)} This is a denoising residual NN for both Gaussian and Poisson images~\cite{Remez2018}, trained in a \textit{supervised} manner.

\begin{figure*}[!htb]
	\begin{minipage}[b]{1.0\linewidth}
		\centering
		\includegraphics[width=\linewidth]{./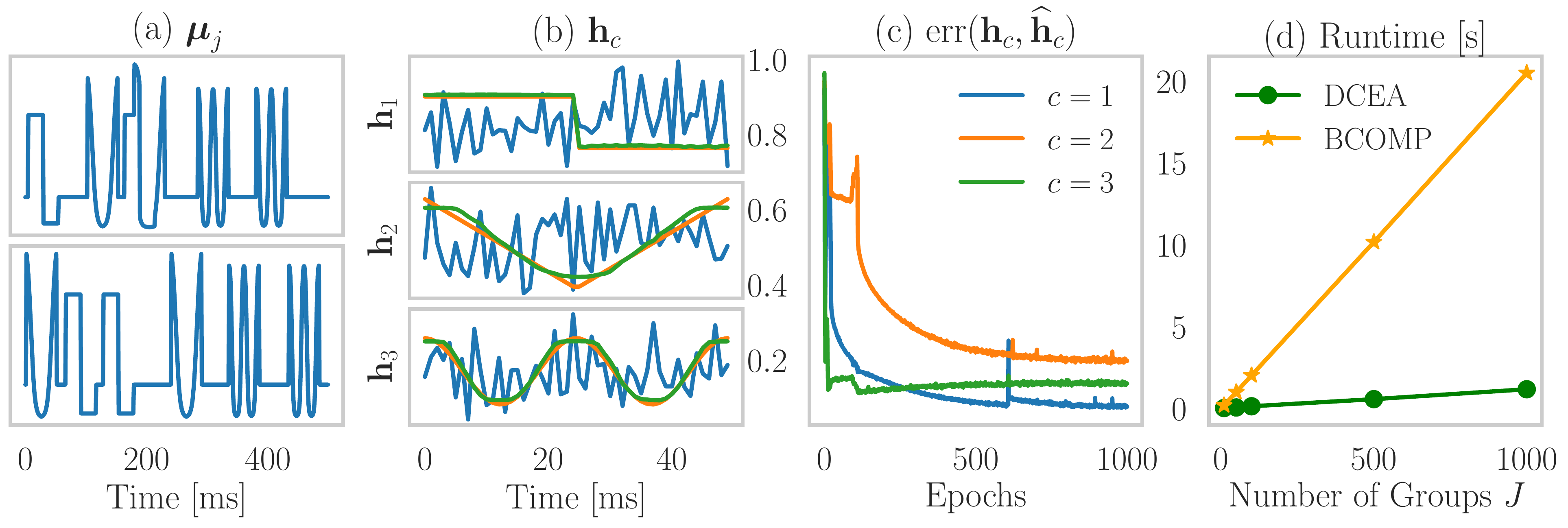}
	\end{minipage}
	\caption{Simulated results with DCEA. (a) Example rate functions, $\boldmu_j$, for two different groups. (b) Initial (blue), true (orange), and learned (green) filters for binomial data. (c) $\text{err}(\smlh_c,\widehat{\smlh}_c)$ over $1,000$ epochs. (d) Total runtime for inference for DCEA and BCOMP.}
	\label{fig:simulation_result}
\end{figure*}

\paragraph{Results} Table~\ref{tab:psnrbest} shows that DCEA outperforms SPDA and BM3D + VST, and shows competitive performance against CA, with an order of magnitude fewer parameters. Fig.~\ref{fig:vis} shows the denoising performance of DCEA-C and DCEA-UC on two test images from Set12 (see \textbf{Appendix} for more examples). We summarize a few additional points from this experiment.

\textbf{- SPDA vs. DCEA-UC} DCEA-UC is significantly more computationally efficient compared to SPDA. SPDA takes several minutes, or hours in some cases, to denoise a single Poisson noisy image whereas, upon training, DCEA performs denoising in less than a second.

\textbf{- CA vs. DCEA-UC} DCEA-UC achieves competitive performance against CA, despite an order of magnitude difference in the number of parameters ($650$K for CA vs. $61$K for DCEA). We conjecture that given the same number of parameters, DCEA-UC would outperform CA. For example, we found that replacing the linear decoder with a nonlinear two-layer decoder ($\approx120$K number of parameters) in DCEA-UC resulted in an increase in PSNR of  $\sim$$0.2$ dB.

%(i.e., 169 to 4 feature maps, ReLU with bias, 4 feature maps to 1 output)

\textbf{- DCEA-C vs. DCEA-UC} We also observe that DCEA-UC achieves better performance than DCEA-C. As discussed in Section~\ref{section:supervised}, the relaxation of the generative model, which allows for a three-fold increase in the number of parameters, helps improve the performance.

\subsection{Application to simulated neural spiking data}
\subsubsection{Accuracy of ECDL for DCEA}~\label{section:simulation}
We simulated time-series of neural spiking activity from $J = 1{,}000$ neurons according to the binomial generative model. We used $C=3$ templates of length $K = 50$ and, for each example $j$, generated $f(\boldmu_j)=\Bigh \x^j\in\R^{500}$, where each filter $\{\smlh_c \}_{c=1}^3$ appears five times uniformly random in time. Fig.~\ref{fig:simulation_result}(a) shows an example of two different means, $\boldmu_{j_1}$, $\boldmu_{j_2}$ for $j_1\neq j_2$. Given $\boldmu_j$, we simulated two sets of binary time-series, each with $M_j=25$, $\y^{j} \in \{0,1,\ldots,25\}^{500}$, one of which is used for training and the other for validation.

\noindent \textbf{Methods} For DCEA, we initialized the filters using draws from a standard Gaussian, tuned the regularization parameter $\lambda$ (equivalently $b$ for $\prox_b$) manually, and trained using the unsupervised loss. We place non-negativity constraints on $\x^j$ and thus use $\prox_{b} = \text{ReLU}_{b}$. For baseline, we developed and implemented a method which we refer to as binomial convolutional orthogonal matching pursuit (BCOMP). At present, there does not exist an optimization-based framework for ECDL. Existing dictionary learning methods for non-Gaussian data are patch-based~\cite{Lee09, Giryes2014}. BCOMP combines efficient convolutional greedy pursuit~\cite{Mailhe2011} and binomial greedy pursuit~\cite{Lozano2011}. BCOMP solves Eq.~(\ref{eq:gopt}), but uses $\lVert \x^j \rVert_0$ instead of $\lVert \x^j \rVert_1$. For more details, we refer the reader to the \textbf{Appendix}.

\noindent \textbf{Results} 
Fig.~\ref{fig:simulation_result}(b) demonstrates that DCEA (green) is able to learn $\{\smlh_c\}_{c=1}^3$ accurately. Letting $\{\widehat{\smlh}_c\}_{c=1}^3$ denote the estimates, we quantify the error between a filter and its estimate using the standard measure~\cite{Agarwal2016LearningSU}, $\text{err}(\smlh_c,\widehat{\smlh}_c) = \sqrt{1 - \ip{\smlh_c}{\widehat{\smlh}_c}^2}$, for $\|\smlh_c\|=\|\widehat{\smlh}_c\|=1$. Fig.~\ref{fig:simulation_result}(c) shows the error between the true and learned filters by DCEA, as a function of epochs (we consider all possible permutations and show the one with the lowest error). The fact that the learned and the true filters match demonstrates that DCEA is indeed performing ECDL. Finally, Fig.~\ref{fig:simulation_result}(d) shows the runtime for both DCEA (on GPU) and BCOMP (on CPU) on CSC task, as a function of number of groups $J$, where DCEA is much faster. This shows that DCEA, due to 1) its simple implementation as an unrolled NN and 2) the ease with which the framework can be deployed to GPU, is an efficient/scalable alternative to optimization-based BCOMP.

\subsubsection{Generative model relaxation for ECDL}
Here, we examine whether DCEA with untied weights, which implies a departure from the original convolutional generative model, can still perform ECDL accurately. To this end, we repeat the experiment from Section~\ref{section:simulation} with DCEA-UC, whose parameters are $\Bigh$, $\mathbf{W}^e$, and $\mathbf{W}^d$. Fig.~\ref{fig:mismatch} shows the learned filters, $\widehat{\mathbf{w}}_c^e$, $\widehat{\mathbf{w}}_c^d$, and $\widehat{\mathbf{h}}_c$ for $c=1\text{ and }2$, along with the true filters. For visual clarity, we only show the learned filters for which the distance to the true filters are the closest, among $\widehat{\mathbf{w}}_c^e$, $\widehat{\mathbf{w}}_c^d$, and $\widehat{\mathbf{h}}_c$. We observe that none of them match the true filters. In fact, the error between the learned and the true filters are bigger than the initial error.

This is in sharp contrast to the results of DCEA-C (Fig.~\ref{fig:simulation_result}(b)), where $\Bigh=\mathbf{W}^e=\mathbf{W}^d$. This shows that, to accurately perform ECDL, the NN architecture needs to be strictly constrained such that it optimizes the objective formulated from the convolutional generative model.  

\begin{figure}[htb]
	\centering
	\includegraphics[width=\linewidth]{./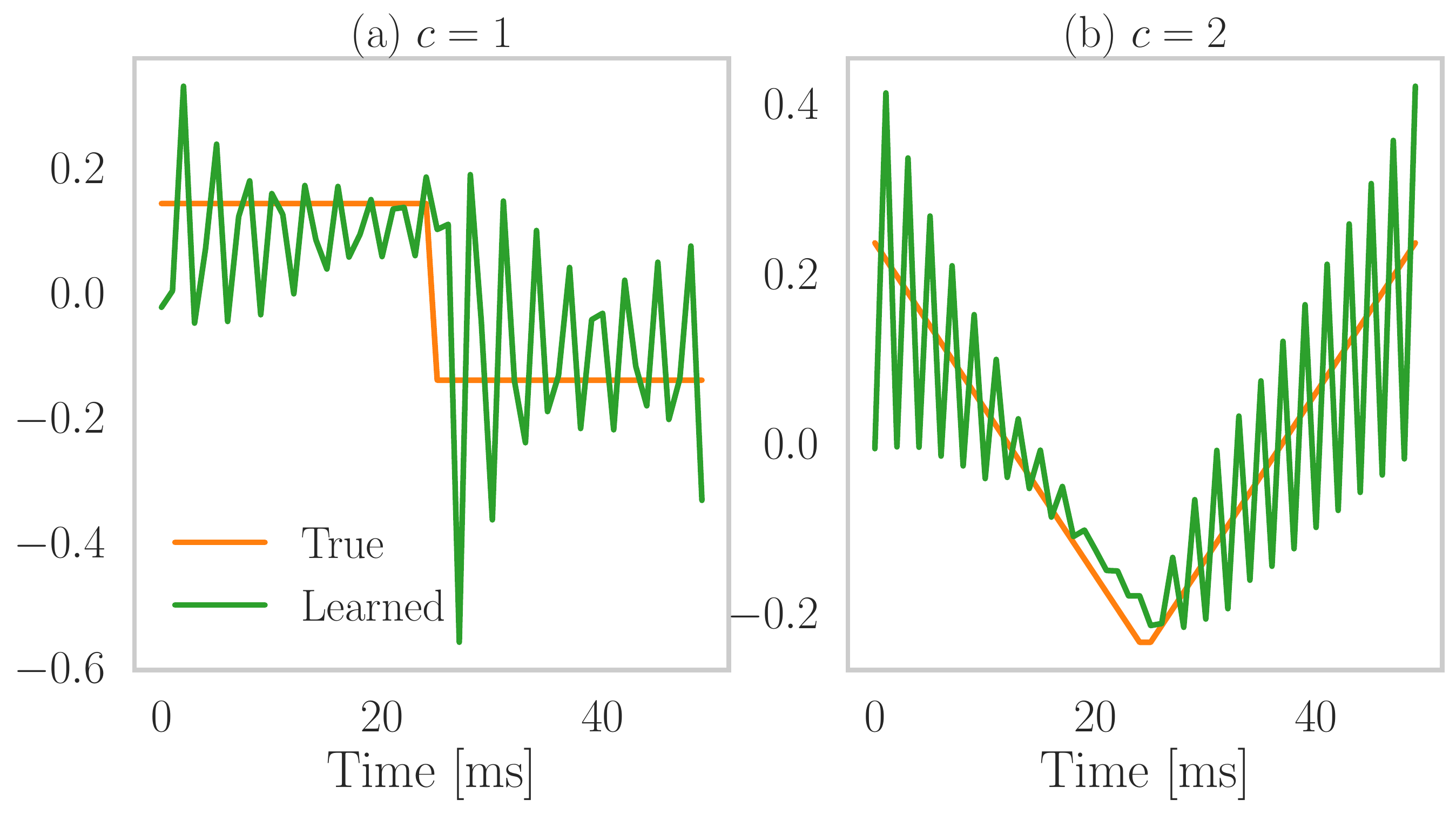}
	\caption{The learned (green) and true (orange) filters for DCEA-UC, when the weights are untied.}
	\label{fig:mismatch}
\end{figure}

\begin{figure}[!ht]
	\centering
	\includegraphics[width=\linewidth]{./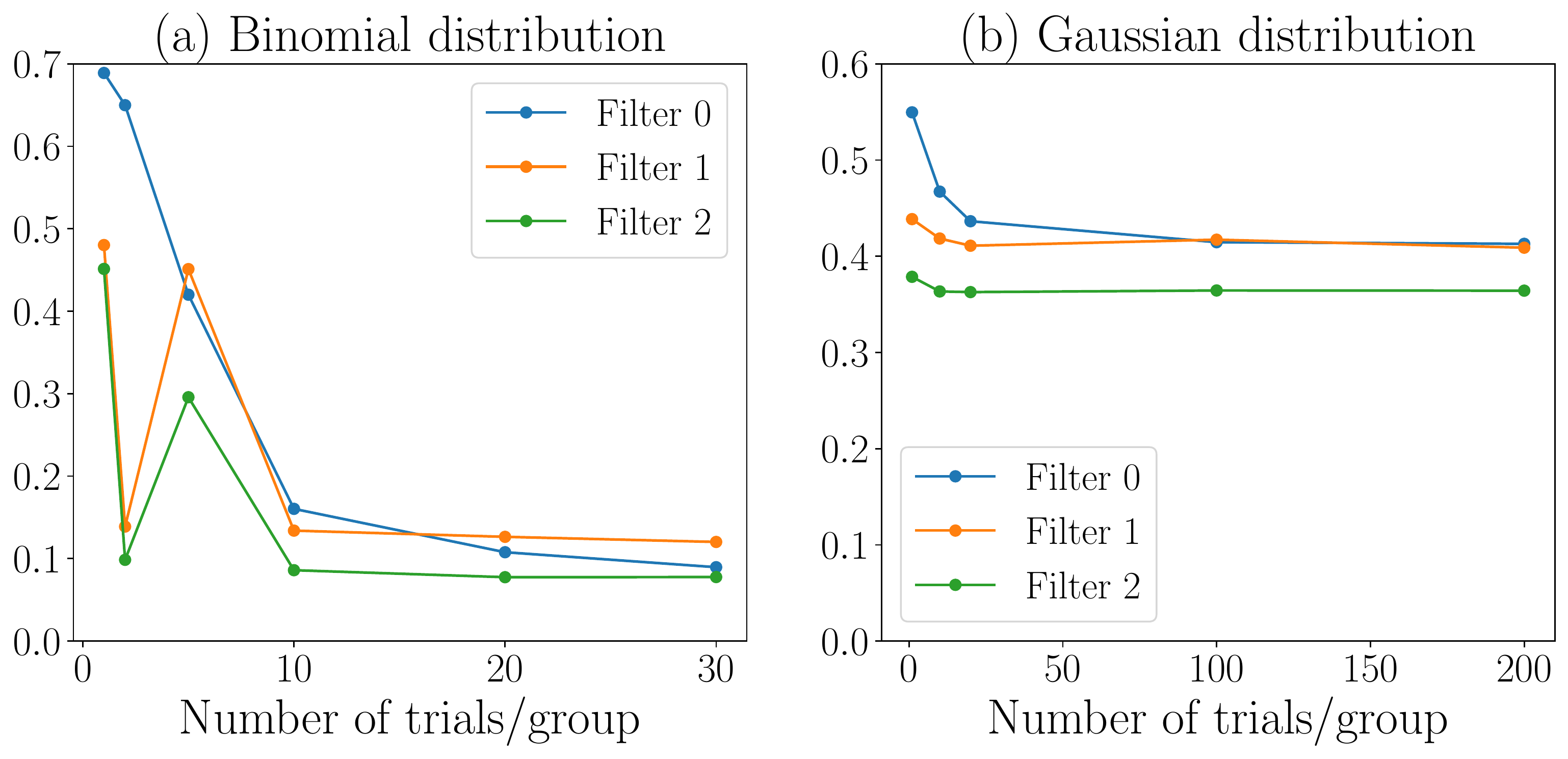}
	\caption{Dictionary error, $\text{err}(\smlh_c,\widehat{\smlh}_c)$, as a function of number of trials $M_j$ per group for the (a) Binomial and (b) Gaussian models. Each point represents the median of 20 independent trials. }
	\label{fig:binomial_vs_gaussian}
\end{figure}

\subsubsection{Effect of model mis-specification on ECDL}
Here, we examine how model mis-specification in DCEA, equivalent to mis-specifying 1) the loss function (negative log-likelihood) and 2) the nonlinearity $f^{-1}(\cdot)$, affects the accuracy of ECDL. We trained two models: 1) DCEA with sigmoid link and binomial likelihood (DCEA-b), the correct model for this experiment, and 2) DCEA with linear link and Gaussian likelihood (DCEA-g). Fig.~\ref{fig:binomial_vs_gaussian} shows how the error $\text{err}(\smlh_c,\widehat{\smlh}_c)$, at convergence, changes as a function of the number of observations $M_j$.

We found that DCEA-b successfully recovers dictionaries for large $M_j$ (>15). Not surprisingly, as $M_j$, i.e. SNR, decreases, the error increases. DCEA-g with 200 observations achieves an error close to 0.4, which is significantly worse than the 0.09 error of DCEA-b with $M_j=30$. These results highlight the importance, for successful dictionary learning, of specifying an appropriate model. The framework we propose, DCEA, provides a flexible inference engine that can accommodate a variety of data-generating models in a seamless manner.

\begin{figure*}[!htb]
	\begin{minipage}[b]{1.0\linewidth}
		\centering
		\includegraphics[width=\linewidth]{./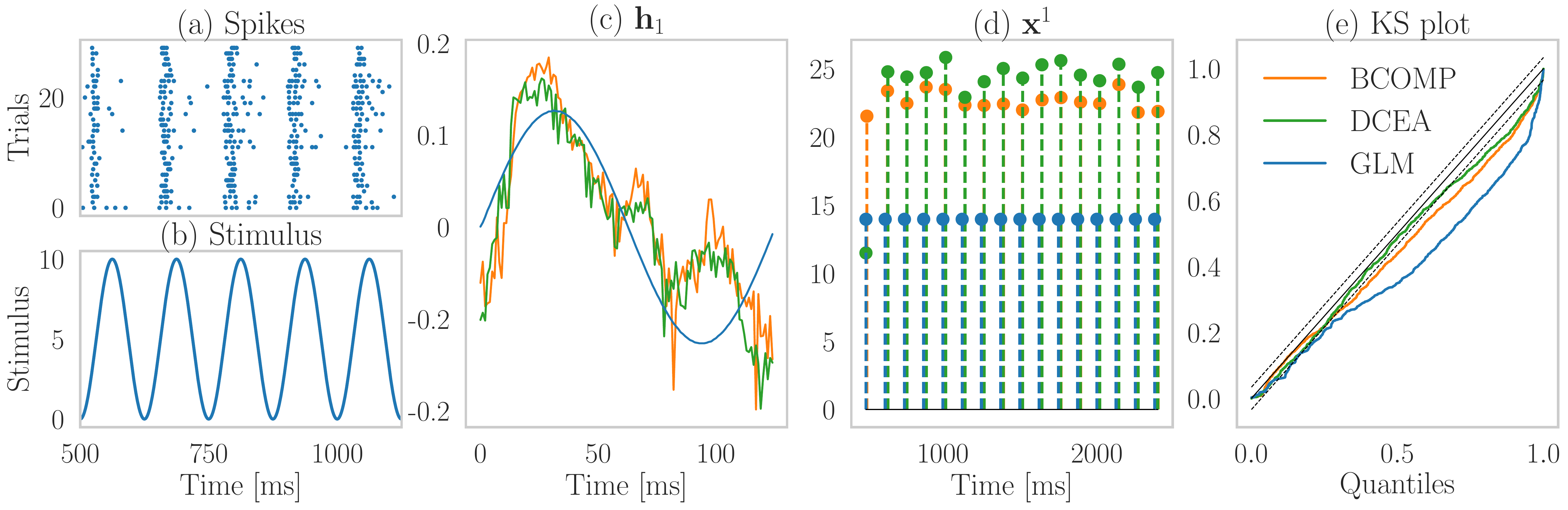}
	\end{minipage}
	\caption{A segment of data from a neuron and result of applying DCEA and BCOMP. (a) A dot indicates a spike from the neuron. (b) Stimulus used to move the whisker. (c) Whisker velocity covariate (blue) used in GLM analysis, along with whisker velocities estimated with BCOMP (orange) and DCEA (green) using all 10 neurons in the dataset. The units are $\frac{\text{mm}}{10}$ per ms. (d) The estimated sparse codes (onset of whisker deflection). (e) Analysis of Goodness-of-fit  using KS plots. The dotted lines represent 95\% confidence intervals. }
	\label{fig:whisker}
\end{figure*}

\subsection{Neural spiking data from somatosensory thalamus}
We now apply DCEA to neural spiking data from somatosensory thalamus of rats recorded in response to periodic whisker deflections~\cite{Temereanca2008}. The objective is to learn the features of whisker motion that modulate neural spiking strongly. In the experiment, a piezoelectric simulator controls whisker position using an \emph{ideal} position waveform. 
%
%Previous GLM-based analysis of the experiment reported that the neurons are strongly modulated by whisker velocity~\cite{whisker2014}. The GLM approach, however, can be restrictive as the covariates need to be hand-crafted.  %Ideal whisker position, however, does not correspond to actual whisker position. In such experiments, GLM analyses use the \emph{ideal} stimulus as covariates, which does not reflect noise in the experimental conditions. 
As the interpretability of the learned filters is important, we constrain the weights of encoder and decoder to be $\Bigh$. DECA lets us learn, in an \textit{unsupervised} fashion, the features that best explains the data.

The dataset consists of neural spiking activity from $J=10$ neurons in response to periodic whisker deflections. Each example $j$ consists of $M_j=50$ trials lasting $3{,}000$ ms, i.e., $\y^{j,m}\in\mathbb{R}^{3000}$. Fig.~\ref{fig:whisker}(a) depicts a segment of data from a neuron. Each trial begins/ends with a baseline period of $500$ ms. During the middle $2{,}000$ ms, a periodic deflection with period $125$ ms is applied to a whisker by the piezoelectric stimulator. There are $16$ total deflections, five of which are shown in Fig.~\ref{fig:whisker}(b). The stimulus represents ideal whisker position. The blue curve in Fig.~\ref{fig:whisker}(c) depicts the whisker velocity obtained as the first derivative of the stimulus.

\paragraph{Methods}
We compare DCEA to $\ell_0$-based ECDL using BCOMP (introduced in the previous section), and a generalized linear model (GLM)~\cite{glm} with whisker-velocity covariate~\cite{whisker2014}. For all three methods, we let $C=1$ and $\smlh_1\in \mathbb{R}^{125}$, initialized using the whisker velocity (Fig.~\ref{fig:whisker}(c), blue). We set $\lambda=0.119$ for DCEA and set the sparsity level of BCOMP to $16$. As in the simulation, we used $\prox_{b}=\text{ReLU}_{b}$ to ensure non-negativity of the codes. We used $30$ trials from each neuron to learn $\smlh_1$ and the remaining $20$ trials as a test set to assess goodness-of-fit. We describe additional parameters used for DCEA and the post-processing steps in the {\bf Appendix}.

\paragraph{Results}
The orange and green curves from~Fig.~\ref{fig:whisker}(c) depict the estimates of whisker velocity computed from the neural spiking data using BCOMP and DCEA, respectively. The figure indicates that the spiking activity of this population of 10 neurons encodes well the whisker velocity, and is most strongly modulated by the maximum velocity of whisker movement.

Fig.~\ref{fig:whisker}(d) depicts the $16$ sparse codes that accurately capture the onset of stimulus in each of the $16$ deflection periods. The heterogeneity of amplitudes estimated by DCEA and BCOMP is indicative of the variability of the neural response to whisker deflections repeated 16 times, possibly capturing other characteristics of cellular and circuit response dynamics (e.g., adaptation). 
This is in sharp contrast to the GLM--detailed in the {\bf Appendix}--which uses the ideal whisker velocity (Fig.~\ref{fig:whisker}(c), blue) as a covariate, and assumes that neural response to whisker deflections is constant across deflections.

In Fig.~\ref{fig:whisker}(e), we use the Kolmogorov-Smirnov (KS) test to compare how well DCEA, BCOMP, and the GLM fit the data for a representative neuron in the dataset~\cite{Brown2002}. KS plots are a visualization of the KS test for assessing the Goodness-of-fit of models to point-process data, such as neural spiking data (see {\bf Appendix} for details). The figure shows that DCEA and BCOMP are a much better fit to the data than the GLM. 

We emphasize that 1) the similarity of the learned $\mathbf{h}_1$ and 2) the similar goodness-of-fit of DCEA and BCOMP to the data shows that DCEA performs ECDL. In addition, this analysis shows the power of the ECDL as an \textit{unsupervised} and \textit{data-driven} approach for data analysis, and a superior alternative to GLMs, where the features are hand-crafted.
\section{Conclusion}
We introduced a class of neural networks based on a generative model for convolutional dictionary learning (CDL) using data from the natural exponential-family, such as count-valued and binary data. The proposed class of networks, which we termed deep convolutional exponential auto-encoder (DCEA), is competitive compared to state-of-the-art supervised Poisson image denoising algorithms, with an order of magnitude fewer trainable parameters.

We analyzed gradient dynamics of shallow exponential-family auto-encoder (i.e., unfold the encoder once) for binomial distribution and proved that when trained with approximate gradient descent, the network recovers the dictionary corresponding to the binomial generative model.

We also showed using binomial data simulated according to the convolutional exponential-family generative model that DCEA performs dictionary learning, in an unsupervised fashion, when the parameters of the encoder/decoder are constrained. The application of DCEA to neural spike data suggests that DCEA is superior to GLM analysis, which relies on hand-crafted covariates.

\section*{Acknowledgements}

The authors gratefully acknowledge supports by NSF-Simons Center for Mathematical and Statistical Analysis of Biology at Harvard University (supported by NSF grant no. DMS-1764269), the Harvard FAS Quantitative Biology Initiative, and the Samsung scholarship. This research is also supported by AWS Machine Learning Research Awards. The authors also thank the reviewers for their insightful comments.

\newpage
\bibliography{icml2020_arxiv_final}
\bibliographystyle{icml2020}

\newpage
\onecolumn
%%%%%%%%%%%%%%%%%%%%%%%%%%%%%%%%%%%
\begin{center}
	\hrule height 1pt
	\vspace{14pt}
	\textbf{\large Appendix for convolutional dictionary learning based auto-encoders\\
	for natural exponential-family distributions}
	\vspace{12pt}
	\hrule height 1pt
	\vspace{12pt}
\end{center}
%\title{Appendix for convolutional dictionary learning based auto-encoders for
%natural exponential-family distributions}
\date{}
%\maketitle
\thispagestyle{empty}
\vspace{-2mm}
\section{Gradient dynamics of shallow exponential auto-encoder (SEA)}
\begin{theorem} \label{theo:descentinformal}
(informal). Given a ``good'' initial estimate of the dictionary from the binomial dictionary learning problem, and infinitely many examples, the binomial SEA, when trained by gradient descent through backpropagation, learns the dictionary. 
\end{theorem}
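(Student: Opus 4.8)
The plan is to adapt the infinite-sample gradient analysis of \cite{Nguyen2019} for the Gaussian shallow auto-encoder to the binomial likelihood and its inverse link. Since $J \to \infty$, one works with the population gradient $g_i = \mathbb{E}_{\y, \x^*}\big[ \nabla_{\smlw_i} l(\x_1(\y)) \big]$, where $\x_1$ is the single-unfolding encoder output, $l$ the binomial negative log-likelihood evaluated at the decoder $\W \x_1$, and ``approximate'' means the hard-threshold indicator inside $\prox_{\bvec} = \text{ReLU}_{\bvec}$ is held fixed when differentiating, as in the reference. The goal is to show $\mathbb{E}[g_i]$ has a dominant component along $\smlw_i - \smla_i$, so that the normalized update $\smlw_i^{(l+1)} = \text{normalize}(\smlw_i^{(l)} - \kappa g_i)$ contracts $\| \smlw_i - \smla_i \|_2$ down to a small error floor.

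I would proceed in four steps. \textbf{(i) Reduce the encoder to a thresholded correlation.} With $\x_0 = \mathbf{0}$ the working observation is $\widetilde{\y}_1 = \y - M\,\text{sigmoid}(\mathbf{0}) = \y - \tfrac{M}{2}\one$, so $\x_1 = \text{ReLU}_{\bvec}\big( \tfrac{\alpha}{M}\W^{\text{T}}\y - \tfrac{\alpha}{2}\W^{\text{T}}\one \big)$. Using $\mathbb{E}[\y \mid \x^*] = M\,\text{sigmoid}(\A\x^*)$ and the hypothesis that code amplitudes are $O(p^{-1/3-\xi})$, Taylor-expand: since $\text{sigmoid}'(0) = \tfrac14$ and $\text{sigmoid}''(0) = 0$, $\tfrac1M \widetilde{\y}_1 = \tfrac14 \A\x^* + \rr + \tfrac1M(\y - \mathbb{E}[\y])$ with deterministic remainder $\rr = O(\|\A\x^*\|_\infty^3)$ and mean-zero binomial noise of entrywise variance $O(1/M)$; this puts the encoder in the same form as the linear ISTA step of the Gaussian analysis, up to a factor $\tfrac14$ and lower-order errors. \textbf{(ii) Support recovery.} Under near-orthogonality of the dense dictionary $\A$ and the ``good'' warm start, the $i$-th entry of $\W^{\text{T}}\A\x^*$ concentrates near $\ip{\smlw_i}{\smla_i}\, x_i^*$ on $\operatorname{supp}(\x^*)$ and is much smaller off it, so with the prescribed bias $\bvec$ the ReLU recovers the true support with high probability. \textbf{(iii) Decompose the expected gradient.} Substitute $\x_1$ into $l$, differentiate in $\smlw_i$ with the ReLU support fixed, and average over the code (support probability $\approx s/p$), obtaining $\mathbb{E}[g_i] = \tfrac{s}{p}\beta_i(\smlw_i - \smla_i) + \boldeps_i$ with $\beta_i$ bounded below by a constant and $\| \boldeps_i \|_2$ collecting the cubic sigmoid remainder, off-support leakage, cross-correlations $\ip{\smla_i}{\smla_{i'}}$, and binomial variance, of size $\tfrac{s}{p} \cdot O\big( \max(s, s^{3/2}/p^{1/3+\xi})/p^{1/2+3\xi} \big)$. \textbf{(iv) Contraction.} With $\kappa = \Theta(p/s)$, expand $\| \smlw_i - \kappa\,\mathbb{E}[g_i] - \smla_i \|_2^2$, use the lower bound on $\ip{\mathbb{E}[g_i]}{\smlw_i - \smla_i}$ together with the matching upper bound on $\| \mathbb{E}[g_i] \|_2$, and observe that projecting back to the unit sphere only decreases the distance; this yields exactly the stated recursion with contraction factor $1 - \delta$ and additive term $\kappa \cdot O(\max(s^2, s^3/p^{2/3+2\xi})/p^{1+6\xi})$.

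The main obstacle is the inverse-link nonlinearity, entering in steps (i) and (iii). Unlike the Gaussian model, the map from codes to observations is the saturating sigmoid, so the ``signal'' term $\tfrac14 \A\x^*$ is only the linear part of a Taylor series; one must verify the cubic remainder is genuinely lower order---which is precisely what forces the amplitude scaling $O(p^{-1/3-\xi})$ and sets the error floor---and then re-derive every concentration estimate with the extra $\text{sigmoid}(\cdot)$ and $B(\cdot)$ factors appearing in $\nabla_{\smlw_i} l$, which is where assumptions (A1)--(A14) are consumed. Once the nonlinearity is linearized and its remainder controlled, the remaining bookkeeping runs parallel to the Gaussian shallow auto-encoder argument of \cite{Nguyen2019}.
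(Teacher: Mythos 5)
Your proposal follows essentially the same route as the paper's appendix proof: Taylor-expand the sigmoid around zero (exploiting the $O(p^{-1/3-\xi})$ amplitude bound so the remainder is lower order and, by symmetry of the code distribution, mean-zero), assume support consistency of the one-step ReLU encoder, decompose the population gradient as $\tfrac{1}{4}\tfrac{s}{p}\nu\tau_i(\smlw_i-\smla_i)$ plus a controlled error term $v$, and invoke the Arora-et-al-style descent property with $\kappa=O(p/s)$. The only minor divergences are that the paper \emph{assumes} support consistency as (A4) and bounds the resulting error $\gamma$ rather than proving recovery, takes $M_j\to\infty$ so the binomial sampling noise you carry along vanishes, and handles the normalization step by citing Lemma 5 of \cite{Nguyen2019} rather than by your claim that projection onto the unit sphere can only decrease the distance to $\smla_i$ (which is false in general and needs the near-ness assumption to go through).
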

%%%%%%%
\begin{theorem}\label{theo:descentrelu}
Suppose the generative model satisfies (A1) - (A14). Given infinitely many examples (i.e., $J\rightarrow \infty$), the binomial SEA with $\prox_{\bvec} = \text{ReLU}_{\bvec}$ trained by approximate gradient descent followed by normalization using the learning rate of $\kappa = O(p/s)$ (i.e., $\smlw^{(l+1)}_i = \text{normalize}(\smlw^{(l)}_i - \kappa g_i)$) recovers $\A$. More formally, there exists $\delta$$\in$$(0,1)$ such that at every iteration $l$, $\forall i\ \| \smlw_i^{(l+1)} - \smla_i \|_2^2 \leq (1- \delta) \| \smlw_i^{(l)} - \smla_i \|_2^2 + \kappa \cdot O(\frac{\max(s^2, s^3/ p^{\frac{2}{3} + 2\xi})}{p^{1+6\xi}})$.
\end{theorem}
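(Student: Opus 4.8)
The plan is to follow the template of ``dictionary recovery by (approximate) gradient descent'' arguments — in particular the Gaussian shallow-auto-encoder analysis of \cite{Nguyen2019} that this statement parallels — and to concentrate the new work on linearizing the sigmoid link, so that in the small-code-amplitude regime the binomial SEA is a controlled perturbation of the Gaussian one. Since $J\to\infty$, I would first replace the empirical gradient by the population gradient. With the encoder unfolded once from $\x_0=0$, its output is $\widehat{\x}=\text{ReLU}_{\bvec}\!\big(\tfrac{\alpha}{M}\W^{\text{T}}(\y-\tfrac{M}{2}\mathbf{1})\big)$, and differentiating the binomial negative log-likelihood with respect to the $i$-th filter while holding $\widehat{\x}$ fixed (this is the ``approximate'' in the statement) gives the closed form $g_i=\mathbb{E}\big[(M\,\text{sigmoid}(\W\widehat{\x})-\y)\,\widehat{x}_i\big]$, where the expectation is over $\x^*$ drawn from the sparse prior of (A1)--(A14) and $\y\mid\x^*\sim\text{Binomial}(M,\text{sigmoid}(\A\x^*))$. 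I would condition on the support $S$ and sign pattern of $\x^*$ throughout.

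Next I would prove one-step support recovery for the encoder: using the incoherence of $\A$, the inductively maintained closeness $\|\smlw_i^{(l)}-\smla_i\|_2$, the sparsity bound on $s$, and the choice of bias $\bvec$ in (A1)--(A14), $\text{ReLU}_{\bvec}$ retains exactly the coordinates in $S$ outside a negligible-probability event, so that on $S$ one has $\widehat{x}_i\approx\tfrac{\alpha}{M}\langle\smlw_i,\y-\tfrac{M}{2}\mathbf{1}\rangle$ and, after taking the conditional mean, $\mathbb{E}[\widehat{x}_i\mid\x^*]\approx\tfrac{\alpha}{4}\langle\smlw_i,\A\x^*\rangle$ via $\mathbb{E}[\y\mid\x^*]-\tfrac{M}{2}\mathbf{1}=M(\text{sigmoid}(\A\x^*)-\tfrac{1}{2}\mathbf{1})\approx\tfrac{M}{4}\A\x^*$. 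The core new ingredient is the global sigmoid linearization: because the codes have amplitude $O(p^{-1/3-\xi})$ and $\A$ is bounded, every linear predictor above is $o(1)$, so $\text{sigmoid}(z)=\tfrac{1}{2}+\tfrac{1}{4}z+R(z)$ with the quadratic term vanishing at $0$ and $|R(z)|=O(|z|^3)$. Substituting in both the encoder's working observation and the decoder yields $g_i=\tfrac{M}{4}\,\mathbb{E}\big[(\W\widehat{\x}-\A\x^*)\,\widehat{x}_i\big]+(\text{cubic remainders})$, whose leading, Gaussian-like term I would expand via incoherence exactly as in \cite{Nguyen2019} to obtain $g_i=\beta_i(\smlw_i^{(l)}-\smla_i)+\e_i$ with $\beta_i=\Theta\!\big(\tfrac{s}{p}\,\mathbb{E}[(x_i^*)^2\mid i\in S]\big)$, where $\|\e_i\|_2$ decomposes into an off-support incoherence contribution and a cubic-remainder contribution; these two branches are what produce the $s^3/p^{2/3+2\xi}$ and the $s^2$ inside the $\max$ in the stated error.

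Finally I would close the recursion. Writing $\smlw_i^{(l+1)}=\text{normalize}(\smlw_i^{(l)}-\kappa g_i)$, I would invoke the standard fact that for unit vectors normalization does not increase distance to the fixed unit vector $\smla_i$, combine it with the correlation bound $\langle g_i,\smlw_i^{(l)}-\smla_i\rangle\ge\beta_i\|\smlw_i^{(l)}-\smla_i\|_2^2-\|\e_i\|_2\|\smlw_i^{(l)}-\smla_i\|_2$ and the upper bound $\|g_i\|_2\le\beta_i\|\smlw_i^{(l)}-\smla_i\|_2+\|\e_i\|_2$, and apply AM--GM to get $\|\smlw_i^{(l+1)}-\smla_i\|_2^2\le(1-\kappa\beta_i+O(\kappa^2\beta_i^2))\|\smlw_i^{(l)}-\smla_i\|_2^2+\kappa\cdot O(\|\e_i\|_2^2/\beta_i)$; choosing $\kappa=O(p/s)$ makes $\kappa\beta_i$ lie in $(0,1)$ (absorbing the $O(\kappa^2\beta_i^2)$ term), which yields the contraction factor $1-\delta$ with $\delta=\Theta(\kappa\beta_i)$ and simultaneously fixes the additive floor as $\kappa\cdot O(\|\e_i\|_2^2/\beta_i)=\kappa\cdot O(\max(s^2,s^3/p^{2/3+2\xi})/p^{1+6\xi})$, after substituting the amplitude scaling for $\mathbb{E}[(x_i^*)^2\mid i\in S]$. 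The inductive hypothesis that $\|\smlw_i^{(l)}-\smla_i\|_2$ stays small — needed so that support recovery holds at step $l$ — is reestablished by the contraction itself. I expect the main obstacle to be the middle step: $\widehat{\x}$ and the observation noise $\y-\mathbb{E}[\y\mid\x^*]$ are functions of the same $\y$ and the sigmoid couples them nonlinearly in the decoder, so the relevant expectations no longer factor, and one must carry the sigmoid linearization, the support-recovery event, and these correlations simultaneously and show the cubic remainders and the off-support terms stay strictly dominated by $\beta_i\|\smlw_i^{(l)}-\smla_i\|_2^2$ — which is exactly what forces $p\gg s$ and the $O(p^{-1/3-\xi})$ cap on the code amplitudes.
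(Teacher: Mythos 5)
Your overall template matches the paper's: pass to the population gradient, Taylor-expand the sigmoid around $0$ so that the binomial SEA becomes a controlled perturbation of the linear/Gaussian analysis of Nguyen et al., write $g_i = \beta_i(\smlw_i-\smla_i) + v$ with $\beta_i = \Theta(\tfrac{s}{p}\nu\tau_i)$, bound $\|v\|_2$ by a contractive part proportional to $q\|\smlw_i-\smla_i\|_2$ plus an additive floor, and close with the Arora-style correlation-plus-descent argument. Two points, however, are genuine problems. First, your closing step rests on ``the standard fact that for unit vectors normalization does not increase distance to the fixed unit vector $\smla_i$.'' That is false: normalizing a vector of norm less than one moves it radially outward and can strictly increase its distance to $\smla_i$ (e.g.\ in $\R^2$, the point $(0,0.5)$ is at distance $\sqrt{1.25}$ from $e_1$ but its normalization $(0,1)$ is at distance $\sqrt{2}$). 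The paper does not use such a fact; it invokes Lemma 5 of Nguyen et al.\ (and Theorem 6 of Arora et al.) to show the descent property survives the normalization step, which requires the correlation inequality $2\langle g_i,\smlw_i-\smla_i\rangle \geq \alpha\|\smlw_i-\smla_i\|_2^2 + \tfrac{1}{\alpha}\|g_i\|_2^2 - \epsilon$ in exactly the form the paper's Lemma~\ref{lemma:dir_relu} establishes. You do state essentially that inequality, so the fix is to route the conclusion through those cited results rather than through the false normalization claim.

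Second, you are analyzing a different update than the paper. You freeze $\widehat{\x}$ and differentiate only the decoder, identifying that freezing with the word ``approximate''; the paper computes the full backpropagation gradient through both $\cvec_1 = 4\W^{\text{T}}(\y-\tfrac{1}{2})$ and $\cvec_2=\W\x$, and ``approximate'' refers instead to replacing $\one_{\x_i\neq 0}$ by $\one_{\x_i^*\neq 0}$ (code consistency, contributing the error $\gamma$). The encoder path contributes the rank-one term $(\A\x^*+4\boldeps)\smlw_i^{\text{T}}$ (the paper's $g_i^{(2)}$) and, together with the trained bias, the terms $g_i^{(4)}$--$g_i^{(6)}$ whose control is precisely why assumption (A14) on $b_i$ is needed; your plan never says where the bias enters the error budget. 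Your decoder-only gradient happens to retain the leading $\tfrac{1}{4}p_i\nu\tau_i(\smlw_i-\smla_i)$ structure, so a theorem could likely be proved for it, but it would be a theorem about a different algorithm. Finally, the coupling between $\widehat{\x}$ and the observation noise $\y - E[\y\mid\x^*]$ that you flag as the main obstacle is moot in the paper's setting: the proof assumes $M_j\to\infty$, so $\tfrac{1}{M}\y\to\sigma(\A\x^*)$ and the only remaining randomness is over $\x^*$ and its support; relatedly, the paper's Taylor remainder is the quadratic Lagrange form $\nabla^2\sigma(\bar z)z^2$ with $\|\boldeps\|_2 = O(C_x^2 s)$, not the cubic bound you posit, and the stated rates in the theorem are calibrated to that choice.
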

%%%%%%%
\begin{theorem}\label{theo:descentht}
Suppose the generative model satisfies (A1) - (A13). Given infinitely many examples (i.e., $J\rightarrow \infty$), the binomial SEA with $\prox_{\bvec} = \text{HT}_{\bvec}$ trained by approximate gradient descent followed by normalization using the learning rate of $\kappa = O(p/s)$ (i.e., $\smlw^{(l+1)}_i = \text{normalize}(\smlw^{(l)}_i - \kappa g_i)$) recovers $\A$. More formally, there exists $\delta$$\in$$(0,1)$ such that at every iteration $l$, $\forall i\ \| \smlw_i^{(l+1)} - \smla_i \|_2^2 \leq (1- \delta) \| \smlw_i^{(l)} - \smla_i \|_2^2 + \kappa \cdot O(\frac{\max(s^2, s^3/ p^{\frac{2}{3} + 2\xi})}{p^{1+6\xi}})$.
\end{theorem}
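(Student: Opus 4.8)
The plan is to follow the template established for Theorem~\ref{theo:descentrelu} (itself adapted from the Gaussian analysis of \cite{Nguyen2019}), replacing every step that invokes the specific form of $\text{ReLU}_{\bvec}$ with its analogue for $\text{HT}_{\bvec}$. Since we are given infinitely many examples, I would work directly with population quantities: the ``approximate gradient'' $g_i$ is, after taking $J\to\infty$, a deterministic function of $\smlw^{(l)}$, and the whole argument reduces to showing that this population gradient is, up to a controlled additive error, a contraction direction toward $\smla_i$.

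First I would write out the one-unfolding binomial encoder explicitly: with $\x_0=\mathbf 0$ the working observation is $\widetilde\y=\y-\tfrac M2\mathbf 1$ and the code estimate is $\widehat\x=\text{HT}_{\bvec}\big(\tfrac{\alpha}{M}\W^{\text{T}}(\y-\tfrac M2\mathbf 1)\big)$, where $\y$ is binomial with mean $M\,\text{sigmoid}(\A\x^*)$. Pushing this through the linear decoder and the binomial negative log-likelihood, and applying the chain rule through $\text{HT}_{\bvec}$, yields an expression for $g_i$. The crucial step is the population-gradient lemma: conditioning on the support $S$ of $\x^*$ and on whether coordinate $i$ crosses the threshold, I would show $\mathbb{E}[g_i]=\lambda_i(\smlw_i-\smla_i)+\beta_i$ with $\lambda_i$ bounded away from $0$ and $\beta_i$ a bias term of the order appearing in the additive error of the statement. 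On the event that the threshold is correctly resolved (active coordinates above, inactive below), $\text{HT}$ leaves the linear predictor unshrunk, so the analysis there is in fact cleaner than for $\text{ReLU}$, and this is precisely why only (A1)--(A13) are needed, with no extra assumption to cancel a shrinkage bias. The contribution of the ``wrong-threshold'' event is controlled using the incoherence of $\A$ together with the amplitude bound of order $O(1/p^{\frac13+\xi})$, which make the probability of a threshold-crossing error polynomially small in $p$. Finally, a Taylor expansion of $\text{sigmoid}$ shows the nonlinearity $f^{-1}$ contributes only a leading linear term plus remainders that fold into $\beta_i$.

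Next I would assemble the recursion. With $\kappa=O(p/s)$ chosen so that $\kappa\lambda_i$ lands in a fixed subinterval of $(0,1)$, the unnormalized step $\smlw_i^{(l)}-\kappa g_i=(1-\kappa\lambda_i)\smlw_i^{(l)}+\kappa\lambda_i\smla_i-\kappa\beta_i$ is a convex combination of $\smlw_i^{(l)}$ and $\smla_i$ plus a small perturbation; expanding $\|\,\cdot-\smla_i\|_2^2$ gives the $(1-\delta)$ contraction with $\delta\asymp\kappa\lambda_i$, while the cross terms and $\|\kappa\beta_i\|_2^2$ produce the stated additive error $\kappa\cdot O(\max(s^2,s^3/p^{\frac23+2\xi})/p^{1+6\xi})$. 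The last ingredient is normalization: since $\|\smla_i\|_2=1$ and the iterate stays in a small ball around $\smla_i$ (maintained inductively from the ``good initialization'' hypothesis), projection onto the unit sphere is $1$-Lipschitz on the relevant region and can only decrease the distance to $\smla_i$, so it preserves the recursion.

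The main obstacle I anticipate is the discontinuity of $\text{HT}_{\bvec}$: unlike $\text{ReLU}_{\bvec}$ it is not Lipschitz, so the chain rule through the prox is only valid off a measure-zero set, and bounding the population gradient requires carefully splitting the expectation over the threshold-crossing region and showing that region contributes only lower-order terms; matching the resulting error to the claimed rate, in particular tracking the interplay between $s$, $p$, and $\xi$ through the conditional moments of the binomial noise, is the delicate part. A secondary difficulty is that binomial observations give bounded, non-symmetric, signal-dependent noise, so the moment computations in the population-gradient lemma must be carried out directly rather than quoted from Gaussian results.
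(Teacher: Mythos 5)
Your proposal follows essentially the same route as the paper's proof: unroll the one-layer encoder, differentiate through $\prox_{\bvec}=\text{HT}_{\bvec}$, Taylor-expand the sigmoid about $0$ so that the population gradient takes the form $g_i=\tfrac14 p_i\nu\tau_i(\smlw_i-\smla_i)+v$ with $\|v\|_2$ controlled (Lemma~\ref{lemma:v}), then convert this into the correlation inequality of Lemma~\ref{lemma:dir} and invoke the descent-with-normalization machinery of Arora et al.\ and Nguyen et al.\ to get the $(1-\delta)$ contraction. Two of the obstacles you flag as delicate are, however, dissolved by the paper's assumptions rather than proved: the threshold-crossing analysis you propose (bounding the probability that $\text{HT}_{\bvec}$ mis-resolves the support) is replaced by the code-consistency assumption (A4), which posits $\text{supp}(\x)=\text{supp}(\x^*)$ with high probability and absorbs the discrepancy into an error term $\gamma$ whose smallness is cited from Nguyen et al.\ rather than re-derived; and the binomial moment computations you anticipate never arise, because the setup takes $M_j\to\infty$ so that $\y^j/M_j\to\boldmu=\sigma(\A\x^*)$ in probability and the gradient is computed on the noiseless mean --- the only remainders to control are the Taylor remainders $\boldeps,\tilde\boldeps$ of the sigmoid, which vanish in conditional expectation by the symmetry assumption (A3). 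Your final assembly also differs cosmetically: you expand $\|\smlw_i^{(l)}-\kappa g_i-\smla_i\|_2^2$ directly as a perturbed convex combination, whereas the paper routes through $2\langle g_i,\smlw_i-\smla_i\rangle\geq(\tfrac14\nu\tau_i s/p)(1-\tfrac{q^2}{2})\|\smlw_i-\smla_i\|_2^2+\cdots$ and quotes Theorem 6 of Arora et al.; these are equivalent, but note that your claim that projection onto the unit sphere ``can only decrease the distance to $\smla_i$'' is false in general (it fails for iterates nearly orthogonal to $\smla_i$) and holds only on the $(q,2)$-near region maintained inductively, which is exactly why the paper defers this step to Lemma 5 of Nguyen et al.\ rather than arguing Lipschitzness. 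You also correctly identify the key structural point: because $\text{HT}_{\bvec}$ does not shrink active coordinates, no analogue of the bias-cancellation condition (A14) is needed, which is precisely why this theorem requires only (A1)--(A13).
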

\noindent In proof of the above theorem, our approach is similar to~\cite{Nguyen2019}.
%%%%%%%%%%%%%%%%%%%%%%%%%%%%%%
%%%%%%%%%%%%%%%%%%%%%%%%%%%%%%
%%%%%%%%%%%%%%%%%%%%%%%%%%%%%%
\subsection{Generative model and architecture}
We have $J$ binomial observations $\y^j = \sum_{m=1}^{M_j} \one^j_m$ where $\y^j$ can be seen as sum of $M_j$ independent Bernoulli random variables (i.e., $\one^j_m$). We can express $\sigma^{-1}(\boldmu) = \A \x^*$, where $\sigma(z) = \frac{\e^{z}}{1 + \e^{z}}$ is the inverse of the corresponding link function (sigmoid), $\A \in \R^{n \times p}$ is a matrix dictionary, and $\x^* \in \R^{p}$ is a sparse vector. Hence, we have
\begin{equation}\label{eq:sigmoid}
E[\y^j] = \boldmu = \frac{\e^{\A\x^*}}{1 + \e^{\A\x^*}} = \sigma(\A\x^*).
\end{equation}
\noindent In this analysis, we assume that there are infinitely many examples (i.e., $J \rightarrow \infty$), hence, we use the expectation of the gradient for backpropagation at every iteration. We also assume that there are infinite number of Bernoulli observation for each binomial observation (i.e., $M_j \rightarrow \infty$). Hence, from the Law of Large Numbers, we have the following convergence in probability
\begin{equation}\label{eq:mean}
\lim_{M_j \rightarrow \infty} \frac{1}{M_j} \y^j = \lim_{M_j \rightarrow \infty} \frac{1}{M_j} \sum_{m=1}^{M_j} \one_m^j= \boldmu = \sigma(\A\x^*),
\end{equation}
\noindent We drop $j$ for ease of notation. Algorithm~\ref{algo:sea} shows the architecture when the code is initialized to $\mathbf{0}$. $\W \in \R^{n \times p}$ are the weights of the auto-encoder. The encoder is unfolded only once and the step size of the proximal mapping is set to $4$ (i.e., assuming the maximum singular value of $\A$ is $1$, then $4$ is the largest step size to ensure convergence of the encoder as the first derivative of sigmoid is bounded by $\frac{1}{4}$.
%%%%%%%
\begin{algorithm}
\caption{SEA.}
\label{algo:sea}
\begin{algorithmic}
%\KwIn{$\y, \W, \bvec$}
%\KwOut{$ \cvec_2$}
\INPUT $\y, \W, \bvec$
\OUTPUT $ \cvec_2$\\
$\cvec_1 = 4 \W^{\text{T}} (\y - \frac{1}{2})$\\
$\x  =  \prox_{\bvec}(\cvec_1)$\\
$\cvec_2  =  \W \x$
\end{algorithmic}
\end{algorithm}
%%%%%%%

\noindent For Theorem~\ref{theo:descentrelu}, $\prox_{\bvec}(\z)$ is an element-wise operator where $\prox_{b_i}(z_i) = \text{ReLU}_{b_i}(z_i) = z_i\cdot \one_{|z_i|\geq b_i}$, and $\frac{1}{2}= \sigma(\mathbf{0})$ appears in the first layer as of the initial code estimate is $\mathbf{0}$. From the definition of ReLU, we can see that $\x  =  \prox_{\bvec}(\cvec_1) =  \one_{\x \neq 0} (\cvec_1 - \bvec)$ where $\one_{\x \neq 0}$ is an indicator function. For Theorem~\ref{theo:descentht}, $\prox_{\bvec}(\z)$ is an element-wise operator where $\prox_{b_i}(z_i) = \text{HT}_{b_i}(z_i) = z_i\cdot \one_{|z_i|\geq b_i}$, and $\x  =  \prox_{\bvec}(\cvec_1) =  \one_{\x \neq 0} \cvec_1$.

%%%%%%%%%%%%%%%%%%%%%%%%%%%%%%
%%%%%%%%%%%%%%%%%%%%%%%%%%%%%%
%%%%%%%%%%%%%%%%%%%%%%%%%%%%%%
\subsection{Assumptions and definitions}
\noindent Given the following definition and notations,
%%%%%%%
\begin{itemize}
\item [(D1)] $\W$ is $q$-close to $\A$ if there is a permutation $\pi$ and sign flip operator $u$ such that $\forall i\ \| u(i) \smlw_{\pi(i)} - \smla_i \|_2 \leq q$.
\item [(D2)] $\W$ is $(q, \varepsilon)$-near to $\A$ if $\W$ is $q$-close to $\A$ and $\|\W - \A \|_2 \leq \varepsilon \|\A\|_2$.
\item [(D3)] A unit-norm columns matrix $\A$ is $\eta$-incoherent if for every pair $(i,j)$ of columns, $| \langle \smla_i, \smla_j \rangle | \leq \frac{\eta}{\sqrt{n}}$.
\item [(D4)] We define column $i$ of $\W$ as $\smlw_i$.
\item [(D5)] $\smlw_i$ is $\tau_i$-correlated to $\smla_i$ if $\tau_i = \langle \smlw_i, \smla_i \rangle = \smlw_i^{\text{T}} \smla_i$. Hence, $\| \smlw_i - \smla_i \|_2^2 = 2 (1 - \tau_i)$.
\item [(D6)] From the binomial likelihood, the loss would be $\lim_{M \rightarrow \infty}  \mathcal{L}_{\W}({\y, \W\x)} = \lim_{M \rightarrow \infty}  - \frac{1}{M} \big(\W\x\big)^{\text{T}} \y + \mathbf{1}_n^{\text{T}} \log \big(1+\exp\big(\W\x\big)\big)$.
\item [(D7)] We denote the expectation of the gradient of the loss defined in (D6) with respect to $\smlw_i$ to be $g_i = E[ \lim_{M \rightarrow \infty} \frac{\partial \mathcal{L}_{\W}}{\partial \smlw_i}]$.
\item [(D8)] $\W_{\backslash i}$ denotes the matrix $\W$ with column $i$ removed, and $S^{\backslash i}$ denotes $S$ excluding $i$.
\item [(D9)] $[\mathbf{z}]_d$ denotes $z_d$ (i.e., the $d^{\text{th}}$ element fo the vector $\mathbf{z}$).
\item [(D10)] $[p]$ denotes the set $\{1, \ldots, p\}$, and $[p]^{\backslash i}$ denotes $[p]$ excluding $i$.
\item [(D11)] For $\A \in \R^{n \times p}$, $\A_S \in \R^{n \times s}$ indicates a matrix with columns from the set $S$. Similarly, for $\x^* \in \R^{p}$, $\x^*_S \in \R^{s}$ indicates a vector containing only the elements with indices from $S$.
\end{itemize}
%%%%%%%
\noindent we assume the generative model satisfies the following assumptions:
\begin{itemize}
\item [(A1)] Let the code $\x^*$ be $s$-sparse and have support S (i.e., $\text{supp}(\x) = S$) where each element of $S$ is chosen uniformly at random without replacement from the set $[p]$. Hence, $p_i = P(i \in S) = s/p$ and $p_{ij} = P(i,j \in S) = s(s-1)/(p (p-1))$.
\item [(A2)] Each code is bounded (i.e., $| x_i | \in [L_x, C_x]$) where $0 \leq L_x \leq C_x$ and $C_x = O(\frac{1}{p^{\frac{1}{3}+\xi}})$, and $\xi >0$. Then $\|\x_S^* \|_2 \leq \sqrt{s} C_x$. For the case when $\prox_{\bvec} = \text{ReLU}_{\bvec}$, we assume the code is non-negative.
\item [(A3)] Given the support, we assume $\x^*_S$ is i.i.d, zero-mean, and has symmetric probability density function. Hence, $E[\x_i^* \mid S] = 0$ and $E[\x_S^* \x_S^{\text{T}} \mid S] = \nu \eye$ where $\nu \leq C_x$.
\item [(A4)] From the forward pass of the encoder, $\text{supp}(\x) = \text{supp}(\x^*) = S$ with high probability. We call this code consistency, a similar definition from \cite{Nguyen2019}. This code consistency enforces some conditions (i.e., based on $L_x$ and $C_x$ for ReLU and $L_x$ for HT) on the value of $\bvec$ which we do not explicitly express. For when $\prox_{\bvec} = \text{ReLU}_{\bvec}$, $\W \x = \W_S \x_S = 4 \W_S \W_S^{\text{T}} (\y- \frac{1}{2}) - \W_S \bvec_S$, and for when $\prox_{\bvec} = \text{HT}_{\bvec}$, $\W \x = 4 \W_S \W_S^{\text{T}} (\y- \frac{1}{2})$.
\item [(A5)] We assume $\forall i\ \| \smla_i \|_2 = 1$.
\item [(A6)] Given $s < n \leq p$, we have $\| \A \|_2 = O(\sqrt{p/n})$ and $\| \A_S \|_2 = O(1)$.
\item [(A7)] $\W$ is $(q, 2)$-near $\A$; thus, $\| \W \|_2 \leq \| \W - \A \|_2 + \| \A \|_2 \leq O(\sqrt{p/n})$.
\item [(A8)] $\A$ is $\eta$-incoherent.
\item [(A9)] $\smlw_i$ is $\tau_i$-correlated to $\smla_i$.
\item [(A10)] For any $i \neq j$, we have $| \langle \smlw_i, \smla_j \rangle | = | \langle \smla_i, \smla_j \rangle + \langle \smlw_i - \smla_i, \smla_j \rangle | \leq \frac{\eta}{\sqrt{n}} + \| \smlw_i - \smla_i \|_2 \| \smla_j\|_2 \leq \frac{\eta}{\sqrt{n}} + q$.
\item [(A11)] We assume the network is trained by approximate gradient descent followed by normalization using the learning rate of $\kappa$. Hence, the gradient update for column $i$ at iteration $l$ is $\smlw^{(l+1)}_i = \smlw^{(l)}_i - \kappa g_i$. At the normalization step, $\forall i$, we enforce $\| \smlw_i \|_2 = 1$. Lemma 5 in~\cite{Nguyen2019} shows that descent property can also be achieved with the normalization step.
\item [(A12)] We use the Taylor series of $\sigma(z)$ around $0$. Hence, $\sigma(z) = \frac{1}{2} + \frac{1}{4} z + \nabla^2\sigma(\bar z) (z)^2$, where $0 \leq \bar z \leq z$ and $\nabla^2$ denotes Hessian.
\item [(A13)] To simplify notation, we assume that the permutation operator $\pi(.)$ is identity and the sign flip operator $u(.)$ is $+1$.
\item [(A14)] When $\prox_{\bvec} = \text{ReLU}_{\bvec}$, at every iteration of the gradient descent, given $\tau_i$, the bias $\bvec$ in the network satisfies $|\nu \tau_i (\tau_i - 1) + b_i^2| \leq 2 \tau_i (1 - \tau_i)$.
\end{itemize}
%%%%%%%%%%%%%%%%%%%%%%%%%%%%%%
%%%%%%%%%%%%%%%%%%%%%%%%%%%%%%
%%%%%%%%%%%%%%%%%%%%%%%%%%%%%%
\subsection{Non-negative sparse coding with $\prox_{\bvec} = \text{ReLU}_{\bvec}$}
\subsubsection{Gradient derivation}
First, we derive $g_i$ when $\prox_{\bvec} = \text{ReLU}_{\bvec}$. In this derivation, by dominated convergence theorem, we interchange the limit and derivative. We also compute the limit inside $\sigma(.)$ as it is a continuous function.
\begin{equation}
\begin{aligned}
&\lim_{M \rightarrow \infty} \frac{\partial \mathcal{L}_{\W}}{\partial \smlw_i} = \lim_{M \rightarrow \infty} \frac{\partial \cvec_1}{\partial \smlw_i}  \frac{\partial \mathcal{L}_{\W}}{\partial \cvec_1} + \frac{\partial \cvec_{2}}{\partial \smlw_i}  \frac{\partial \mathcal{L}_{\W}}{\partial \cvec_2} = \frac{\partial \cvec_1}{\partial \smlw_i} \frac{\partial \x}{\partial \smlc_1} \frac{\partial \cvec_2}{\partial \x}\ \frac{\mathcal{L}_{\W}}{\partial \cvec_2}+ \frac{\partial \cvec_{2}}{\partial \smlw_i} \frac{\partial \mathcal{L}_{\W}}{\partial \cvec_2}\\
&= \left(\underbrace{[0, 0, \ldots, 4(\boldmu - \frac{1}{2}), \ldots, 0]}_{n \times p} \underbrace{\text{diag}(\prox_{\bvec}^{\prime}(\cvec_{1}))}_{p \times p} \W^{\text{T}} + \one_{\x_i \neq 0} (\smlw_i^{\text{T}} 4(\boldmu - \frac{1}{2}) \eye - b_i \eye) \right)\\
&\times \left(- \sigma(\A\x^*) +  \sigma(4\W_S\W_S^{\text{T}} (\boldmu - \frac{1}{2}) - \W_S \bvec_S)\right)\\
&=  \left( \prox_{b_i}^{\prime}(\cvec_{1,i}) 4(\boldmu - \frac{1}{2}) \smlw_i^{\text{T}} + \one_{\x_i \neq 0} (\smlw_i^{\text{T}} 4(\boldmu - \frac{1}{2}) - b_i) \eye \right) \left(\sigma(4\W_S\W_S^{\text{T}} (\boldmu -\frac{1}{2}) - \W_S \bvec_S) - \sigma(\A\x^*)\right).
\end{aligned}
\end{equation}
\noindent We further expand the gradient, by replacing $\sigma(.)$ with its Taylor expansion. We have
\begin{equation}
\sigma(\A\x^*) = \frac{1}{2} + \frac{1}{4} \A\x^* + \boldeps,
\end{equation}
\noindent where $\boldeps = [\epsilon_1, \ldots, \epsilon_n]^{\text{T}}$, $\epsilon_d = \nabla^2\sigma(u_d)([\A\x^*]_d)^2$, and  $0 \leq u_d \leq [\A\x^*]_d$. Similarly,
\begin{equation}
\sigma(4\W_S\W_S^{\text{T}} (\boldmu - \frac{1}{2}) - \W_S \bvec_S) = \frac{1}{2} + \W_S\W_S^{\text{T}} (\boldmu - \frac{1}{2}) - \frac{1}{4} \W_S \bvec_S + \tilde \boldeps,
\end{equation}
\noindent where $\tilde \boldeps = [\tilde \epsilon_1, \ldots, \tilde \epsilon_n]^{\text{T}}$, $\tilde \epsilon_d = \nabla^2\sigma(\tilde u_d) ([4 \W_S\W_S^{\text{T}} (\boldmu - \frac{1}{2}) - \W_S \bvec_S]_d)^2$ , and $0 \leq \tilde u_d \leq [4 \W_S\W_S^{\text{T}} (\boldmu - \frac{1}{2}) - \W_S \bvec_S]_d$. Again, replacing $\boldmu$ with Taylor expansion of $\sigma(\A\x^*)$, we get
\begin{equation}
\sigma(4 \W_S\W_S^{\text{T}} (\boldmu - \frac{1}{2}) - \W_S \bvec_S) = \frac{1}{2} + \W_S\W_S^{\text{T}} (\frac{1}{4} \A\x^* + \boldeps) - \frac{1}{4} \W_S \bvec_S + \tilde \boldeps.
\end{equation}
\noindent By symmetry, $E[\boldeps \mid S] = E[\tilde \boldeps \mid S] = 0$. The expectation of gradient $g_i$ would be
\begin{equation}
g_i = E[\one_{\x_i \neq 0} \left( ( \A\x^* + 4 \boldeps) \smlw_i^{\text{T}} + (\A\x^* + 4 \boldeps) \eye - b_i \eye \right) (\frac{1}{4} ( \W_S\W_S^{\text{T}} - \eye) (\A\x^*) - \frac{1}{4} \W_S \bvec_S + (\W_S\W_S^{\text{T}} - \eye) \boldeps + \tilde \boldeps)].
\end{equation}
%%%%%%%%%%%%%%%%%%%%%%%%%%%%%%
%%%%%%%%%%%%%%%%%%%%%%%%%%%%%%
%%%%%%%%%%%%%%%%%%%%%%%%%%%%%%
\subsubsection{Gradient dynamics}
\noindent Given the code consistency from the forward pass of the encoder, we replace $\one_{\x_i \neq 0}$ with $\one_{\x_i^* \neq 0}$ and denote the error by $\gamma$ as below which is small for large $p$~\cite{Nguyen2019}.
\begin{equation}
\gamma = E[(\one_{\x_i^* \neq 0} - \one_{\x_i \neq 0}) \left( ( \A\x^* + 4 \boldeps) \smlw_i^{\text{T}} +\smlw_i^{\text{T}} (\A\x^* + 4 \boldeps) \eye - b_i \eye \right) (\frac{1}{4} ( \W_S\W_S^{\text{T}} - \eye) (\A\x^*) - \frac{1}{4} \W_S \bvec_S + (\W_S\W_S^{\text{T}} - \eye) \boldeps + \tilde \boldeps)].
\end{equation}
\noindent Now, we write $g_i$ as
\begin{equation}
g_i = E[\one_{\x_i^* \neq 0} \left( ( \A\x^* + 4 \boldeps) \smlw_i^{\text{T}} + \smlw_i^{\text{T}} (\A\x^* + 4 \boldeps) \eye - b_i \eye \right) (\frac{1}{4} ( \W_S\W_S^{\text{T}} - \eye) (\A\x^*) - \frac{1}{4} \W_S \bvec_S  + (\W_S\W_S^{\text{T}} - \eye) \boldeps + \tilde \boldeps)] + \gamma.
\end{equation}
\noindent We can see that if $i \notin S$ then $\one_{\x_i^*} = 0$ hence, $g_i = 0$. Thus, in our analysis, we only consider the case $i \in S$. We decompose $g_i$ as below.
\begin{equation}
g_i = g_i^{(1)} + g_i^{(2)} + g_i^{(3)} + \gamma,
\end{equation}
\noindent where
\begin{equation}
\begin{aligned}
g_i^{(1)} &= E[\frac{1}{4} \smlw_i^{\text{T}} (\A\x^* + 4 \boldeps) (\W_S\W_S^{\text{T}} - \eye) \A\x^*].
\end{aligned}
\end{equation}
\begin{equation}
\begin{aligned}
g_i^{(2)} &= E[\frac{1}{4} (\A\x^* + 4 \boldeps) \smlw_i^{\text{T}} (\W_S\W_S^{\text{T}} - \eye) \A\x^*] .
\end{aligned}
\end{equation}
\begin{equation}
\begin{aligned}
g_i^{(3)} &= E[\left( ( \A\x^* + 4 \boldeps) \smlw_i^{\text{T}} + \smlw_i^{\text{T}} (\A\x^* + 4 \boldeps) \eye \right) ((\W_S\W_S^{\text{T}} - \eye) \boldeps + \tilde \boldeps)].
\end{aligned}
\end{equation}
\begin{equation}
\begin{aligned}
g_i^{(4)} &= E[(-b_i) ((\W_S\W_S^{\text{T}} - \eye) \boldeps + \tilde \boldeps)].
\end{aligned}
\end{equation}
\begin{equation}
\begin{aligned}
g_i^{(5)} &= E[(-b_i) (\frac{1}{4} ( \W_S\W_S^{\text{T}} - \eye) (\A\x^*) - \frac{1}{4} \W_S \bvec_S)].
\end{aligned}
\end{equation}
\begin{equation}
\begin{aligned}
g_i^{(6)} &= E[ \left( ( \A\x^* + 4 \boldeps) \smlw_i^{\text{T}} + \smlw_i^{\text{T}} (\A\x^* + 4 \boldeps) \eye \right) (- \frac{1}{4} \W_S \bvec_S)].
\end{aligned}
\end{equation}
%%%%%%%
\noindent We define
\begin{equation}
\begin{aligned}
g_{i,S}^{(1)} &= E[\frac{1}{4} \smlw_i^{\text{T}} (\A\x^* + 4 \boldeps) (\W_S\W_S^{\text{T}} - \eye) \A\x^* \mid S].
\end{aligned}
\end{equation}
\begin{equation}
\begin{aligned}
g_{i,S}^{(2)} &= E[\frac{1}{4} (\A\x^* + 4 \boldeps) \smlw_i^{\text{T}} (\W_S\W_S^{\text{T}} - \eye) \A\x^* \mid S].
\end{aligned}
\end{equation}
\begin{equation}
\begin{aligned}
g_{i,S}^{(3)} &= E[\left( ( \A\x^* + 4 \boldeps) \smlw_i^{\text{T}} + \smlw_i^{\text{T}} (\A\x^* + 4 \boldeps) \eye \right) ((\W_S\W_S^{\text{T}} - \eye) \boldeps + \tilde \boldeps) \mid S].
\end{aligned}
\end{equation}
\begin{equation}
\begin{aligned}
g_{i,S}^{(4)} &= E[(-b_i) ((\W_S\W_S^{\text{T}} - \eye) \boldeps + \tilde \boldeps) \mid S].
\end{aligned}
\end{equation}
\begin{equation}
\begin{aligned}
g_{i,S}^{(5)} &= E[(-b_i) (\frac{1}{4} ( \W_S\W_S^{\text{T}} - \eye) (\A\x^*) - \frac{1}{4} \W_S \bvec_S) \mid S].
\end{aligned}
\end{equation}
\begin{equation}
\begin{aligned}
g_{i,S}^{(6)} &= E[ \left( ( \A\x^* + 4 \boldeps) \smlw_i^{\text{T}} + \smlw_i^{\text{T}} (\A\x^* + 4 \boldeps) \eye \right) (- \frac{1}{4} \W_S \bvec_S) \mid S].
\end{aligned}
\end{equation}
\noindent Hence, $g_{i}^{(k)} = E[g_{i,S}^{(k)}]$ for $k=1, \ldots , 6$, where the expectations are with respect to the support S.
\begin{equation}
\begin{aligned}
g_{i,S}^{(1)} &= E[\frac{1}{4} \smlw_i^{\text{T}} (\A\x^* + 4 \boldeps) (\W_S\W_S^{\text{T}} - \eye) \A\x^* \mid S]\\
&= \sum_{j,l \in S}  E[\frac{1}{4} \smlw_i^{\text{T}} \smla_j \x_j^* (\W_S\W_S^{\text{T}} - \eye) \smla_l\x_l^* \mid S] + E[\smlw_i^{\text{T}} \boldeps (\W_S\W_S^{\text{T}} - \eye) \A\x^* \mid S]\\
&= \frac{1}{4} \nu \smlw_i^{\text{T}} \smla_i (\W_S\W_S^{\text{T}} - \eye) \smla_i + \sum_{l \in S^{\backslash i}} \frac{1}{4} \nu \smlw_i^{\text{T}} \smla_l (\W_S\W_S^{\text{T}} - \eye) \smla_l + e_1
= \frac{1}{4} \nu \smlw_i^{\text{T}} \smla_i (\W_S\W_S^{\text{T}} - \eye) \smla_i + r_1 + e_1.
\end{aligned}
\end{equation}
\noindent We denote $r_1 = \sum_{l \in S^{\backslash i}} \frac{1}{4} \nu \smlw_i^{\text{T}} \smla_l (\W_S\W_S^{\text{T}} - \eye) \smla_l$ and $e_1 = E[\smlw_i^{\text{T}} \boldeps (\W_S\W_S^{\text{T}} - \eye) \A\x^* \mid S]$. Similarly for $g_{i,S}^{(2)}$, we have
\begin{equation}
g_{i,S}^{(2)} = \frac{1}{4} \nu \smla_i \smlw_i^{\text{T}} (\W_S\W_S^{\text{T}} - \eye) \smla_i  + r_2 + e_2.
\end{equation}
\noindent We denote $r_2 = \sum_{l \in S^{\backslash i}} \frac{1}{4} \nu \smla_l \smlw_i^{\text{T}} (\W_S\W_S^{\text{T}} - \eye) \smla_l$, $e_2 = E[\boldeps \smlw_i^{\text{T}} (\W_S\W_S^{\text{T}} - \eye) \A\x^* \mid S]$, $e_3 = g_{i,S}^{(3)}$, and $e_4 = g_{i,S}^{(4)}$. We compute $g_{i,S}^{(5)}$ and $g_{i,S}^{(6)}$ next.
%%%%%%%
\begin{equation}
\begin{aligned}
g_{i,S}^{(5)} &= E[(-b_i) (\frac{1}{4} ( \W_S\W_S^{\text{T}} - \eye) (\A\x^*) - \frac{1}{4} \W_S \bvec_S) \mid S]
= \frac{1}{4} b_i^2 \smlw_i + \frac{1}{4} b_i \sum_{j \in S^{\backslash i}} \smlw_j b_j\\
\end{aligned}
\end{equation}
\begin{equation}
\begin{aligned}
g_{i,S}^{(6)} &= E[ \left( ( \A\x^* + 4 \boldeps) \smlw_i^{\text{T}} + \smlw_i^{\text{T}} (\A\x^* + 4 \boldeps) \eye \right) (- \frac{1}{4} \W_S \bvec_S) \mid S] = 0
\end{aligned}
\end{equation}
%%%%%%%
\noindent We denote $\beta = E[r_1 + r_2 + e_1 + e_2 + e_3 + e_4] + \gamma$. Combining the terms,
%%%%%%%
\begin{equation}
\begin{aligned}
g_i &= E[\frac{1}{4} \nu \smlw_i^{\text{T}} \smla_i (\W_S\W_S^{\text{T}} - \eye) \smla_i + \frac{1}{4} \nu \smla_i \smlw_i^{\text{T}}(\W_S\W_S^{\text{T}} - \eye) \smla_i + \frac{1}{4} b_i^2 \smlw_i + \frac{1}{4} b_i \sum_{j \in S^{\backslash i}} \smlw_j b_j] + \beta \\
&= E[- \frac{1}{2} \nu \tau_i \smla_i + \frac{1}{4} \nu \tau_i \sum_{j \in S} \smlw_j \smlw_j^{\text{T}} \smla_i + \frac{1}{4} \nu \smla_i \smlw_i^{\text{T}} \sum_{j \in S} \smlw_j \smlw_j^{\text{T}} \smla_i + \frac{1}{4} b_i^2 \smlw_i + \frac{1}{4} b_i \sum_{j \in S^{\backslash i}} \smlw_j b_j] + \beta \\
&= E[- \frac{1}{2} \nu \tau_i \smla_i + \frac{1}{4} \nu \tau_i^2 \smlw_i + \frac{1}{4} \nu \tau_i \sum_{j \in S^{\backslash i}} \smlw_j \smlw_j^{\text{T}} \smla_i +  \frac{1}{4} \nu \tau_i  \| \smlw_i\|_2^2 \smla_i\\
&+ \frac{1}{4} \nu \left(\smla_i \smlw_i^{\text{T}} \right) \sum_{j \in S^{\backslash i}} \smlw_j \smlw_j^{\text{T}} \smla_i + \frac{1}{4} b_i^2 \smlw_i + \frac{1}{4} b_i \sum_{j \in S^{\backslash i}} \smlw_j b_j] + \beta \\
&= - \frac{1}{4} p_i \nu \tau_i \smla_i + p_i \frac{1}{4} (\nu \tau_i^2 + b_i^2) \smlw_i + \zeta + \beta,
\end{aligned}
\end{equation}
%%%%%%%
\noindent where $\zeta = \sum_{j \in [p]^{\backslash i}} \frac{1}{4} p_{ij} \nu \tau_i \smlw_j \smlw_j^{\text{T}} \smla_i + \frac{1}{4} p_{ij} \nu \smla_i \smlw_i^{\text{T}} \smlw_j \smlw_j^{\text{T}} \smla_i + \frac{1}{4} p_{ij} b_i b_j \smlw_j$. We continue
%%%%%%%
\begin{equation}
g_i = \frac{1}{4} p_i \nu \tau_i (\smlw_i - \smla_i)  + v,
\end{equation}
\noindent where we denote $v =\frac{1}{4} p_i (\nu \tau_i (\tau_i - 1) + b_i^2) \smlw_i + \zeta + \beta$.
%%%%%%%%%%%%%%%%%%%%%%%%%%%%%%
%%%%%%%%%%%%%%%%%%%%%%%%%%%%%%
%%%%%%%%%%%%%%%%%%%%%%%%%%%%%%
\begin{lemma}\label{lemma:v_relu}
Suppose the generative model satisfies $(A1) - (A14)$. Then
\begin{equation}
\| v \|_2 \leq \frac{1}{4} p_i \tau_i \nu q \| \smlw_i - \smla_i \|_2  + O(\max(C_x^3 s \sqrt{s}, C_x^4 s^2))
\end{equation}
\end{lemma}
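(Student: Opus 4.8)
The plan is to bound $\|v\|_2$ term-by-term with the triangle inequality applied to the decomposition $v = \tfrac14 p_i(\nu\tau_i(\tau_i-1)+b_i^2)\,\smlw_i + \zeta + \beta$ established just above, isolating a single ``first-order'' contribution proportional to $\|\smlw_i-\smla_i\|_2$ and sweeping every other piece into the higher-order remainder of size $O(\max(C_x^3 s\sqrt{s},\,C_x^4 s^2))$. The only term that is merely first-order small is $\tfrac14 p_i(\nu\tau_i(\tau_i-1)+b_i^2)\smlw_i$: I would bound $|\nu\tau_i(\tau_i-1)+b_i^2|$ using assumption (A14), rewrite $1-\tau_i = \tfrac12\|\smlw_i-\smla_i\|_2^2$ via (D5), then use $q$-closeness $\|\smlw_i-\smla_i\|_2 \le q$ to trade one factor of $\|\smlw_i-\smla_i\|_2$ for $q$, and finally $\|\smlw_i\|_2 = 1$ from the normalization step in (A11). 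This produces exactly the $\tfrac14 p_i\tau_i\nu q\|\smlw_i-\smla_i\|_2$ term in the statement; everything else must be shown to be a constant (in $\|\smlw_i-\smla_i\|_2$) that fits inside the stated $O(\cdot)$.

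For $\zeta = \sum_{j\in[p]^{\backslash i}}\big(\tfrac14 p_{ij}\nu\tau_i\,\smlw_j\smlw_j^{\text{T}}\smla_i + \tfrac14 p_{ij}\nu\,\smla_i\smlw_i^{\text{T}}\smlw_j\smlw_j^{\text{T}}\smla_i + \tfrac14 p_{ij} b_i b_j\,\smlw_j\big)$ there are $p-1$ summands, each weighted by $p_{ij} = \Theta((s/p)^2)$. I would bound the two $\nu$-sums by factoring them through $\W_{\backslash i}\W_{\backslash i}^{\text{T}}$ and using $\|\W_{\backslash i}\|_2 = O(\sqrt{p/n})$ (from (A6)--(A7)), $\|\smla_i\|_2 = 1$ (A5), and $\nu \le C_x$ (A3); the $b_ib_j$-sum similarly by pulling out $\W_{\backslash i}$ and using that the bias sits at the code scale $O(C_x)$, as forced by the code-consistency conditions in (A4). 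With the factors of $p_{ij}$ and $1/n$ accounted for, each of these lands comfortably below $\max(C_x^3 s\sqrt{s}, C_x^4 s^2)$.

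The remaining work is $\beta = E[r_1+r_2+e_1+e_2+e_3+e_4] + \gamma$, the expectation being over the $s$-element support $S$. For $r_1 = \sum_{l\in S^{\backslash i}}\tfrac14\nu\,\smlw_i^{\text{T}}\smla_l(\W_S\W_S^{\text{T}}-\eye)\smla_l$ (and $r_2$ symmetrically) I would use the cross-coherence bound $|\smlw_i^{\text{T}}\smla_l| \le \eta/\sqrt{n}+q$ from (A10), $\|\W_S\W_S^{\text{T}}-\eye\|_2 = O(1)$ from (A6)--(A7), and $|S^{\backslash i}| = s-1$. For the Taylor-remainder terms the key estimate is $\|\boldeps\|_2 = O(s C_x^2)$: indeed $\epsilon_d = O([\A\x^*]_d^2)$ because $\nabla^2\sigma$ is bounded, and $\|\A\x^*\|_2 \le \|\A_S\|_2\|\x^*_S\|_2 = O(\sqrt{s}\,C_x)$ by (A2) and (A6); the same reasoning gives $\|\tilde\boldeps\|_2 = O(sC_x^2)$ for the encoder-side remainder. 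Then $e_1$ and $e_2$ each pick up one factor $\|\A\x^*\|_2 = O(\sqrt s\,C_x)$ times $\|\boldeps\|_2 = O(sC_x^2)$, yielding the $O(C_x^3 s\sqrt s)$ piece, while the genuinely new contribution in $e_3$ (and $e_4$) is the $\boldeps$--$\boldeps$ cross term, of order $O(\|\boldeps\|_2^2) = O(C_x^4 s^2)$, the second term of the max. Finally $\gamma$, the code-consistency discrepancy, is bounded following the argument in \cite{Nguyen2019} and is of lower order.

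The conceptual content is light; the main obstacle is the bookkeeping. The real care is in tracking which powers of $C_x$, $s$, $p$, and $n$ each of the roughly dozen terms carries, and in verifying that every one of them is dominated by $\max(C_x^3 s\sqrt s, C_x^4 s^2)$ --- in particular that the stray factors of $1/n$ and $\eta/\sqrt n$, and the extra power of $q$ coming from squaring the first-order term, are small enough relative to powers of $C_x$. This is precisely where the assumptions $C_x = O(p^{-\frac13-\xi})$, $\eta$-incoherence (A8), and $s < n \le p$ (A6) get used; getting the exponents to line up, rather than any single inequality, is the delicate part.
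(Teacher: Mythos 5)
Your decomposition of $v$ and your treatment of the first-order term (via (A14), (D5), $q$-closeness, and $\|\smlw_i\|_2=1$), of $\zeta$, of the Taylor remainders $\boldeps$ and $\tilde\boldeps$, and of $e_1$--$e_4$ and $\gamma$ all coincide with the paper's proof. The one place you depart is the bound on $r_1$ and $r_2$, and there the departure opens a genuine gap. You propose to bound $r_1=\sum_{l\in S^{\backslash i}}\tfrac14\nu\,\smlw_i^{\text{T}}\smla_l(\W_S\W_S^{\text{T}}-\eye)\smla_l$ conditionally on $S$, term by term, via $|\smlw_i^{\text{T}}\smla_l|\le \eta/\sqrt n+q$ and $|S^{\backslash i}|=s-1$, which yields $\|r_1\|_2=O(\nu s(\eta/\sqrt n+q))=O(C_x s(\eta/\sqrt n + q))$. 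Since $q$ is only required to be a small constant (the descent factor needs $1-2q^2>0$, nothing forces $q$ to be polynomially small in $p$) and $C_x=O(p^{-1/3-\xi})$, this is of order $C_x s$, which dominates $C_x^3 s\sqrt s=C_x s\cdot C_x^2\sqrt s$ whenever $C_x^2\sqrt s<1$, i.e.\ whenever $\sqrt s < p^{2/3+2\xi}$ --- precisely the regime the theorem operates in ($s<n\le p$). So your bound on $r_1,r_2$ does not fit inside the stated $O(\max(C_x^3 s\sqrt s,\,C_x^4 s^2))$ remainder.

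The missing idea is that $r_1,r_2$ must be bounded only \emph{after} taking the expectation over the random support $S$ (recall $\beta$ is defined through $E[r_1+r_2+\cdots]$). Averaging converts $\sum_{l\in S^{\backslash i}}$ into $\sum_{l\in[p]^{\backslash i}}p_{il}(\cdots)$, and the triple sums into $\sum p_{ijl}(\cdots)$, contributing factors $p_{il}=O(s^2/p^2)$ and $p_{ijl}=O(s^3/p^3)$; the resulting sums over all of $[p]$ then collapse into operator norms such as $\|\A\A^{\text{T}}\|_2=O(p/n)$ and $\|\W_{\backslash i}\|_2=O(\sqrt{p/n})$. This is how the paper arrives at $\|E[r_1]\|_2,\|E[r_2]\|_2\le O(s^2/(np))$, which is what actually sits below the stated remainder. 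A fixed-$S$ bound cannot capture this gain; even the sharper conditional estimate $\|r_1\|_2\le\tfrac{\nu}{4}\|\W_S\W_S^{\text{T}}-\eye\|_2\,\|\A_{S^{\backslash i}}\|_2^2\,\|\smlw_i\|_2=O(C_x)$ is still too large. Everything else in your outline is the paper's argument.
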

%%%%%%%
\begin{proof}
\begin{equation}
\begin{aligned}
\| \zeta \|_2  &=  \| \sum_{j \in [p]^{\backslash i}} \frac{1}{4} p_{ij} \nu \tau_i \smlw_j \smlw_j^{\text{T}} \smla_i + \frac{1}{4} p_{ij} \nu \smla_i \smlw_i^{\text{T}} \smlw_j \smlw_j^{\text{T}} \smla_i + \frac{1}{4} p_{ij} b_i b_j \smlw_j\|_2 \\
&= \| \frac{1}{4} p_{ij} \nu \tau_i  \W_{\backslash i} \W_{\backslash i}^{\text{T}} \smla_i + \frac{1}{4} p_{ij} \nu \smla_i \smlw_i^{\text{T}} \W_{\backslash i} \W_{\backslash i}^{\text{T}} \smla_i + \frac{1}{4} p_{ij} b_i \W_{\backslash i} b_{\backslash i}\|_2 \\
&\leq \frac{1}{4} p_{ij} \nu \tau_i \| \W_{\backslash i} \|_2^2 \| \smla_i\|_2 + \frac{1}{4} p_{ij} \nu \| \smlw_i \|_2 \| \W_{\backslash i}\|_2^2 \| \smla_i\|_2^2 + \frac{1}{4} p_{ij} | b_i | \| \W_{\backslash i} \|_2 \| b_{\backslash i} \|_2  \\
&= \frac{1}{4} O(\nu \tau_i s^2 / (np)) + \frac{1}{4} O(\nu s^2 / (np)) + \frac{1}{4} O(\sqrt{\frac{p}{n}} s^2/p^2) = O(s^2/(np)).
\end{aligned}
\end{equation}
\begin{equation}
\begin{aligned}
&\| E[ r_1] \|_2 = \| E[\sum_{l \in S^{\backslash i}} \frac{1}{4} \nu \smlw_i^{\text{T}} \smla_l ( \W_S\W_S^{\text{T}} - \eye) \smla_l] \|_2
= \| E[\sum_{l \in S^{\backslash i}} \frac{1}{4} \nu \smlw_i^{\text{T}} \smla_l  \W_S\W_S^{\text{T}} \smla_l - \sum_{l \in S^{\backslash i}} \frac{1}{4} \nu \smlw_i^{\text{T}} \smla_l \smla_l] \|_2 \\
=& \| \sum_{l \neq j \neq i} p_{ijl} \frac{1}{4} \nu \smlw_i^{\text{T}} \smla_l \smlw_j \smlw_j^{\text{T}} \smla_l +  \sum_{j \neq i} p_{ij} \frac{1}{4} \nu \smlw_i^{\text{T}} \smla_j \smlw_j \smlw_j^{\text{T}}\smla_j + p_{il} \frac{1}{4} \nu \smlw_i^{\text{T}} \smla_l \smlw_i \smlw_i^{\text{T}}\smla_l - \sum_{l \neq i} p_{il} \frac{1}{4} \nu \smlw_i^{\text{T}} \smla_l \smla_l \|_2
\leq O(s^2/(np)).
\end{aligned}
\end{equation}
\noindent where each terms is bounded as below
\begin{equation}
\begin{aligned}
\| \sum_{l \neq j \neq i} p_{ijl} \frac{1}{4} \nu \smlw_i^{\text{T}} \smla_l \smlw_j \smlw_j^{\text{T}} \smla_l \|_2 = \frac{1}{4} \nu  (s^3/p^3) \| \W_{\backslash i} \|_2 \leq O(s^3/(pn\sqrt{n})).
\end{aligned}
\end{equation}
\noindent where $z_j = \sum_{l \neq i,j} \smlw_i^{\text{T}} \smla_l\smlw_j^{\text{T}} \smla_l $, hence, $\| z \|_2 \leq O(\frac{p\sqrt{p}}{n})$.
\begin{equation}
\begin{aligned}
\| \sum_{j \neq i} p_{ij} \frac{1}{4} \nu \smlw_i^{\text{T}} \smla_j \smlw_j \smlw_j^{\text{T}}\smla_j \|_2 = \| \frac{1}{4} \nu (s^2/p^2) \| \W_{\backslash i} z \|_2 \leq O(s^2/(np)).
\end{aligned}
\end{equation}
\noindent where $z_j = \smlw_i^{\text{T}} \smla_j \smlw_j^{\text{T}}\smla_j$, hence, $\| z \|_2 \leq O(\frac{\sqrt{p}}{\sqrt{n}})$.
\begin{equation}
\begin{aligned}
\| \sum_{l \neq j \neq i} p_{ijl} \frac{1}{4} \nu \smlw_i^{\text{T}} \smla_l \smlw_j \smlw_j^{\text{T}} \smla_l \|_2 = \frac{1}{4} \nu  (s^3/p^3) \| \W_{\backslash i} \|_2 \leq O(s^3/(pn\sqrt{n})).
\end{aligned}
\end{equation}
\begin{equation}
\begin{aligned}
\| p_{il} \frac{1}{4} \nu \smlw_i^{\text{T}} \smla_l \smlw_i \smlw_i^{\text{T}}\smla_l \|_2  \leq O(s^2/(np^2)).
\end{aligned}
\end{equation}
\noindent Following a similar approach for $r_2$, we get
\begin{equation}
\begin{aligned}
\| E[r_2] \|_2 &= \| E[\sum_{l \in S^{\backslash i}} \frac{1}{4} \nu \smla_l \smlw_i^{\text{T}} (\W_S\W_S^{\text{T}} - \eye) \smla_l] \|_2 = \| E[ \sum_{l \in S^{\backslash i}} \frac{1}{4} \nu \smla_l \smlw_i^{\text{T}} (\W_S\W_S^{\text{T}})\smla_l - \sum_{l \in S^{\backslash i}} \frac{1}{4} \nu \smla_l \smlw_i^{\text{T}} \smla_l] \|_2\\
&= \| \sum_{l \neq j \neq i} p_{ijl} \frac{1}{4} \nu \smla_l \smlw_i^{\text{T}} \smlw_j \smlw_j^{\text{T}} \smla_l +  \sum_{j \neq i} p_{ij} \frac{1}{4} \nu \smla_j \smlw_i^{\text{T}} \smlw_j \smlw_j^{\text{T}}\smla_j + p_{il} \frac{1}{4} \nu \smla_l \smlw_i^{\text{T}} \smlw_i \smlw_i^{\text{T}}\smla_l - \sum_{l \neq i} p_{il} \frac{1}{4} \nu \smlw_i^{\text{T}} \smla_l \smla_l \|_2\\
&= \| \sum_{l \neq j \neq i} p_{ijl} \frac{1}{4} \nu \smla_l \smlw_i^{\text{T}} \smlw_j \smlw_j^{\text{T}} \smla_l +  \sum_{j \neq i} p_{ij} \frac{1}{4} \nu \smla_j \smlw_i^{\text{T}} \smlw_j \smlw_j^{\text{T}}\smla_j \|_2
\leq O(s^2/(np)).
\end{aligned}
\end{equation}
\noindent Next, we bound $\| \boldeps \|_2$. We know that Hessian of sigmoid is bounded (i.e., $\| \nabla^2\sigma(u_t) \|_2 \leq C \approx 0.1$). We denote row $t$ of the matrix $\A$ by $\tilde \smla_t$.
\begin{equation}
\begin{aligned}
\| \boldeps \|_2 &\leq \sum_{t=1}^n \| (\tilde \smla^{\text{T}}_{t,S} \x_S^*)^2 \nabla^2 \sigma(u_t) \|_2 \leq \sum_{t=1}^n \| \tilde \smla^{\text{T}}_{t,S} \x_S^* \|_2^2 \| \nabla^2 \sigma(u_t) \|_2
\leq \| \A_S \|_2^2 \|\x_S^* \|_2^2 \| \nabla^2 \sigma(u_t) \|_2 \leq O(C_x^2 s).
\end{aligned}
\end{equation}
\noindent Following a similar approach, we get
\begin{equation}
\begin{aligned}
\| \tilde \boldeps \|_2 &\leq \sum_{t=1}^n \| [4\W_S \W_S^{\text{T}} (\mu - \frac{1}{2})  - \W_S \bvec_S]_t^2 \nabla^2 \sigma(u_t) \|_2
\leq (\| 4\W_S \W_S^{\text{T}} (\mu - \frac{1}{2}) \|_2^2 + \| \W_S \bvec_S\|_2^2) \| \nabla^2 \sigma(u_t) \|_2\\
&\leq O(C \| \W_S \|_2^2 \| \A_S \x_S^* + \boldeps \|_2^2) \leq O(C_x^2 s)
\end{aligned}
\end{equation}
\noindent So,
\begin{equation}
\| \smlw_i^{\text{T}} \boldeps (\W_S\W_S^{\text{T}} - \eye) \A\x^* \|_2 \leq \| \smlw_i \|_2 \| \boldeps \|_2 (\|\W_S^{\text{T}} \|_2^2 +1) \| \A_S \|_2 \| \x_S^* \|_2 \leq O(C_x^3 s \sqrt{s}).
\end{equation}
%%%%%%%
\noindent Hence, 
\begin{equation}
\| E[e_1] \|_2 \leq O(C_x^3 s \sqrt{s}).
\end{equation}
%%%%%%%
\noindent Similarly, we have $\| E[e_2] \|_2 \leq O(C_x^3 s \sqrt{s})$.
\begin{equation}
\begin{aligned}
&\|  \left(( \A\x^* + \boldeps) \smlw_i^{\text{T}} + \smlw_i^{\text{T}} (\A\x^* + \boldeps) \eye \right) ((\W_S\W_S^{\text{T}} - \eye) \boldeps + \tilde \boldeps) \|_2\\
&\leq 2(\| \A_S \|_2 \| \x_S^* \|_2  + \| \boldeps \|_2) \| \smlw_i \|_2 \left( (\|\W_S^{\text{T}} \|_2^2 + 1) \| \boldeps \|_2 + \| \tilde \boldeps \|_2\right)\\
&= O((\| \x_S^* \|_2  + \| \boldeps \|_2) \| \boldeps \|_2) \leq O(\max(C_x^3 s \sqrt{s}, C_x^4 s^2)).
\end{aligned}
\end{equation}
%%%%%%%
\noindent Hence,
\begin{equation}
\| E[e_3] \|_2 \leq O(\max(C_x^3 s \sqrt{s}, C_x^4 s^2)).
\end{equation}
\noindent We have
\begin{equation}
\begin{aligned}
&\| (-b_i) ((\W_S\W_S^{\text{T}} - \eye) \boldeps + \tilde \boldeps) \|_2 \leq O(C_x^2 s)
\end{aligned}
\end{equation}
\noindent Hence,
\begin{equation}
\| E[e_4] \|_2 \leq O(\max(C_x^2 s).
\end{equation}
\noindent Using the above bounds, we have
\begin{equation}
\| \beta \|_2 \leq O(\max(C_x^3 s \sqrt{s}, C_x^4 s^2)).
\end{equation}
\noindent Using (A14), we get
\begin{equation}
\begin{aligned}
\| v \|_2 &= \| \frac{1}{4} p_i (\nu \tau_i ( \tau_i - 1) + b_i^2) \smlw_i + \zeta + \beta \|_2
\leq \frac{1}{4} p_i |\nu \tau_i (\tau_i - 1) + b_i^2| \|\smlw_i \|_2 + \| \zeta \|_2 + \| \beta \|_2\\
&\leq \frac{1}{4} p_i (2 \nu \tau_i (1 - \tau_i)) + \| \zeta \|_2 + \| \beta \|_2
\leq \frac{1}{4} p_i \nu \tau_i q \| \smlw_i - \smla_i \|_2 + \| \zeta \|_2 + \| \beta \|_2\\
&\leq \frac{1}{4} p_i \nu \tau_i q \| \smlw_i - \smla_i \|_2  + O(\max(C_x^3 s \sqrt{s}, C_x^4 s^2)).
\end{aligned}
\end{equation}
\end{proof}
%%%%%%%%%%%%%%%%%%%%%%%%%%%%%%
%%%%%%%%%%%%%%%%%%%%%%%%%%%%%%
%%%%%%%%%%%%%%%%%%%%%%%%%%%%%%
\begin{lemma}\label{lemma:dir_relu}
Suppose the generative model satisfies $(A1) - (A14)$. Then
%%%%%%%
\begin{equation}
2 \langle g_{i}, \smlw_i - \smla_i \rangle \geq (\frac{1}{4} \nu \tau_i s/p) (1 -  2q^2) \| \smlw_i - \smla_i \|^2_2 + \frac{1}{(\frac{1}{4} \nu \tau_i s/p)} \| g_{i} \|_2^2 - O(C_x^6 p\max(s^2/ \tau_i, C_x^2 s^3 / \tau_i)).
\end{equation}
%%%%%%%
\end{lemma}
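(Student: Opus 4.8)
The plan is to read everything off the decomposition established just above the lemma, $g_i = \tfrac14 p_i \nu \tau_i (\smlw_i - \smla_i) + v$ with $p_i = s/p$, combined with the norm bound on $v$ supplied by Lemma~\ref{lemma:v_relu}. Set $\rho_i := \tfrac14 \nu \tau_i s/p$, so that $g_i = \rho_i (\smlw_i - \smla_i) + v$, i.e. $v = g_i - \rho_i(\smlw_i - \smla_i)$. Note $\rho_i > 0$: by (D5) and the $(q,2)$-nearness in (A7) we have $\|\smlw_i - \smla_i\|_2 \le q$, hence $\tau_i = 1 - \tfrac12\|\smlw_i - \smla_i\|_2^2 \ge 1 - \tfrac{q^2}{2} > 0$ for $q$ small, while $\nu > 0$ by (A3), so dividing by $\rho_i$ below is legitimate.

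The single identity doing the work is the elementary completion-of-squares relation: for any $\rho_i > 0$,
\begin{equation}
2\langle g_i, \smlw_i - \smla_i\rangle = \rho_i \|\smlw_i - \smla_i\|_2^2 + \frac{1}{\rho_i}\|g_i\|_2^2 - \frac{1}{\rho_i}\|g_i - \rho_i(\smlw_i - \smla_i)\|_2^2 = \rho_i \|\smlw_i - \smla_i\|_2^2 + \frac{1}{\rho_i}\|g_i\|_2^2 - \frac{1}{\rho_i}\|v\|_2^2.
\end{equation}
This already delivers the first two terms of the claimed inequality --- in particular the $\frac{1}{\rho_i}\|g_i\|_2^2$ term is exactly $\frac{1}{(\frac14\nu\tau_i s/p)}\|g_i\|_2^2$ --- so the whole lemma reduces to bounding $\frac{1}{\rho_i}\|v\|_2^2$ from above.

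For that I would plug in Lemma~\ref{lemma:v_relu}, $\|v\|_2 \le \rho_i q \|\smlw_i - \smla_i\|_2 + O(\max(C_x^3 s\sqrt s,\, C_x^4 s^2))$, square it with $(a+b)^2 \le 2a^2 + 2b^2$, and divide by $\rho_i$, obtaining
\begin{equation}
\frac{1}{\rho_i}\|v\|_2^2 \le 2\rho_i q^2 \|\smlw_i - \smla_i\|_2^2 + \frac{2}{\rho_i}\,O\!\left(\max(C_x^6 s^3,\, C_x^8 s^4)\right).
\end{equation}
Substituting into the identity and collecting the $\|\smlw_i - \smla_i\|_2^2$ terms leaves the coefficient $\rho_i(1 - 2q^2) = (\tfrac14\nu\tau_i s/p)(1 - 2q^2)$, exactly as stated. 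For the residual, $\frac{1}{\rho_i} = O(p/(\nu\tau_i s))$, so $\frac{2}{\rho_i}O(\max(C_x^6 s^3, C_x^8 s^4)) = O(\frac{p}{\nu\tau_i}\max(C_x^6 s^2, C_x^8 s^3)) = O(C_x^6 p \max(s^2/\tau_i,\, C_x^2 s^3/\tau_i))$, absorbing the $1/\nu$ into the constant.

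I do not anticipate a real obstacle here: the probabilistic and incoherence estimates all live inside Lemma~\ref{lemma:v_relu}, and what remains is the standard ``correlated-with-the-direction'' manipulation from the dictionary-learning literature (cf.~\cite{Nguyen2019}). The only points needing a little care are confirming $\rho_i>0$ (done above), deciding how to split the cross term --- the crude $2a^2+2b^2$ already yields the advertised $1-2q^2$, though a weighted Young's inequality would let one trade the constant in front of $q^2$ against the size of the error --- and the routine big-$O$ bookkeeping in the last display, in particular that multiplying the squared error by $p/(\nu\tau_i s)$ drops one power of $s$ and produces the $\max(s^2/\tau_i,\, C_x^2 s^3/\tau_i)$ form.
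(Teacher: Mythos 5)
Your proposal is correct and follows essentially the same route as the paper: the paper likewise writes $g_i = \frac{1}{4}p_i\nu\tau_i(\smlw_i-\smla_i)+v$, derives the identity $2\langle g_i,\smlw_i-\smla_i\rangle = \frac{1}{4}p_i\nu\tau_i\|\smlw_i-\smla_i\|_2^2 + \frac{1}{\frac{1}{4}p_i\nu\tau_i}\|g_i\|_2^2 - \frac{1}{\frac{1}{4}p_i\nu\tau_i}\|v\|_2^2$ (via expanding $\|g_i\|_2^2$ and isolating the cross term, which is the same completion-of-squares algebra you use), and then bounds $\|v\|_2^2$ by squaring Lemma~\ref{lemma:v_relu} with the same $(a+b)^2\le 2a^2+2b^2$ step to get the $(1-2q^2)$ coefficient and the $O(C_x^6 p\max(s^2/\tau_i, C_x^2 s^3/\tau_i))$ residual. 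Your explicit check that $\rho_i>0$ is a small addition the paper leaves implicit.
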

\begin{proof} From Lemma~\ref{lemma:v_relu}, we have
\begin{equation}
\| v \|_2 \leq \frac{1}{4} p_i \tau_i \nu q \| \smlw_i - \smla_i \|_2  + O(\max(C_x^3 s \sqrt{s}, C_x^4 s^2)).
\end{equation}
\noindent Hence,
\begin{equation}
\| v \|_2^2 \leq 2(\frac{1}{4} \tau_i \nu q s/p)^2 \| \smlw_i - \smla_i \|_2^2  + O(\max(C_x^6 s^3, C_x^8 s^4)).
\end{equation}
\noindent We have $g_i = \frac{1}{4} p_i \nu \tau_i (\smlw_i - \smla_i)  + v$. Taking the norm,
\begin{equation}
 \| g_i \|_2^2 = (\frac{1}{4} p_i \nu \tau_i)^2  \| \smlw_i - \smla_i \|_2^2  +  \| v \|_2^2 + 2 (\frac{1}{4} p_i \nu \tau_i) \langle v, \smlw_i - \smla_i \rangle.
\end{equation}
\begin{equation}
2 \langle v, \smlw_i - \smla_i \rangle =  - (\frac{1}{4} p_i \nu \tau_i)  \| \smlw_i - \smla_i \|_2^2 + \frac{1}{(\frac{1}{4} p_i \nu \tau_i)} \| g_i \|_2^2 - \frac{1}{(\frac{1}{4} p_i \nu \tau_i)} \| v \|_2^2.
\end{equation}
%%%%%%%
\begin{equation}
\begin{aligned}
&2 \langle g_{i}, \smlw_i - \smla_i \rangle = \frac{1}{4} p_i \nu \tau_i \| \smlw_i - \smla_i \|^2_2 + \frac{1}{(\frac{1}{4} p_i \nu \tau_i)} \| g_{i} \|_2^2 - \frac{1}{(\frac{1}{4} p_i \nu \tau_i)} \| v \|_2^2\\
&\geq (\frac{1}{4} \nu \tau_i s/p) (1 -  2q^2) \| \smlw_i - \smla_i \|^2_2 + \frac{1}{(\frac{1}{4} \nu \tau_i s/p)} \| g_{i} \|_2^2 - O(C_x^6 p\max(s^2/ \tau_i, C_x^2 s^3 / \tau_i)).
\end{aligned}
\end{equation}
\end{proof}
%%%%%%%
\noindent Intuitively, Lemma \ref{lemma:dir_relu} suggests that the gradient is approximately along the same direction as $\smlw_i - \smla_i$, so at every iteration of the gradient descent, $\smlw_i$ gets closer and closer to $\smla_i$. Given Lemma \ref{lemma:dir_relu}, rigorously, from the descent property of Theorem 6 in~\cite{Arora15}, we can see that given the learning rate $\kappa = \max_i(\frac{1}{\frac{1}{4} \nu \tau_i s/p})$, letting $\delta = \kappa (\frac{1}{4} \nu \tau_i s/p) (1 - 2q^2) \in (0,1)$, we have the descent property as follows
%%%%%%%
\begin{equation}
\| \smlw_i^{(l+1)} - \smla_i \|_2^2 \leq (1- \delta) \| \smlw_i^{(l)} - \smla_i \|_2^2 + \kappa \cdot O(C_x^6 p\max(s^2/ \tau_i, C_x^2 s^3 / \tau_i)).
\end{equation}
%%%%%%%%%%%%%%%%%%%%%%%%%%%%%%
%%%%%%%%%%%%%%%%%%%%%%%%%%%%%%
%%%%%%%%%%%%%%%%%%%%%%%%%%%%%%
\begin{lemma}\label{lemma:L_relu}
Suppose $\| \smlw_i^{(l+1)} - \smla_i \|_2^2 \leq (1- \delta) \| \smlw_i^{(l)} - \smla_i \|_2^2 + \kappa \cdot O(C_x^6 p\max(s^2/ \tau_i, C_x^2 s^3 / \tau_i))$ where $\delta = \kappa (\frac{1}{4} \nu \tau_i s/p) (1 - 2q^2) \in (0,1)$ and $O(\frac{C_x^4 p^2 \max(s, C_x^4 s^2)}{\tau_i^2 (1 -  2q^2)}) < \| \smlw_i^{(0)} - \smla_i \|_2^2$. Then
%%%%%%%
\begin{equation}
\begin{aligned}
\| \smlw_i^{(L)} - \smla_i \|_2^2 &\leq (1- \delta/2)^L \| \smlw_i^{(0)} - \smla_i \|_2^2.
\end{aligned}
\end{equation}
\end{lemma}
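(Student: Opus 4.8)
The plan is to forget the geometry entirely and read the statement as an elementary fact about a scalar linear recursion. Write $a_l := \| \smlw_i^{(l)} - \smla_i \|_2^2$ and let $b := \kappa\cdot O\big(C_x^6 p\max(s^2/\tau_i, C_x^2 s^3/\tau_i)\big)$ denote the additive term, so that the inequality fed into this lemma (obtained from Lemma~\ref{lemma:dir_relu} together with the descent property of Theorem~6 of~\cite{Arora15}) is exactly $a_{l+1}\le(1-\delta)a_l+b$, with $\delta=\kappa(\tfrac14\nu\tau_i s/p)(1-2q^2)\in(0,1)$. First I would isolate the ``noise floor'' $\theta:=2b/\delta$ and record two one-line facts: (i) if $a_l\ge\theta$ then $b\le\tfrac{\delta}{2}a_l$, hence $a_{l+1}\le(1-\delta)a_l+\tfrac{\delta}{2}a_l=(1-\tfrac{\delta}{2})a_l$, i.e.\ the step contracts by $(1-\tfrac{\delta}{2})$; and (ii) if $a_l<\theta$ then $a_{l+1}\le(1-\delta)\theta+b=\theta-b<\theta$, so once the iterate is below $\theta$ it stays below $\theta$. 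Substituting $b$ and $\delta$ and simplifying $\max(s^2/\tau_i,C_x^2 s^3/\tau_i)$, the quantity $\theta=2b/\delta$ is — up to the order-of-magnitude bookkeeping used throughout this section — the expression $O\big(\tfrac{C_x^4 p^2\max(s,C_x^4 s^2)}{\tau_i^2(1-2q^2)}\big)$ in the hypothesis; thus the assumption $O\big(\tfrac{C_x^4 p^2\max(s,C_x^4 s^2)}{\tau_i^2(1-2q^2)}\big)<\| \smlw_i^{(0)} - \smla_i \|_2^2$ is precisely the statement $a_0>\theta$.

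Then I would prove by induction on $l$ the invariant $a_l\le\max\big((1-\tfrac{\delta}{2})^l a_0,\ \theta\big)$. The base case $l=0$ holds because $a_0>\theta$. For the step, assume $a_l\le\max\big((1-\tfrac{\delta}{2})^l a_0,\theta\big)$ and split on the value of $a_l$: if $a_l\ge\theta$, fact (i) gives $a_{l+1}\le(1-\tfrac{\delta}{2})a_l\le(1-\tfrac{\delta}{2})^{l+1}a_0$ (using the inductive bound, which in this sub-case must be the $(1-\tfrac{\delta}{2})^l a_0$ branch since $a_l\ge\theta$); if $a_l<\theta$, fact (ii) gives $a_{l+1}<\theta$. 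Either way $a_{l+1}\le\max\big((1-\tfrac{\delta}{2})^{l+1}a_0,\theta\big)$, closing the induction. Taking $l=L$ gives $a_L\le\max\big((1-\tfrac{\delta}{2})^L a_0,\theta\big)$, and on the range of iterations where the error has not yet been driven down to the floor — i.e.\ where $(1-\tfrac{\delta}{2})^L a_0\ge\theta$, which is the regime of interest since $\theta$ is exactly the additive residual appearing in Theorem~\ref{theo:descentrelu} — the maximum is attained by the first term, yielding the claimed $\| \smlw_i^{(L)} - \smla_i \|_2^2\le(1-\tfrac{\delta}{2})^L\| \smlw_i^{(0)} - \smla_i \|_2^2$.

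The algebra in the first paragraph (tracking constants through $b/\delta$) is routine; the only point that needs care is the logical bookkeeping of the induction. One cannot induct on the naive bound $a_l\le(1-\tfrac{\delta}{2})^l a_0$ directly, because the recursion does not forbid $a_l$ from being much smaller than $(1-\tfrac{\delta}{2})^l a_0$, in which case the additive $b$ could in principle push $a_{l+1}$ above $(1-\tfrac{\delta}{2})^{l+1}a_0$; carrying the $\max(\cdot,\theta)$ invariant is exactly what resolves this, since once the iterate dips into the $\theta$-ball it cannot leave it. This is the standard device (as in~\cite{Nguyen2019}) for turning a one-step descent inequality into geometric convergence up to a statistical error floor, with the contraction rate degraded from $1-\delta$ to $1-\tfrac{\delta}{2}$ as the price of absorbing the $O(\cdot)$ term.
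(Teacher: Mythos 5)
Your proof is correct and reaches the same conclusion, but by a genuinely different route from the paper. The paper's proof simply unrolls the one-step recursion $L$ times, bounds the geometric sum by $\tfrac{1}{\delta}$ to obtain $\| \smlw_i^{(L)} - \smla_i \|_2^2 \leq (1-\delta)^L \| \smlw_i^{(0)} - \smla_i \|_2^2 + O\big(\tfrac{C_x^6 p^2 \max(s, C_x^2 s^2)}{\tau_i^2 (1-2q^2)}\big)$, and then invokes Theorem~6 of~\cite{Arora15} as a black box to convert the additive residual into a degradation of the rate from $1-\delta$ to $1-\delta/2$ under the hypothesis that this residual is smaller than the initial error. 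You instead replace the citation with a self-contained two-case induction carrying the invariant $a_l \le \max\big((1-\tfrac{\delta}{2})^l a_0,\, 2b/\delta\big)$, and you correctly identify why the naive induction on $a_l \le (1-\tfrac{\delta}{2})^l a_0$ alone would not close. What your approach buys is transparency: it makes explicit a caveat that both the lemma statement and the paper's proof gloss over, namely that the pure geometric bound can only hold while $(1-\tfrac{\delta}{2})^L a_0$ remains above the noise floor $\theta = 2b/\delta$ (for $L \to \infty$ with $b>0$ the iterate can stall at $\theta$, so the literal conclusion for all $L$ requires this extra regime restriction, which is implicit in the way~\cite{Arora15} is being used). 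What the paper's approach buys is brevity and uniformity with~\cite{Nguyen2019}, at the cost of leaning on an external theorem. Two minor bookkeeping points: your identification of $\theta$ with the hypothesis' threshold inherits the paper's own inconsistency between the exponents of $C_x$ in the lemma statement ($C_x^4$, $C_x^4 s^2$) and those derived in the proof ($C_x^6$, $C_x^2 s^2$), which is not your error; and in the inductive step the sub-case $a_l \ge \theta$ with the $\max$ attained at $\theta$ forces $a_l = \theta$, which still preserves the invariant, so your parenthetical claim that the $(1-\tfrac{\delta}{2})^l a_0$ branch must be active is slightly imprecise but harmless.
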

%%%%%%%
\begin{proof}
\noindent Performing the gradient update $L$ times, 
\begin{equation}
\begin{aligned}
\| \smlw_i^{(L)} - \smla_i \|_2^2 &\leq (1- \delta)^L \| \smlw_i^{(0)} - \smla_i \|_2^2 + \frac{1}{(\frac{1}{4} \nu \tau_i s/p) (1 -  2q^2)} O(C_x^6 p\max(s^2/ \tau_i, C_x^2 s^3 / \tau_i))\\
&\leq (1- \delta)^L \| \smlw_i^{(0)} - \smla_i \|_2^2 + O(\frac{C_x^6 p^2 \max(s, C_x^2 s^2)}{\tau_i^2 (1 -  2q^2)}).
\end{aligned}
\end{equation}
\noindent From Theorem 6 in~\cite{Arora15}, if $O(\frac{C_x^6 p^2 \max(s, C_x^2 s^2)}{\tau_i^2 (1 -  2q^2)}) < \| \smlw_i^{(0)} - \smla_i \|_2^2$, then we have
\begin{equation}
\begin{aligned}
\| \smlw_i^{(L)} - \smla_i \|_2^2 &\leq (1- \delta/2)^L \| \smlw_i^{(0)} - \smla_i \|_2^2.
\end{aligned}
\end{equation}
\end{proof}
%%%%%%%
\begin{corollary}\label{cor:1_relu}
Given (A2), the condition of Lemma~\ref{lemma:L_relu} is simplified to $O(\frac{\max(s,  s^2/ p^{\frac{2}{3} + 2\xi})}{p^{6\xi}\tau_i^2 (1 -  2q^2)}) < \| \smlw_i^{(0)} - \smla_i \|_2^2$.
\end{corollary}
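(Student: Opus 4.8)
The plan is purely computational: I would substitute the amplitude bound from assumption (A2) into the sufficient condition on the initialization established in the proof of Lemma~\ref{lemma:L_relu}, namely
\begin{equation}
O\!\left(\frac{C_x^6\, p^2 \max(s,\, C_x^2 s^2)}{\tau_i^2 (1 - 2q^2)}\right) < \| \smlw_i^{(0)} - \smla_i \|_2^2 ,
\end{equation}
and then collect the powers of $p$. First I would use (A2), which gives $C_x = O(p^{-(1/3 + \xi)})$, to evaluate the leading factor: $C_x^6 = O(p^{-6(1/3 + \xi)}) = O(p^{-(2 + 6\xi)})$, so that $C_x^6 p^2 = O(p^{-6\xi})$. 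This cancellation of the $p^2$ is the crux of the manipulation, and is precisely why the exponent $1/3$ was chosen in the hypothesis on $C_x$.

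Next I would simplify the argument of the maximum. Again by (A2), $C_x^2 = O(p^{-(2/3 + 2\xi)})$, hence $C_x^2 s^2 = O(s^2 / p^{2/3 + 2\xi})$, and since the order of a maximum of nonnegative quantities is the maximum of their orders, $\max(s,\, C_x^2 s^2) = O\big(\max(s,\, s^2/p^{2/3 + 2\xi})\big)$. Multiplying this by the $O(p^{-6\xi})$ factor and leaving the denominator $\tau_i^2 (1 - 2q^2)$ untouched yields
\begin{equation}
O\!\left(\frac{\max(s,\, s^2/p^{\frac{2}{3} + 2\xi})}{p^{6\xi}\, \tau_i^2 (1 - 2q^2)}\right) < \| \smlw_i^{(0)} - \smla_i \|_2^2 ,
\end{equation}
which is exactly the claimed simplified condition.

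There is no genuine technical obstacle here; the only care needed is in the big-$O$ bookkeeping. One should check that the implicit constants are absolute — they depend only on the incoherence parameter $\eta$, the fixed step size $4$ in Algorithm~\ref{algo:sea}, and the norm bounds (A5)--(A8) — so that the condition holds uniformly in $i$, and that the inequality is nonvacuous only when $q < 1/\sqrt{2}$, so that $1 - 2q^2 > 0$, which is consistent with $\W$ being $(q,2)$-near $\A$ for small $q$. It is also worth noting at this point that the right-hand side can be taken of constant order precisely when $p$ grows fast enough relative to $s$ (e.g.\ $s = O(p^{1/3})$ forces $s^2/p^{2/3 + 2\xi} = O(p^{-2\xi}) = o(1)$), which is the regime advertised in the discussion following Theorem~\ref{theo:descentrelu}.
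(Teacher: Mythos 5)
Your proposal is correct and is exactly the substitution the paper intends: with $C_x = O(p^{-(1/3+\xi)})$ one gets $C_x^6 p^2 = O(p^{-6\xi})$ and $C_x^2 s^2 = O(s^2/p^{2/3+2\xi})$, which reproduces the stated condition. Note that you (correctly) started from the bound $O\bigl(C_x^6 p^2 \max(s, C_x^2 s^2)/(\tau_i^2(1-2q^2))\bigr)$ appearing in the \emph{proof} of Lemma~\ref{lemma:L_relu} rather than the expression $C_x^4 p^2 \max(s, C_x^4 s^2)$ in that lemma's statement; only the former is consistent with the corollary, so the latter appears to be a typo in the paper.
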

%%%%%%%
\noindent The intuition behind the bound on the amplitude of $\x^*$ in (A2) is that as $C_x$ gets smaller, the range of $\sigma(\A\x^*)$ is concentrated around the linear region of the sigmoid function (i.e., around $\sigma(\mathbf{0})$); thus $\boldeps$, which is the difference between $\sigma(\A\x^*)$ and the linear region of sigmoid $\frac{1}{2} + \frac{1}{4} \A\x^*$, is smaller. Hence, the upper bound on $\| v \|_2$ would be smaller and $O(\frac{\max(s,  s^2/ p^{\frac{2}{3} + 2\xi})}{p^{6\xi}\tau_i^2 (1 -  2q^2)})$ would get smaller.
%%%%%%%%%%%%%%%%%%%%%%%%%%%%%%%%%%%%%%%%%%%%%%%%%%%%%%%%%%%%%%%%%%%%%%%%%%%%%%%%%%%%%%%%%%%%%%%%%%%%%%%%%%%%%%%%%%%%%%%%%%%%%%%%%%%%%%%%%%%%%%%%%%%%%%
\subsection{Sparse coding with $\prox_{\bvec} = \text{HT}_{\bvec}$}
\subsubsection{Gradient derivation}
This is section, we derive $g_i$ for the case when $\prox_{\bvec} = \text{HT}_{\bvec}$ following a similar approach to the previous section.
\begin{equation}
\begin{aligned}
&\lim_{M \rightarrow \infty} \frac{\partial \mathcal{L}_{\W}}{\partial \smlw_i} = \lim_{M \rightarrow \infty} \frac{\partial \cvec_1}{\partial \smlw_i}  \frac{\partial \mathcal{L}_{\W}}{\partial \cvec_1} + \frac{\partial \cvec_{2}}{\partial \smlw_i}  \frac{\partial \mathcal{L}_{\W}}{\partial \cvec_2} = \frac{\partial \cvec_1}{\partial \smlw_i} \frac{\partial \x}{\partial \smlc_1} \frac{\partial \cvec_2}{\partial \x}\ \frac{\mathcal{L}_{\W}}{\partial \cvec_2}+ \frac{\partial \cvec_{2}}{\partial \smlw_i} \frac{\partial \mathcal{L}_{\W}}{\partial \cvec_2}\\
&= \left(\underbrace{[0, 0, \ldots, 4(\boldmu - \frac{1}{2}), \ldots, 0]}_{n \times p} \underbrace{\text{diag}(\prox_{\bvec}^{\prime}(\cvec_{1}))}_{p \times p} \W^{\text{T}} + \one_{\x_i \neq 0} \smlw_i^{\text{T}} 4(\boldmu - \frac{1}{2}) \eye \right) \left(- \sigma(\A\x^*) +  \sigma(4\W_S\W_S^{\text{T}} (\boldmu - \frac{1}{2}))\right)\\
&=  \left( \prox_{\bvec}^{\prime}(\cvec_{1,i}) 4(\boldmu - \frac{1}{2}) \smlw_i^{\text{T}} + \one_{\x_i \neq 0} \smlw_i^{\text{T}} 4(\boldmu - \frac{1}{2}) \eye \right) \left(\sigma(4\W_S\W_S^{\text{T}} (\boldmu -\frac{1}{2})) - \sigma(\A\x^*)\right).
\end{aligned}
\end{equation}
%%%%%%%
\noindent We further expand the gradient, by replacing $\sigma(.)$ with its Taylor expansion. We have
\begin{equation}
\sigma(\A\x^*) = \frac{1}{2} + \frac{1}{4} \A\x^* + \boldeps,
\end{equation}
\noindent where $\boldeps = [\epsilon_1, \ldots, \epsilon_n]^{\text{T}}$, $\epsilon_d = \nabla^2\sigma(u_d)([\A\x^*]_d)^2$, and  $0 \leq u_d \leq [\A\x^*]_d$. Similarly,
\begin{equation}
\sigma(4\W_S\W_S^{\text{T}} (\boldmu - \frac{1}{2})) = \frac{1}{2} + \W_S\W_S^{\text{T}} (\boldmu - \frac{1}{2}) + \tilde \boldeps,
\end{equation}
\noindent where $\tilde \boldeps = [\tilde \epsilon_1, \ldots, \tilde \epsilon_n]^{\text{T}}$, $\tilde \epsilon_d = \nabla^2\sigma(\tilde u_d) ([4\W_S\W_S^{\text{T}} (\boldmu - \frac{1}{2})]_d)^2$ , and $0 \leq \tilde u_d \leq [4\W_S\W_S^{\text{T}} (\boldmu - \frac{1}{2})]_d$. Again, replacing $\boldmu$ with Taylor expansion of $\sigma(\A\x^*)$, we get
\begin{equation}
\sigma(4\W_S\W_S^{\text{T}} (\boldmu - \frac{1}{2})) = \frac{1}{2} + \W_S\W_S^{\text{T}} (\frac{1}{4} \A\x^* + \boldeps) + \tilde \boldeps.
\end{equation}
\noindent By symmetry, $E[\boldeps \mid S] = E[\tilde \boldeps \mid S] = 0$. The expectation of gradient $g_i$ would be
\begin{equation}
g_i = E[\left( \one_{\x_i \neq 0} ( \A\x^* + 4 \boldeps) \smlw_i^{\text{T}} + \one_{\x_i \neq 0} \smlw_i^{\text{T}} (\A\x^* + 4 \boldeps) \eye \right) (\frac{1}{4} ( \W_S\W_S^{\text{T}} - \eye) (\A\x^*) + (\W_S\W_S^{\text{T}} - \eye) \boldeps + \tilde \boldeps)].
\end{equation}
%%%%%%%%%%%%%%%%%%%%%%%%%%%%%%
%%%%%%%%%%%%%%%%%%%%%%%%%%%%%%
%%%%%%%%%%%%%%%%%%%%%%%%%%%%%%
\subsubsection{Gradient dynamics}
\noindent Given the code consistency from the forward pass of the encoder, we replace $\one_{\x_i \neq 0}$ with $\one_{\x_i^* \neq 0}$ and denote the error by $\gamma$ as below which is small for large $p$~\cite{Nguyen2019}.
\begin{equation}
\gamma = E[(\one_{\x_i^* \neq 0} - \one_{\x_i \neq 0}) \left( ( \A\x^* + 4 \boldeps) \smlw_i^{\text{T}} +\smlw_i^{\text{T}} (\A\x^* + 4 \boldeps) \eye \right) (\frac{1}{4} ( \W_S\W_S^{\text{T}} - \eye) (\A\x^*) + (\W_S\W_S^{\text{T}} - \eye) \boldeps + \tilde \boldeps)].
\end{equation}
\noindent Now, we write $g_i$ as
\begin{equation}
g_i = E[\one_{\x_i^* \neq 0} \left( ( \A\x^* + 4 \boldeps) \smlw_i^{\text{T}} + \smlw_i^{\text{T}} (\A\x^* + 4 \boldeps) \eye \right) (\frac{1}{4} ( \W_S\W_S^{\text{T}} - \eye) (\A\x^*) + (\W_S\W_S^{\text{T}} - \eye) \boldeps + \tilde \boldeps)] + \gamma.
\end{equation}
%%%%%%%
\noindent We can see that if $i \notin S$ then $\one_{\x_i^*} = 0$ hence, $g_i = 0$. Thus, in our analysis, we only consider the case $i \in S$. We decompose $g_i$ as below.
\begin{equation}
g_i = g_i^{(1)} + g_i^{(2)} + g_i^{(3)} + \gamma,
\end{equation}
\noindent where
\begin{equation}
\begin{aligned}
g_i^{(1)} &= E[\frac{1}{4} \smlw_i^{\text{T}} (\A\x^* + 4 \boldeps) (\W_S\W_S^{\text{T}} - \eye) \A\x^*].
\end{aligned}
\end{equation}
\begin{equation}
\begin{aligned}
g_i^{(2)} &= E[\frac{1}{4} (\A\x^* + 4 \boldeps) \smlw_i^{\text{T}} (\W_S\W_S^{\text{T}} - \eye) \A\x^*] .
\end{aligned}
\end{equation}
\begin{equation}
\begin{aligned}
g_i^{(3)} &= E[\left( ( \A\x^* + 4 \boldeps) \smlw_i^{\text{T}} + \smlw_i^{\text{T}} (\A\x^* + 4 \boldeps) \eye \right) ((\W_S\W_S^{\text{T}} - \eye) \boldeps + \tilde \boldeps)].
\end{aligned}
\end{equation}
%%%%%%%
\noindent We define
\begin{equation}
\begin{aligned}
g_{i,S}^{(1)} &= E[\frac{1}{4} \smlw_i^{\text{T}} (\A\x^* + 4 \boldeps) (\W_S\W_S^{\text{T}} - \eye) \A\x^* \mid S].
\end{aligned}
\end{equation}
\begin{equation}
\begin{aligned}
g_{i,S}^{(2)} &= E[\frac{1}{4} (\A\x^* + 4 \boldeps) \smlw_i^{\text{T}} (\W_S\W_S^{\text{T}} - \eye) \A\x^* \mid S].
\end{aligned}
\end{equation}
\begin{equation}
\begin{aligned}
g_{i,S}^{(3)} &= E[\left( ( \A\x^* + 4 \boldeps) \smlw_i^{\text{T}} + \smlw_i^{\text{T}} (\A\x^* + 4 \boldeps) \eye \right) ((\W_S\W_S^{\text{T}} - \eye) \boldeps + \tilde \boldeps) \mid S].
\end{aligned}
\end{equation}
%%%%%%%
\noindent Hence, $g_{i}^{(k)} = E[g_{i,S}^{(k)}]$ for $k=1, \ldots , 3$ where the expectations are with respect to the support S.
\begin{equation}
\begin{aligned}
g_{i,S}^{(1)} &= E[\frac{1}{4} \smlw_i^{\text{T}} (\A\x^* + 4 \boldeps) (\W_S\W_S^{\text{T}} - \eye) \A\x^* \mid S]\\
&= \sum_{j,l \in S}  E[\frac{1}{4} \smlw_i^{\text{T}} \smla_j \x_j^* (\W_S\W_S^{\text{T}} - \eye) \smla_l\x_l^* \mid S] + E[\smlw_i^{\text{T}} \boldeps (\W_S\W_S^{\text{T}} - \eye) \A\x^* \mid S]\\
&= \frac{1}{4} \nu \smlw_i^{\text{T}} \smla_i (\W_S\W_S^{\text{T}} - \eye) \smla_i + \sum_{l \in S^{\backslash i}} \frac{1}{4} \nu \smlw_i^{\text{T}} \smla_l (\W_S\W_S^{\text{T}} - \eye) \smla_l + e_1
= \frac{1}{4} \nu \smlw_i^{\text{T}} \smla_i (\W_S\W_S^{\text{T}} - \eye) \smla_i + r_1 + e_1.
\end{aligned}
\end{equation}
%%%%%%%
\noindent We denote $r_1 = \sum_{l \in S^{\backslash i}} \frac{1}{4} \nu \smlw_i^{\text{T}} \smla_l (\W_S\W_S^{\text{T}} - \eye) \smla_l$ and $e_1 = E[\smlw_i^{\text{T}} \boldeps (\W_S\W_S^{\text{T}} - \eye) \A\x^* \mid S]$. Similarly for $g_{i,S}^{(2)}$, we have
\begin{equation}
g_{i,S}^{(2)} = \frac{1}{4} \nu \smla_i \smlw_i^{\text{T}} (\W_S\W_S^{\text{T}} - \eye) \smla_i  + r_2 + e_2.
\end{equation}
%%%%%%%
\noindent We denote $r_2 = \sum_{l \in S^{\backslash i}} \frac{1}{4} \nu \smla_l \smlw_i^{\text{T}} (\W_S\W_S^{\text{T}} - \eye) \smla_l$, $e_2 = E[\boldeps \smlw_i^{\text{T}} (\W_S\W_S^{\text{T}} - \eye) \A\x^* \mid S]$, and $e_3 = g_{i,S}^{(3)}$. We also denote $\beta = E[r_1 + r_2 + e_1 + e_2 + e_3] + \gamma$. Combining the terms,
\begin{equation}
\begin{aligned}
g_i &= E[\frac{1}{4} \nu \smlw_i^{\text{T}} \smla_i (\W_S\W_S^{\text{T}} - \eye) \smla_i + \frac{1}{4} \nu \smla_i \smlw_i^{\text{T}}(\W_S\W_S^{\text{T}} - \eye) \smla_i] + \beta \\
&= E[- \frac{1}{2} \nu \tau_i \smla_i + \frac{1}{4} \nu \tau_i \sum_{j \in S} \smlw_j \smlw_j^{\text{T}} \smla_i + \frac{1}{4} \nu \smla_i \smlw_i^{\text{T}} \sum_{j \in S} \smlw_j \smlw_j^{\text{T}} \smla_i] + \beta \\
&= E[- \frac{1}{2} \nu \tau_i \smla_i + \frac{1}{4} \nu \tau_i^2 \smlw_i + \frac{1}{4} \nu \tau_i \sum_{j \in S^{\backslash i}} \smlw_j \smlw_j^{\text{T}} \smla_i +  \frac{1}{4} \nu \tau_i  \| \smlw_i\|_2^2 \smla_i + \frac{1}{4} \nu \left(\smla_i \smlw_i^{\text{T}} \right) \sum_{j \in S^{\backslash i}} \smlw_j \smlw_j^{\text{T}} \smla_i] + \beta \\
&= - \frac{1}{4} p_i \nu \tau_i \smla_i + p_i \frac{1}{4} \nu \tau_i^2 \smlw_i + \zeta + \beta,
\end{aligned}
\end{equation}
\noindent where $\zeta = \sum_{j \in [p]^{\backslash i}} \frac{1}{4} p_{ij} \nu \tau_i \smlw_j \smlw_j^{\text{T}} \smla_i + \frac{1}{4} p_{ij} \nu \smla_i \smlw_i^{\text{T}} \smlw_j \smlw_j^{\text{T}} \smla_i$. We continue
\begin{equation}
g_i = \frac{1}{4} p_i \nu \tau_i (\smlw_i - \smla_i)  + v,
\end{equation}
\noindent where we denote $v =\frac{1}{4} p_i \nu \tau_i (\tau_i - 1) \smlw_i + \zeta + \beta$.
%%%%%%%%%%%%%%%%%%%%%%%%%%%%%%
%%%%%%%%%%%%%%%%%%%%%%%%%%%%%%
%%%%%%%%%%%%%%%%%%%%%%%%%%%%%%
\begin{lemma}\label{lemma:v}
Suppose the generative model satisfies $(A1) - (A13)$. Then
\begin{equation}
\| v \|_2 \leq \frac{1}{8} p_i \nu \tau_i q \| \smlw_i - \smla_i \|_2  + O(\max(C_x^3 s \sqrt{s}, C_x^4 s^2))
\end{equation}
\end{lemma}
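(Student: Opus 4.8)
The plan is to bound $v = \tfrac14 p_i \nu \tau_i (\tau_i - 1)\smlw_i + \zeta + \beta$ (the decomposition established just above the lemma) term by term, following the proof of Lemma~\ref{lemma:v_relu} verbatim except that every contribution involving the bias $b_i$ is now absent. I would start with the leading term. The normalization step in (A11) forces $\|\smlw_i\|_2 = 1$, and (D5) gives the identity $\|\smlw_i - \smla_i\|_2^2 = 2(1-\tau_i)$, so $|\tau_i - 1| = \tfrac12 \|\smlw_i - \smla_i\|_2^2$; combining this with $q$-closeness $\|\smlw_i - \smla_i\|_2 \le q$ yields $\tfrac14 p_i \nu \tau_i |\tau_i - 1|\,\|\smlw_i\|_2 \le \tfrac18 p_i \nu \tau_i q\,\|\smlw_i - \smla_i\|_2$, which is exactly the first term of the claimed bound. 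This is precisely where the hard-thresholding case parts ways with the ReLU case: there (A14) was invoked to absorb $b_i^2$ into a term comparable to $\nu\tau_i(1-\tau_i)$, giving a $\tfrac14$ prefactor, whereas here the leading term is intrinsically one half as large.

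Next I would bound $\|\zeta\|_2$, where $\zeta = \sum_{j\in[p]^{\backslash i}} \tfrac14 p_{ij}\nu\tau_i \smlw_j\smlw_j^{\text{T}}\smla_i + \tfrac14 p_{ij}\nu\,\smla_i\smlw_i^{\text{T}}\smlw_j\smlw_j^{\text{T}}\smla_i$. Rewriting the two sums as $\tfrac14 p_{ij}\nu\tau_i\,\W_{\backslash i}\W_{\backslash i}^{\text{T}}\smla_i$ and $\tfrac14 p_{ij}\nu\,\smla_i\smlw_i^{\text{T}}\W_{\backslash i}\W_{\backslash i}^{\text{T}}\smla_i$, and applying $\|\W_{\backslash i}\|_2 = O(\sqrt{p/n})$ from (A7), $\|\smla_i\|_2 = \|\smlw_i\|_2 = 1$, $\nu \le C_x$, and $p_{ij} = s(s-1)/(p(p-1)) = O(s^2/p^2)$ from (A1), I get $\|\zeta\|_2 = O(s^2/(np))$, which is dominated by $O(\max(C_x^3 s\sqrt{s}, C_x^4 s^2))$ in the amplitude regime of (A2).

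Then I would bound $\|\beta\|_2$ with $\beta = E[r_1 + r_2 + e_1 + e_2 + e_3] + \gamma$. The remainder sums $r_1 = \sum_{l\in S^{\backslash i}} \tfrac14\nu\smlw_i^{\text{T}}\smla_l(\W_S\W_S^{\text{T}} - \eye)\smla_l$ and $r_2$ are treated as in the ReLU proof: expand $\W_S\W_S^{\text{T}}$, split into the $l\neq j\neq i$, the $j\neq i$, and the diagonal pieces, and bound each using incoherence (A8)/(A10) together with the operator-norm bounds, giving $O(s^2/(np))$. For $e_1, e_2, e_3$ the key preliminary estimate is on the Taylor remainders of $\sigma$: since $\|\nabla^2\sigma\|_2 \le C\approx 0.1$, $\|\A_S\|_2 = O(1)$ by (A6), and $\|\x_S^*\|_2 \le \sqrt{s}\,C_x$ by (A2), one obtains $\|\boldeps\|_2 = O(C_x^2 s)$, and similarly $\|\tilde\boldeps\|_2 = O(C_x^2 s)$ using the operator-norm bound on $\W_S$. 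Propagating these through the definitions of $e_1$ and $e_2$, which carry one extra $\|\A_S\|_2\|\x_S^*\|_2 = O(C_x\sqrt{s})$ factor and a $(\|\W_S^{\text{T}}\|_2^2 + 1) = O(1)$ factor, yields $O(C_x^3 s\sqrt{s})$, while $e_3$ contributes the additional $C_x^4 s^2$ term. The code-consistency error $\gamma$ is small for large $p$ by the argument of~\cite{Nguyen2019}. Assembling, $\|\beta\|_2 \le O(\max(C_x^3 s\sqrt{s}, C_x^4 s^2))$.

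The triangle inequality $\|v\|_2 \le \tfrac14 p_i\nu\tau_i|\tau_i-1|\,\|\smlw_i\|_2 + \|\zeta\|_2 + \|\beta\|_2$ together with the three bounds above then delivers the claim. I expect the main obstacle to be the bookkeeping inside $\|\beta\|_2$ --- keeping track of exactly how each operator-norm and sparsity factor compounds through the nonlinear Taylor remainders $\boldeps$ and $\tilde\boldeps$, and checking that all resulting contributions indeed collapse into $O(\max(C_x^3 s\sqrt{s}, C_x^4 s^2))$ rather than something larger; the $\gamma$ estimate is the single genuinely external ingredient and is handled exactly as in prior work.
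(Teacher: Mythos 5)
Your proposal follows the paper's proof essentially verbatim: the same decomposition $v=\tfrac14 p_i\nu\tau_i(\tau_i-1)\smlw_i+\zeta+\beta$, the same use of (D5) and $q$-closeness to convert $|\tau_i-1|=\tfrac12\|\smlw_i-\smla_i\|_2^2$ into the $\tfrac18 p_i\nu\tau_i q\|\smlw_i-\smla_i\|_2$ leading term, and the same $O(s^2/(np))$ and $O(\max(C_x^3 s\sqrt{s},C_x^4 s^2))$ bounds on $\zeta$ and $\beta$ via the Taylor-remainder estimates $\|\boldeps\|_2,\|\tilde\boldeps\|_2=O(C_x^2 s)$. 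Your observation that the hard-thresholding case drops the bias terms and therefore needs no analogue of (A14) is exactly how the paper's argument differs from Lemma~\ref{lemma:v_relu}.
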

%%%%%%%
\begin{proof}
\begin{equation}
\begin{aligned}
\| \zeta \|_2  &=  \| \sum_{j \in [p]^{\backslash i}} \frac{1}{4} p_{ij} \nu \tau_i \smlw_j \smlw_j^{\text{T}} \smla_i + \frac{1}{4} p_{ij} \nu \smla_i \smlw_i^{\text{T}} \smlw_j \smlw_j^{\text{T}} \smla_i \|_2
= \| \frac{1}{4} p_{ij} \nu \tau_i  \W_{\backslash i} \W_{\backslash i}^{\text{T}} \smla_i + \frac{1}{4} p_{ij} \nu \smla_i \smlw_i^{\text{T}} \W_{\backslash i} \W_{\backslash i}^{\text{T}} \smla_i \|_2 \\
&\leq \frac{1}{4} p_{ij} \nu \tau_i \| \W_{\backslash i} \|_2^2 \| \smla_i\|_2 + \frac{1}{4} p_{ij} \nu \| \smlw_i \|_2 \| \W_{\backslash i}\|_2^2 \| \smla_i\|_2^2
= \frac{1}{4} O(\nu \tau_i s^2 / (np)) + \frac{1}{4} O(\nu s^2 / (np)) = O(s^2/(np)).
\end{aligned}
\end{equation}
%%%%%%%
\begin{equation}
\begin{aligned}
&\| E[ r_1] \|_2 = \| E[\sum_{l \in S^{\backslash i}} \frac{1}{4} \nu \smlw_i^{\text{T}} \smla_l ( \W_S\W_S^{\text{T}} - \eye) \smla_l] \|_2
= \| E[\sum_{l \in S^{\backslash i}} \frac{1}{4} \nu \smlw_i^{\text{T}} \smla_l  \W_S\W_S^{\text{T}} \smla_l - \sum_{l \in S^{\backslash i}} \frac{1}{4} \nu \smlw_i^{\text{T}} \smla_l \smla_l] \|_2 \\
=& \| \sum_{l \neq j \neq i} p_{ijl} \frac{1}{4} \nu \smlw_i^{\text{T}} \smla_l \smlw_j \smlw_j^{\text{T}} \smla_l +  \sum_{j \neq i} p_{ij} \frac{1}{4} \nu \smlw_i^{\text{T}} \smla_j \smlw_j \smlw_j^{\text{T}}\smla_j + p_{il} \frac{1}{4} \nu \smlw_i^{\text{T}} \smla_l \smlw_i \smlw_i^{\text{T}}\smla_l - \sum_{l \neq i} p_{il} \frac{1}{4} \nu \smlw_i^{\text{T}} \smla_l \smla_l \|_2
\leq O(s^2/(np)).
\end{aligned}
\end{equation}
\noindent where each terms is bounded as below
%%%%%%%
\begin{equation}
\begin{aligned}
\| \sum_{l \neq j \neq i} p_{ijl} \frac{1}{4} \nu \smlw_i^{\text{T}} \smla_l \smlw_j \smlw_j^{\text{T}} \smla_l \|_2 = \frac{1}{4} \nu  (s^3/p^3) \| \W_{\backslash i} \|_2 \leq O(s^3/(pn\sqrt{n})).
\end{aligned}
\end{equation}
\noindent where $z_j = \sum_{l \neq i,j} \smlw_i^{\text{T}} \smla_l\smlw_j^{\text{T}} \smla_l $, hence, $\| z \|_2 \leq O(\frac{p\sqrt{p}}{n})$.
%%%%%%%
\begin{equation}
\begin{aligned}
\| \sum_{j \neq i} p_{ij} \frac{1}{4} \nu \smlw_i^{\text{T}} \smla_j \smlw_j \smlw_j^{\text{T}}\smla_j \|_2 = \| \frac{1}{4} \nu (s^2/p^2) \| \W_{\backslash i} z \|_2 \leq O(s^2/(np)).
\end{aligned}
\end{equation}
\noindent where $z_j = \smlw_i^{\text{T}} \smla_j \smlw_j^{\text{T}}\smla_j$, hence, $\| z \|_2 \leq O(\frac{\sqrt{p}}{\sqrt{n}})$.
%%%%%%%
\begin{equation}
\begin{aligned}
\| \sum_{l \neq j \neq i} p_{ijl} \frac{1}{4} \nu \smlw_i^{\text{T}} \smla_l \smlw_j \smlw_j^{\text{T}} \smla_l \|_2 = \frac{1}{4} \nu  (s^3/p^3) \| \W_{\backslash i} \|_2 \leq O(s^3/(pn\sqrt{n})).
\end{aligned}
\end{equation}
%%%%%%%
\begin{equation}
\begin{aligned}
\| p_{il} \frac{1}{4} \nu \smlw_i^{\text{T}} \smla_l \smlw_i \smlw_i^{\text{T}}\smla_l \|_2  \leq O(s^2/(np^2)).
\end{aligned}
\end{equation}
%%%%%%%
\noindent Following a similar approach for $r_2$, we get
\begin{equation}
\begin{aligned}
\| E[r_2] \|_2 &= \| E[\sum_{l \in S^{\backslash i}} \frac{1}{4} \nu \smla_l \smlw_i^{\text{T}} (\W_S\W_S^{\text{T}} - \eye) \smla_l] \|_2 = \| E[ \sum_{l \in S^{\backslash i}} \frac{1}{4} \nu \smla_l \smlw_i^{\text{T}} (\W_S\W_S^{\text{T}})\smla_l - \sum_{l \in S^{\backslash i}} \frac{1}{4} \nu \smla_l \smlw_i^{\text{T}} \smla_l] \|_2\\
&= \| \sum_{l \neq j \neq i} p_{ijl} \frac{1}{4} \nu \smla_l \smlw_i^{\text{T}} \smlw_j \smlw_j^{\text{T}} \smla_l +  \sum_{j \neq i} p_{ij} \frac{1}{4} \nu \smla_j \smlw_i^{\text{T}} \smlw_j \smlw_j^{\text{T}}\smla_j + p_{il} \frac{1}{4} \nu \smla_l \smlw_i^{\text{T}} \smlw_i \smlw_i^{\text{T}}\smla_l - \sum_{l \neq i} p_{il} \frac{1}{4} \nu \smlw_i^{\text{T}} \smla_l \smla_l \|_2\\
&= \| \sum_{l \neq j \neq i} p_{ijl} \frac{1}{4} \nu \smla_l \smlw_i^{\text{T}} \smlw_j \smlw_j^{\text{T}} \smla_l +  \sum_{j \neq i} p_{ij} \frac{1}{4} \nu \smla_j \smlw_i^{\text{T}} \smlw_j \smlw_j^{\text{T}}\smla_j \|_2
\leq O(s^2/(np)).
\end{aligned}
\end{equation}
%%%%%%%
\noindent Next, we bound $\| \boldeps \|_2$. We know that Hessian of sigmoid is bounded (i.e., $\| \nabla^2\sigma(u_t) \|_2 \leq C \approx 0.1$). We denote row $t$ of the matrix $\A$ by $\tilde \smla_t$.
\begin{equation}
\begin{aligned}
\| \boldeps \|_2 &\leq \sum_{t=1}^n \| (\tilde \smla^{\text{T}}_{t,S} \x_S^*)^2 \nabla^2 \sigma(u_t) \|_2 \leq \sum_{t=1}^n \| \tilde \smla^{\text{T}}_{t,S} \x_S^* \|_2^2 \| \nabla^2 \sigma(u_t) \|_2
\leq \| \A_S \|_2^2 \|\x_S^* \|_2^2 \| \nabla^2 \sigma(u_t) \|_2 \leq O(C_x^2 s).
\end{aligned}
\end{equation}
%%%%%%%
\noindent Following a similar approach, we get
\begin{equation}
\begin{aligned}
\| \tilde \boldeps \|_2 &\leq \sum_{t=1}^n \| [4\W_S \W_S^{\text{T}} (\mu - \frac{1}{2})]_t^2 \nabla^2 \sigma(u_t) \|_2 \leq \| 4\W_S \W_S^{\text{T}} (\mu - \frac{1}{2}) \|_2^2 \| \nabla^2 \sigma(u_t) \|_2\\
&\leq O(C \| \W_S \|_2^2 \| \A_S \x_S^* + \boldeps \|_2^2) \leq O(C_x^2 s)
\end{aligned}
\end{equation}
\noindent So,
\begin{equation}
\| \smlw_i^{\text{T}} \boldeps (\W_S\W_S^{\text{T}} - \eye) \A\x^* \|_2 \leq \| \smlw_i \|_2 \| \boldeps \|_2 (\|\W_S^{\text{T}} \|_2^2 +1) \| \A_S \|_2 \| \x_S^* \|_2 \leq O(C_x^3 s \sqrt{s}).
\end{equation}
%%%%%%%
\noindent Hence, 
\begin{equation}
\| E[e_1] \|_2 \leq O(C_x^3 s \sqrt{s}).
\end{equation}
\noindent Similarly, we have $\| E[e_2] \|_2 \leq O(C_x^3 s \sqrt{s})$.
\begin{equation}
\begin{aligned}
&\|  \left(( \A\x^* + \boldeps) \smlw_i^{\text{T}} + \smlw_i^{\text{T}} (\A\x^* + \boldeps) \eye \right) ((\W_S\W_S^{\text{T}} - \eye) \boldeps + \tilde \boldeps) \|_2\\
&\leq 2(\| \A_S \|_2 \| \x_S^* \|_2  + \| \boldeps \|_2) \| \smlw_i \|_2 \left( (\|\W_S^{\text{T}} \|_2^2 + 1) \| \boldeps \|_2 + \| \tilde \boldeps \|_2\right)\\
&= O((\| \x_S^* \|_2  + \| \boldeps \|_2) \| \boldeps \|_2) \leq O(\max(C_x^3 s \sqrt{s}, C_x^4 s^2)).
\end{aligned}
\end{equation}
%%%%%%%
\noindent Hence,
\begin{equation}
\| E[e_3] \|_2 \leq O(\max(C_x^3 s \sqrt{s}, C_x^4 s^2)).
\end{equation}
\noindent Using the above bounds, we have
\begin{equation}
\| \beta \|_2 \leq O(\max(C_x^3 s \sqrt{s}, C_x^4 s^2)).
\end{equation}
\noindent Hence,
\begin{equation}
\begin{aligned}
\| v \|_2 &= \| \frac{1}{4} p_i \nu \tau_i ( \tau_i - 1) \smlw_i + \zeta + \beta \|_2
\leq \frac{1}{4} p_i \nu \tau_i | (\tau_i - 1) | \|\smlw_i \|_2 + \| \zeta \|_2 + \| \beta \|_2\\
&\leq \frac{1}{4} p_i \nu \tau_i (\frac{1}{2} q \| \smlw_i - \smla_i \|_2) + \| \zeta \|_2 + \| \beta \|_2
\leq \frac{1}{8} p_i \nu \tau_i q \| \smlw_i - \smla_i \|_2  + O(\max(C_x^3 s \sqrt{s}, C_x^4 s^2)).
\end{aligned}
\end{equation}
\end{proof}
%%%%%%%%%%%%%%%%%%%%%%%%%%%%%%
%%%%%%%%%%%%%%%%%%%%%%%%%%%%%%
%%%%%%%%%%%%%%%%%%%%%%%%%%%%%%
\begin{lemma}\label{lemma:dir}
Suppose the generative model satisfies $(A1) - (A13)$. Then
\begin{equation}
2 \langle g_{i}, \smlw_i - \smla_i \rangle \geq (\frac{1}{4} \nu \tau_i s/p) (1 -  \frac{q^2}{2}) \| \smlw_i - \smla_i \|^2_2 + \frac{1}{(\frac{1}{4} \nu \tau_i s/p)} \| g_{i} \|_2^2 - O(C_x^6 p\max(s^2/ \tau_i, C_x^2 s^3 / \tau_i)).
\end{equation}
\end{lemma}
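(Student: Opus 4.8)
The plan is to follow the same route as the proof of Lemma~\ref{lemma:dir_relu}, the only change being that we feed in the sharper constant supplied by Lemma~\ref{lemma:v} in place of the one from Lemma~\ref{lemma:v_relu}. We begin from the decomposition already obtained in the gradient-dynamics computation for the HT case, namely $g_i = \frac{1}{4} p_i \nu \tau_i (\smlw_i - \smla_i) + v$ with $v = \frac{1}{4} p_i \nu \tau_i (\tau_i - 1)\smlw_i + \zeta + \beta$, together with Lemma~\ref{lemma:v}, which gives $\| v \|_2 \leq \frac{1}{8} p_i \nu \tau_i q \| \smlw_i - \smla_i \|_2 + O(\max(C_x^3 s\sqrt{s}, C_x^4 s^2))$.

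First I would square this bound, using $(a+b)^2 \leq 2a^2 + 2b^2$ so that no cross term appears:
\[
\| v \|_2^2 \leq \frac{1}{32}\left(p_i \nu \tau_i q\right)^2 \| \smlw_i - \smla_i \|_2^2 + O(\max(C_x^6 s^3, C_x^8 s^4)).
\]
Next I would expand the square in $g_i$,
\[
\| g_i \|_2^2 = \left(\frac{1}{4} p_i \nu \tau_i\right)^2 \| \smlw_i - \smla_i \|_2^2 + \| v \|_2^2 + 2\left(\frac{1}{4} p_i \nu \tau_i\right)\langle v, \smlw_i - \smla_i\rangle,
\]
solve it for $2\langle v, \smlw_i - \smla_i\rangle$, and substitute into $2\langle g_i, \smlw_i - \smla_i\rangle = \frac{1}{2} p_i \nu \tau_i \| \smlw_i - \smla_i\|_2^2 + 2\langle v, \smlw_i - \smla_i\rangle$. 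The $\langle v, \smlw_i-\smla_i\rangle$ contributions cancel in the usual way, leaving the exact identity
\[
2\langle g_i, \smlw_i - \smla_i\rangle = \frac{1}{4} p_i \nu \tau_i \| \smlw_i - \smla_i\|_2^2 + \frac{1}{\frac{1}{4} p_i\nu\tau_i}\| g_i\|_2^2 - \frac{1}{\frac{1}{4} p_i\nu\tau_i}\| v\|_2^2,
\]
which is precisely the identity used in the ReLU case.

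It then remains to lower-bound the right-hand side, which I would do by inserting the squared estimate for $\| v\|_2^2$ while keeping the other two terms exactly. Dividing the leading term $\frac{1}{32}(p_i\nu\tau_i q)^2\| \smlw_i - \smla_i\|_2^2$ of that estimate by $\frac{1}{4} p_i\nu\tau_i$ yields $\frac{1}{8} p_i\nu\tau_i q^2\| \smlw_i - \smla_i\|_2^2$, so that $\frac{1}{4} p_i\nu\tau_i - \frac{1}{8} p_i\nu\tau_i q^2 = \frac{1}{4} p_i\nu\tau_i(1-\frac{q^2}{2})$; with $p_i = s/p$ this is the claimed coefficient $\frac{1}{4}\nu\tau_i (s/p)(1-\frac{q^2}{2})$. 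The improvement from the $2q^2$ appearing in Lemma~\ref{lemma:dir_relu} to $\frac{q^2}{2}$ here is exactly the consequence of the $\frac{1}{8}$ versus $\frac{1}{4}$ coefficient in the two versions of the $\|v\|_2$ bound. Dividing the residual $O(\max(C_x^6 s^3, C_x^8 s^4))$ by $\frac{1}{4}\nu\tau_i (s/p)$ converts it into $O(C_x^6 p\max(s^2/\tau_i, C_x^2 s^3/\tau_i))$, matching the stated error term; collecting the three pieces gives the lemma.

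Because all of the genuinely delicate work — the gradient derivation and the term-by-term control of $\zeta$, $r_1$, $r_2$ and the sigmoid-curvature residuals $e_1,e_2,e_3$ and $\gamma$ that underlie Lemma~\ref{lemma:v} — is already carried out, I do not expect any real obstacle in this lemma; it is a mechanical rearrangement. The only things to watch are (i) bookkeeping of the numerical constants so that the coefficient of $q^2$ comes out as $\frac{1}{2}$ rather than something larger, and (ii) noting that $1-\frac{q^2}{2}\in(0,1)$, which holds because near-ness of $\W$ to $\A$ (assumptions (A7) and (D2)) forces $q$ to be small — this is what makes the bound usable as a contraction step with $\delta = \kappa(\frac{1}{4}\nu\tau_i s/p)(1-\frac{q^2}{2})$ in the subsequent Lemma~\ref{lemma:L_relu}-type argument.
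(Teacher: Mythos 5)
Your proposal is correct and matches the paper's own proof essentially line for line: the same decomposition $g_i = \frac{1}{4} p_i \nu \tau_i (\smlw_i - \smla_i) + v$, the same squaring of the Lemma~\ref{lemma:v} bound via $(a+b)^2 \leq 2a^2+2b^2$, the same exact identity obtained by expanding $\|g_i\|_2^2$ and eliminating $\langle v, \smlw_i - \smla_i\rangle$, and the same final substitution yielding the coefficient $\frac{1}{4}\nu\tau_i (s/p)(1-\frac{q^2}{2})$ and the stated error term. The closing remarks on why $1-\frac{q^2}{2}\in(0,1)$ and the link to the descent step are consistent with the paper's subsequent use of the lemma.
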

%%%%%%%
\begin{proof} From Lemma~\ref{lemma:v}, we have
\begin{equation}
\| v \|_2 \leq \frac{1}{8} p_i \nu \tau_i q \| \smlw_i - \smla_i \|_2  + O(\max(C_x^3 s \sqrt{s}, C_x^4 s^2)).
\end{equation}
\noindent Hence,
\begin{equation}
\| v \|_2^2 \leq 2(\frac{1}{8} \tau_i \nu q s/p)^2 \| \smlw_i - \smla_i \|_2^2  + O(\max(C_x^6 s^3, C_x^8 s^4)).
\end{equation}
\noindent We have $g_i = \frac{1}{4} p_i \nu \tau_i (\smlw_i - \smla_i)  + v$. Taking the norm,
%%%%%%%
\begin{equation}
 \| g_i \|_2^2 = (\frac{1}{4} p_i \nu \tau_i)^2  \| \smlw_i - \smla_i \|_2^2  +  \| v \|_2^2 + 2 (\frac{1}{4} p_i \nu \tau_i) \langle v, \smlw_i - \smla_i \rangle.
\end{equation}
%%%%%%%
\begin{equation}
2 \langle v, \smlw_i - \smla_i \rangle =  - (\frac{1}{4} p_i \nu \tau_i)  \| \smlw_i - \smla_i \|_2^2 + \frac{1}{(\frac{1}{4} p_i \nu \tau_i)} \| g_i \|_2^2 - \frac{1}{(\frac{1}{4} p_i \nu \tau_i)} \| v \|_2^2.
\end{equation}
%%%%%%%
\begin{equation}
\begin{aligned}
&2 \langle g_{i}, \smlw_i - \smla_i \rangle = \frac{1}{4} p_i \nu \tau_i \| \smlw_i - \smla_i \|^2_2 + \frac{1}{(\frac{1}{4} p_i \nu \tau_i)} \| g_{i} \|_2^2 - \frac{1}{(\frac{1}{4} p_i \nu \tau_i)} \| v \|_2^2\\
&\geq (\frac{1}{4} \nu \tau_i s/p) (1 -  \frac{q^2}{2}) \| \smlw_i - \smla_i \|^2_2 + \frac{1}{(\frac{1}{4} \nu \tau_i s/p)} \| g_{i} \|_2^2 - O(C_x^6 p\max(s^2/ \tau_i, C_x^2 s^3 / \tau_i)).
\end{aligned}
\end{equation}
\end{proof}
%%%%%%%
\noindent Lemma \ref{lemma:dir} suggests that the gradient is approximately along the same direction as $\smlw_i - \smla_i$, so at every iteration of the gradient descent, $\smlw_i$ gets closer and closer to $\smla_i$. Given Lemma \ref{lemma:dir} from the descent property of Theorem 6 in~\cite{Arora15}, we can see that given the learning rate $\kappa = \max_i(\frac{1}{\frac{1}{4} \nu \tau_i s/p})$, letting $\delta = \kappa (\frac{1}{4} \nu \tau_i s/p) (1 - \frac{q^2}{2}) \in (0,1)$, we have the descent property as follows
%%%%%%%
\begin{equation}
\| \smlw_i^{(l+1)} - \smla_i \|_2^2 \leq (1- \delta) \| \smlw_i^{(l)} - \smla_i \|_2^2 + \kappa \cdot O(C_x^6 p\max(s^2/ \tau_i, C_x^2 s^3 / \tau_i)).
\end{equation}
%%%%%%%%%%%%%%%%%%%%%%%%%%%%%%
%%%%%%%%%%%%%%%%%%%%%%%%%%%%%%
%%%%%%%%%%%%%%%%%%%%%%%%%%%%%%
\begin{lemma}\label{lemma:L}
Suppose $\| \smlw_i^{(l+1)} - \smla_i \|_2^2 \leq (1- \delta) \| \smlw_i^{(l)} - \smla_i \|_2^2 + \kappa \cdot O(C_x^6 p\max(s^2/ \tau_i, C_x^2 s^3 / \tau_i))$ where $\delta = \kappa (\frac{1}{4} \nu \tau_i s/p) (1 - \frac{q^2}{2}) \in (0,1)$ and $O(\frac{C_x^4 p^2 \max(s, C_x^4 s^2)}{\tau_i^2 (1 -  \frac{q^2}{2})}) < \| \smlw_i^{(0)} - \smla_i \|_2^2$. Then
%%%%%%%
\begin{equation}
\begin{aligned}
\| \smlw_i^{(L)} - \smla_i \|_2^2 &\leq (1- \delta/2)^L \| \smlw_i^{(0)} - \smla_i \|_2^2.
\end{aligned}
\end{equation}
\end{lemma}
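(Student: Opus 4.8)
The plan is to mirror the proof of Lemma~\ref{lemma:L_relu}, since Lemma~\ref{lemma:L} is its hard-thresholding counterpart: the only structural changes are that the one-step contraction factor is now $\delta = \kappa\,(\tfrac14\nu\tau_i s/p)(1-\tfrac{q^2}{2})\in(0,1)$, as furnished by Lemma~\ref{lemma:dir} (with $1-\tfrac{q^2}{2}$ in place of $1-2q^2$), while the per-iteration additive error stays of the same order, $\kappa\cdot O(C_x^6 p\max(s^2/\tau_i,\,C_x^2 s^3/\tau_i))$. First I would unroll the given recursion $\|\smlw_i^{(l+1)}-\smla_i\|_2^2 \le (1-\delta)\|\smlw_i^{(l)}-\smla_i\|_2^2 + \kappa\cdot O(\cdot)$ over $l=0,\dots,L-1$, obtaining
\[
\|\smlw_i^{(L)}-\smla_i\|_2^2 \le (1-\delta)^L\|\smlw_i^{(0)}-\smla_i\|_2^2 + \kappa\cdot O\!\big(C_x^6 p\max(s^2/\tau_i, C_x^2 s^3/\tau_i)\big)\sum_{l=0}^{L-1}(1-\delta)^l .
\]

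Next I would bound the geometric sum by $\sum_{l=0}^{L-1}(1-\delta)^l\le 1/\delta$ and substitute $\kappa/\delta = \big(\tfrac14\nu\tau_i s/p\big)^{-1}(1-\tfrac{q^2}{2})^{-1}$, treating $\nu$ as a fixed positive constant (it is bounded by (A2)--(A3)); this turns the accumulated error into $O\!\big(\tfrac{C_x^6 p^2\max(s, C_x^2 s^2)}{\tau_i^2(1-q^2/2)}\big)$, which is (up to the constant absorption) exactly the quantity the hypothesis declares to be strictly below $\|\smlw_i^{(0)}-\smla_i\|_2^2$. At this point we have $\|\smlw_i^{(L)}-\smla_i\|_2^2 \le (1-\delta)^L\|\smlw_i^{(0)}-\smla_i\|_2^2 + \varepsilon_0$ with $\varepsilon_0 < \|\smlw_i^{(0)}-\smla_i\|_2^2$.

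The final step is to invoke Theorem~6 of~\cite{Arora15} to upgrade the decay rate from $(1-\delta)$ to $(1-\delta/2)$: the mechanism is that whenever $\|\smlw_i^{(l)}-\smla_i\|_2^2$ still exceeds a constant multiple of the noise floor $\varepsilon_0$, the additive term is at most $(\delta/2)\|\smlw_i^{(l)}-\smla_i\|_2^2$, so the recursion collapses to $\|\smlw_i^{(l+1)}-\smla_i\|_2^2\le(1-\delta/2)\|\smlw_i^{(l)}-\smla_i\|_2^2$; an induction, based on the hypothesis that the iterate begins above that floor, then propagates this to iteration $L$ and yields $\|\smlw_i^{(L)}-\smla_i\|_2^2\le(1-\delta/2)^L\|\smlw_i^{(0)}-\smla_i\|_2^2$.

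I do not expect a genuine obstacle here: the argument is the bookkeeping already performed for the ReLU case in Lemma~\ref{lemma:L_relu}, and everything nontrivial has been relegated to Lemma~\ref{lemma:dir} and to the cited descent theorem. The only points requiring care are (i) confirming that the $C_x$, $s$, $p$ exponents produced after substituting the learning rate and summing the series line up with the form of the stated hypothesis, and (ii) checking that the constants hidden in ``$\le 1/\delta$'' and in the absorption of $\nu$ are benign; both are routine.
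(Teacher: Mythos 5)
Your proposal is correct and follows essentially the same route as the paper's proof: unroll the recursion, bound the accumulated error by the geometric series (which after substituting $\kappa/\delta$ gives the $O\big(\tfrac{C_x^6 p^2\max(s, C_x^2 s^2)}{\tau_i^2(1-q^2/2)}\big)$ noise floor), and invoke Theorem~6 of~\cite{Arora15} together with the hypothesis on $\|\smlw_i^{(0)}-\smla_i\|_2^2$ to upgrade the rate to $(1-\delta/2)^L$. The only discrepancy is the paper's own (the $C_x$ exponents in the lemma's stated hypothesis differ slightly from those derived in the proof), not anything in your argument.
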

\begin{proof}
%%%%%%%
\noindent Performing the gradient update $L$ times, 
\begin{equation}
\begin{aligned}
\| \smlw_i^{(L)} - \smla_i \|_2^2 &\leq (1- \delta)^L \| \smlw_i^{(0)} - \smla_i \|_2^2 + \frac{1}{(\frac{1}{4} \nu \tau_i s/p) (1 -  \frac{q^2}{2})} O(C_x^6 p\max(s^2/ \tau_i, C_x^2 s^3 / \tau_i))\\
&\leq (1- \delta)^L \| \smlw_i^{(0)} - \smla_i \|_2^2 + O(\frac{C_x^6 p^2 \max(s, C_x^2 s^2)}{\tau_i^2 (1 -  \frac{q^2}{2})}).
\end{aligned}
\end{equation}
%%%%%%%
\noindent From Theorem 6 in~\cite{Arora15}, if $O(\frac{C_x^6 p^2 \max(s, C_x^2 s^2)}{\tau_i^2 (1 -  \frac{q^2}{2})}) < \| \smlw_i^{(0)} - \smla_i \|_2^2$, then we have
\begin{equation}
\begin{aligned}
\| \smlw_i^{(L)} - \smla_i \|_2^2 &\leq (1- \delta/2)^L \| \smlw_i^{(0)} - \smla_i \|_2^2.
\end{aligned}
\end{equation}
\end{proof}
%%%%%%%
\begin{corollary}\label{cor:1}
Given (A2), the condition of Lemma~\ref{lemma:L} is simplified to $O(\frac{\max(s,  s^2/ p^{\frac{2}{3} + 2\xi})}{p^{6\xi}\tau_i^2 (1 -  \frac{q^2}{2})}) < \| \smlw_i^{(0)} - \smla_i \|_2^2$.
\end{corollary}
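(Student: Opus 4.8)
The plan is to obtain Corollary~\ref{cor:1} purely by substituting the amplitude bound of assumption (A2) into the sufficient condition appearing in the hypothesis of Lemma~\ref{lemma:L}, and then collecting the powers of $p$ and $s$ in the resulting big-$O$ expression. Concretely, Lemma~\ref{lemma:L} asks that $O\!\left(\frac{C_x^6 p^2 \max(s,\ C_x^2 s^2)}{\tau_i^2 (1 - \frac{q^2}{2})}\right) < \| \smlw_i^{(0)} - \smla_i \|_2^2$, and (A2) fixes $C_x = O(p^{-\frac{1}{3}-\xi})$; so nothing beyond careful exponent arithmetic is required.

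First I would compute $C_x^6 = O(p^{-2-6\xi})$, whence $C_x^6 p^2 = O(p^{-6\xi})$ --- this produces the $p^{6\xi}$ factor in the denominator of the claimed bound. Next I would push this scalar inside the maximum, which is legitimate because $C_x^6 p^2$ is a fixed nonnegative power of $p$: $C_x^6 p^2 \max(s,\ C_x^2 s^2) = \max\!\left(C_x^6 p^2 s,\ C_x^8 p^2 s^2\right)$. Using $C_x^2 = O(p^{-\frac{2}{3}-2\xi})$, the first branch is $O(s/p^{6\xi})$ and the second is $C_x^8 p^2 s^2 = O\!\left(s^2/p^{\frac{2}{3}+8\xi}\right) = O\!\left((s^2/p^{\frac{2}{3}+2\xi})\big/ p^{6\xi}\right)$. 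Recombining the two branches and restoring the $\tau_i^2(1-\frac{q^2}{2})$ denominator gives exactly $O\!\left(\frac{\max(s,\ s^2/p^{\frac{2}{3}+2\xi})}{p^{6\xi}\tau_i^2(1-\frac{q^2}{2})}\right)$, which is the simplified condition stated in the corollary.

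The only subtlety --- and it is the closest thing to a ``hard part'', though it remains routine --- is ensuring the maximum is handled correctly when multiplying through by $C_x^6 p^2$ and by $C_x^2$ inside one of its arguments; writing $\max$ out branch-by-branch as above removes any ambiguity. No probabilistic estimate, incoherence bound, or descent argument enters here: Corollary~\ref{cor:1} is simply a restatement of the hypothesis of Lemma~\ref{lemma:L} under the scaling of (A2), so once the exponents are checked the proof is complete. (The analogous Corollary~\ref{cor:1_relu} follows identically, with $1-2q^2$ in place of $1-\frac{q^2}{2}$.)
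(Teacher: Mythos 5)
Your proof is correct and is exactly the intended argument: the paper states Corollary~\ref{cor:1} without a written proof, and the only content is the exponent arithmetic you carry out, substituting $C_x = O(p^{-\frac{1}{3}-\xi})$ from (A2) so that $C_x^6 p^2 = O(p^{-6\xi})$ and $C_x^8 p^2 s^2 = O\big((s^2/p^{\frac{2}{3}+2\xi})/p^{6\xi}\big)$, then recombining the two branches of the maximum. One remark: you (correctly) start from the bound $O\big(C_x^6 p^2 \max(s,\ C_x^2 s^2)/(\tau_i^2(1-\frac{q^2}{2}))\big)$ derived at the end of the proof of Lemma~\ref{lemma:L}, rather than from the expression $C_x^4 p^2 \max(s,\ C_x^4 s^2)$ printed in that lemma's hypothesis; the latter appears to be a typo, since it would give a first branch of order $s\,p^{\frac{2}{3}-4\xi}$ rather than the $s/p^{6\xi}$ appearing in the corollary.
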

%%%%%%%%%%%%%%%%%%%%%%%%%%%%%%
%%%%%%%%%%%%%%%%%%%%%%%%%%%%%%
%%%%%%%%%%%%%%%%%%%%%%%%%%%%%%
%%%%%%%%%%%%%%%%%%%%%%%%%%%%%%
%%%%%%%%%%%%%%%%%%%%%%%%%%%%%%
%%%%%%%%%%%%%%%%%%%%%%%%%%%%%%
%\newpage
\section{BCOMP algorithm}
We implement binomial convolutional orthogonal matching pursuit (BCOMP) as a baseline for ECDL task, as mentioned in the Experiments section. BCOMP solves Eq.~(2) with $\ell_0$ psuedo-norm $\lVert \x^j \rVert_0$, instead of $\lVert \x^j \rVert_1$, and combines the idea of convolutional greedy pursuit~\cite{Mailhe2011} and binomial greedy pursuit~\cite{Vincent2002, Lozano2011}. BCOMP is a computationally efficient algorithm for ECDL, as 1) the greedy algorithms are generally considered faster than algorithms for $\ell_1$-regularized problems~\cite{TROP2007} and 2) it exploits the localized nature of $\smlh_c$ to speed up the computation of both CSC and CDU steps.\\

\noindent The superscript $g$ refers to one iteration of the the alternating-minimization procedure, for $g=1,\cdots,G$. We assume sparsity level of $T$ for BCOMP, which means that there are at most $T$ non-zeros values for $\x^j$, set differently according to the application. The subscript $t$ refers to a single iteration of the CSC step, where additional support for $\x^j$ is identified. The set $\mathcal{R}_t$ contains indices of the columns from $\Bigh$ that were chosen up to iteration $t$. The notation $\Bigh_i$ refers to the $i^{\text{th}}$ column of $\Bigh$. The index $n^j_{c,i}$ denotes the occurrence of the $i^{\text{th}}$ event from filter $c$ (the nonzero entries of $\x^j$ corresponding to filter $c$) in the $j^{\text{th}}$ observation. The optimization problems in line 10 and 17 are both constrained convex optimization problems that can be solved using standard convex programming packages.\\

\begin{algorithm}
\caption{ECDL by BCOMP}\label{alg:ECOMP}
\begin{algorithmic}
	\INPUT{$\{\y^{j}\}_{j=1}^{J}\in\mathbb{R}^N$, $\{\smlh_c^{(0)}\}_{c=1}^C\in\mathbb{R}^K$}
	\OUTPUT{ $\{\x^{j,(G)}\}_{j=1}^J\in \mathbb{R}^{C(N-K+1)}$, $\{\smlh_c^{(G)}\}_{c=1}^C\in\mathbb{R}^K$}
	\FOR{$g=1$ to$G$}
		\STATE (\textit{CSC} step)\;
		\FOR{$j=1$ to $J$}
			\STATE $\mathcal{R}_0=\emptyset$, $\x_1^{j,(g-1)}=\mathbf{0}$\;
			\FOR{$t=1$ to $T$}
				\STATE $\widetilde{\mathbf{y}}_t^{j}=\mathbf{y}_t^{j}-f^{-1}\big(\Bigh^{(g-1)}\x_{t-1}^{j,(g-1)}\big)$\;
				\STATE $c^{\ast},n^{\ast}=\argmax_{c,n} \{(\smlh_c^{(g-1)}\star \widetilde{\mathbf{y}}_t^{j})[n]\}_{c,n=1}^{C,N-K+1}$\;
				\STATE $i = c^{\ast}(N-K+1)+n^{\ast}$\;
				\STATE $\mathcal{R}_t = \mathcal{R}_{t-1}\cup \Bigh_i^{(g-1)}$\;
				\STATE $\x^{j,(g)}_t=\argmin_{\x^j} -\log p(\y^j\vert \{\smlh_c^{(g-1)}\}_{c=1}^C,\x^j)$, s.t. $\begin{cases}
				\x^j[n]\geq 0 \text{ for } n\in \mathcal{R}_t\\
			\x^j[n]=0 \text{ for } n\notin \mathcal{R}_t\\
				\end{cases}$  
			\ENDFOR
	\ENDFOR
		
	\STATE	(\textit{CDU} step)\;
		\FOR{$j=1$ to $J$}
			\FOR{$c=1$ to $C$}
				\FOR{$i=1$ to $N_c^{j}$}
					\STATE $\Bigx_{c,i}^{j,(g)} = \begin{pmatrix}
					\mathbf{0}_{n^{j}_{c,i}\times K}&
					\x^{c,(g)}[n_{c,i}^j]\cdot\mathbf{I}_{K\times K}&
					\mathbf{0}_{(N-K-n^{j}_{c,i})\times K}
					\end{pmatrix}^{\text{T}}$
				\ENDFOR
			\ENDFOR
		\ENDFOR
		\STATE $\{\smlh_c^{(g)}\}_{c=1}^C=\argmin_{\{\smlh_c\}_{c=1}^C} -\sum_{j=1}^J\log p(\y^j\vert \{\smlh_c\}_{c=1}^C , \{\Bigx_{c,i}^{j,(g)}\}_{c,i,j=1})$, s.t. $\vectornorm{\smlh_c}_2=1$
\ENDFOR
\end{algorithmic}
\end{algorithm}
\begin{algorithm}
\caption{$\text{DCEA}(\y, \smlh, b)$: Forward pass of DCEA architecture.}
	\label{algo:ae}
\begin{algorithmic}
	\INPUT{$\y, \smlh, b, \alpha$}
	\OUTPUT{$ \w$}
%	$\x_0 = \mathbf{0}$\\
%	\textbf{For} $t=1$ to $T$\\
%	$\x_t = \prox_{\bvec}\left(\x_{t-1} + \alpha\Bigh^T\big(\y-f^{-1}\big(\Bigh\x_{t-1}\big)\big)\right)$\\
	\FOR{$t =1$ to $T$}
		\STATE $\x_t = \prox_{\bvec}\left(\x_{t-1} + \alpha\Bigh^T\big(\y-f^{-1}\big(\Bigh\x_{t-1}\big)\big)\right)$
	\ENDFOR
	\STATE $\w = \Bigh \x_T$
	
	\end{algorithmic}
\end{algorithm}
\section{DCEA architecture}

\noindent We found that BCOMP converged in $G=5$ alternating-minimization iterations in the simulations, and $G=10$ iterations in the analyses of the real data. After convergence, the CSC step of the BCOMP can be used for inference on the test dataset, similar to using the encoder of DCEA for inference. Algorithm~\ref{algo:ae} shows the forward pass of the DCEA architecture. For notational convenience, we have dropped the superscript $j$ indexing the $J$ inputs.

\paragraph{Implementation of the DCEA encoder}

We implemented the DCEA architecture in PyTorch. In the case of 1D, we accelerate the computations performed by the DCEA encoder by replacing ISTA with its faster version FISTA~\cite{beck2009fast}. FISTA uses a momentum term to accelerate the converge of ISTA. The resulting encoder is similar to the one from~\cite{TolooshamsBahareh2019deepresidualAE}. We trained it using backpropagation with the ADAM optimizer~\cite{Kingma2014AdamAM}, on an Nvidia GPU (GeForce GTX 1060).

\paragraph{Hyperparameters used for training the DCEA architecture in using the simulated and real neural spiking data}

In these experiments, we treat $\lambda$ as hyperparameter where $b = \alpha \lambda$. $\lambda$ is tuned by grid search in the interval of $[0.1, 1.5]$. Following the grid search, we used $\lambda=0.38$ in the simulations and $\lambda=0.12$ for the real data. The DCEA encoder performs $T=250$ and $T=5{,}000$ iterations of FISTA, respectively for the simulated and for the real data. We found that such large numbers, particularly for the real data, were necessary for the encoder to produce sparse codes. We used $\alpha=0.2$ in the simulations and $\alpha=0.5$ for the real data. We used batches of size $256$ neurons in the simulations, and a single neuron per batch in the analyses of the real data.

\paragraph{Processing of the output of the DCEA encoder after training in neural spiking experiment} 

The encoder of the DCEA architecture performs $\ell_1$-regularized logistic regression using the convolutional dictionary $\Bigh$, the entries of which are highly correlated because of the convolutional structure. Suppose a binomial observation $\y^j$ is generated according to the binomial generative model with mean of $\boldmu_j=f^{-1}\big(\Bigh\x^j\big)$, where $f^{-1}(\cdot)$ is a sigmoid function. We observed that the estimate $\x^j_T$ of $\x^j$ obtained by feeding the group of observations to the
DCEA encoder is a vector whose nonzero entries are clustered around those of $\x^j$. This is depicted in black in Fig.~\ref{fig:post}, and is a well-known
issue with $\ell_1$-regularized regression with correlated dictionaries~\cite{bhaskar2013atomic}. Therefore, for the neural spiking data, after training the DCEA architecture, we processed the output of the encoder as follows
\begin{enumerate}
	\item \underline{Clustering}: We applied k-means clustering to $\x^j_T$ to identify $16$ clusters.
	\item \underline{Support identification}: For each cluster, we identified the index of the largest entry from $\x^j_T$ in the cluster. This yielded a set of indices that correspond to the estimated support of $\x^j$.
	\item \underline{Logistic regression}: We performed logistic regression using the group of observations and $\Bigh$ restricted to the support identified
	in the previous step. Note that this is a common procedure for $\ell_1$-regularized problems~\cite{discretize,Mardani2018}. This yielded a new set of codes $\x^j$ that were used to re-estimate $\Bigh$, similar to a single iteration of BCOMP.
\end{enumerate}

\noindent The outcome of these three steps is shown in red circle in the supplementary Fig.~\ref{fig:post}.

\begin{figure}[!ht]
	\centering
	\includegraphics[width=\linewidth]{./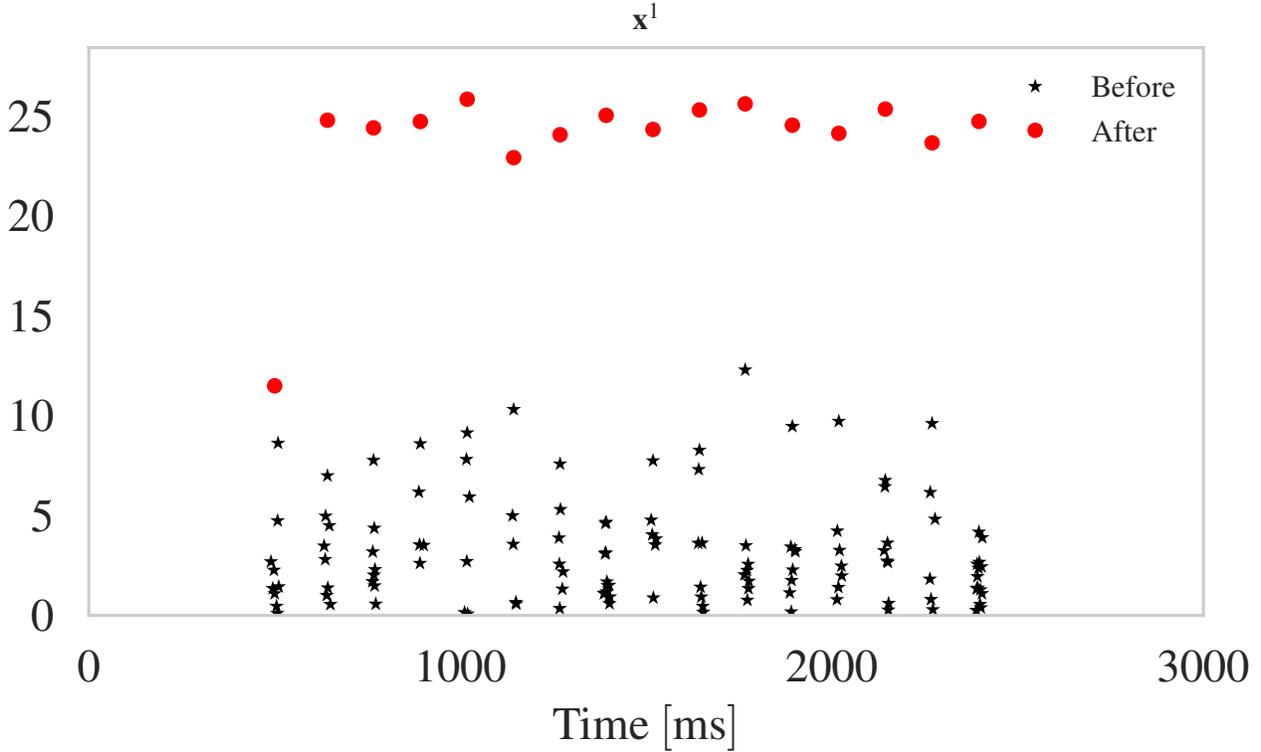}
	\vspace{-10mm}
	\caption{Output of the the DCEA encoder before and after post-processing.}
	\label{fig:post}
\end{figure}

\section{Generalized linear model (GLM) for whisker experiment}

In this section, for ease of notation, we consider the simple case of $M_j=1$ (Bernoulli). However, the detail can be generalized to the binomial generative model.\\

\noindent We describe the GLM~\cite{Truccolo2005} used for analyzing the neural spiking data from the whisker experiment \cite{whisker2014}, and which we compared to BCOMP and DCEA in Fig. 4. Fig. 4(b) depicts a segment of the periodic stimulus used in the experiment to deflect the whisker. The units are in $\frac{\text{mm}}{10}$. The full stimulus lasts $3000$ ms and is equal to zero (whisker at rest) during the two baseline periods from $0$ to $500$ ms and $2500$ to $3000$ ms. In the GLM analysis, we used whisker velocity as a stimulus covariate, which corresponds to the first difference of the position stimulus $\mathbf{s}\in\mathbb{R}^{3000}$. The blue curve in Fig. 4(c) represents one period of the whisker-velocity covariate. We associated a single stimulus coefficient $\pmb{\beta}_{\text{stim}}\in\mathbb{R}$ to this covariate. In addition to the stimulus covariate, we used history covariates in the GLM. We denote by $\pmb{\beta}_{H}^j\in\mathbb{R}^{L_j}$ the coefficients associated with these covariates, where $j=1,\cdots,J$ is the neuron index. We also define $a^j$ to be the base firing rate for neuron $j$. The GLM is given by
\begin{equation}
\begin{split}
&\y^{j}[n]\sim\text{Bernoulli}(\mathbf{p}^{j}[n])\\
&\text{ s.t. } \mathbf{p}^{j}[n] = \Big(1+\exp\big(-a^j-\pmb{\beta}_{\text{stim}}\cdot \underbrace{\left(\mathbf{s}[n]-\mathbf{s}[n-1]\right)}_{\text{whisker velocity}}-\sum_{l=1}^{L_j}\pmb{\beta}_{H}^j[l]\cdot\y^{j}[n-l]\big)\Big)^{-1}
\end{split}
\end{equation} 	
The parameters $\{a^j\}_{j=1}^J$, $\pmb{\beta}_{\text{stim}}$, and $\{\pmb{\beta}_{H}^j\}_{j=1}^J$ are estimated by minimizing the negative likelihood of the neural spiking data $\{\y^{j}\}_{j=1}^{10}$ with $M_j=30$ \emph{from all neurons} using IRLS. We picked the order $L_j$ (in ms) of the  history effect for neuron $j$ by fitting the GLM to each of the 10 neurons \emph{separately} and finding the value of $ \approx 5 \leq L_j \leq 100$ that minimizes the Akaike Information Criterion~\cite{Truccolo2005}.

\paragraph{Interpretation of the GLM as a convolutional model} Because whisker position is periodic with period $125$ ms, so is whisker velocity. Letting $\smlh_1$ denote whisker velocity in the interval of length $125$ ms starting at $500$ ms (blue curve in Fig. 4(c)), we can interpret the GLM in terms of the convolutional model of Eq. 8. In this interpretation, $\Bigh$ is the convolution matrix associated with the \emph{fixed} filter $\smlh_1$ (blue curve in Fig. 4(c)), and $\x^j$ is a sparse vector with $16$ equally spaced nonzero entries all equal to $\pmb{\beta}_{\text{stim}}$. The first nonzero entry of $\x^j$ occurs at index $500$. The number of indices between nonzero entries is $125$. The blue dots in Fig. 4(d) reflect this interpretation.

\paragraph{Incorporating history dependence in the generative model}

GLMs of neural spiking data \cite{Truccolo2005} include a constant term that models the baseline probability of spiking $a^j$, as well as a term that models the effect of spiking history. This motivates us to use the model
\begin{equation}\label{eq:logitHistory}
\begin{aligned}
\log{\frac{p(\y^{j,m} \mid \{\smlh_c\}_{c=1}^C, \x^j,\x_H^j)}{1 - p(\y^{j,m} \mid \{\smlh_c\}_{c=1}^C, \x^j,\x_H^j)}} = a^j + \Bigh \x^j + \Bigy_j \x^j_{H},
\end{aligned}
\end{equation}
where $\y^{j,m}\in\{0,1 \}^N$ refers to $m^{\text{th}}$ trial of the binomial data $\y^j$. The $n^\text{th}$ row of $\Bigy_j \in \R^{N \times L_j}$ contains the spiking history of neuron $j$ at trial $m$ from $n-L_j$ to $n$, and $\x_H^j\in\mathbb{R}^{L_j}$ are coefficients that capture the effect of spiking history on the propensity of neuron $j$ to spike. We use the same $L_j$ estimated from GLM. We estimate $a^j$ from the average firing probability during the baseline period. The addition of the history term simply results in an additional set of variables to alternate over in the alternating-minimization interpretation of ECDL. We estimate it by adding a loop around BCOMP or backpropagation through DCEA. Every iteration of this loop first assumes $\x_H^j$ are fixed. Then, it updates the filters and $\x^j$. Finally, it solves a convex optimization problem to update $\x_H^j$ given the filters and $\x^j$. In the interest of space, we do not describe this algorithm formally.

\section{Kolmogorov-smirnov plots and the time-rescaling theorem}

Loosely, the time-rescaling theorem states that rescaling the inter-spike intervals (ISIs) of the neuron using the (unknown) underlying conditional intensity function (CIF) will transform them into i.i.d. samples from an exponential random variable with rate $1$. This implies that, if we apply the CDF of an exponential random variable with rate $1$ to the rescaled ISIs, these should look like i.i.d. draws from a uniform random variable in the interval $[0,1]$. KS plots are a visual depiction of this result. They are obtained by computing the rescaled ISIs using an estimate of the underlying CIF and applying the CDF of an exponential random variable with rate $1$ to them. These are then sorted and plotted against ideal uniformly-spaced empirical quantiles from a uniform random variable in the interval $[0,1]$. The CIF that fits the data the best is the one that yields a curve that is the closest to the 45-degree diagonal. Fig. 4(e) depicts the KS plots obtained using the CIFs estimated using DCEA, BCOMP and the GLM.

\section{Image denoising}
%This section visualizes several test images for Poisson image denoising.
%\vspace{-10mm}
\begin{figure}[h]
	\begin{minipage}[b]{1.0\linewidth}
		\centering
		\tikzstyle{input} = [coordinate]
		\tikzstyle{output} = [coordinate]
		\tikzstyle{pinstyle} = [pin edge={to-,thin,black}]
		\begin{tikzpicture}[auto, node distance=2cm,>=latex']
		cloud/.style={
			draw=red,
			thick,
			ellipse,
			fill=none,
			minimum height=1em}

		\node [input, name=input] {};
		
		\node [rectangle, fill=none, node distance=0.001cm, right of=input] (A) {$\includegraphics[width=0.24\linewidth]{./figures/man_clean}$};
		\node [rectangle, fill=none, node distance=4.15cm, right of=A] (B) {$\includegraphics[width=0.24\linewidth]{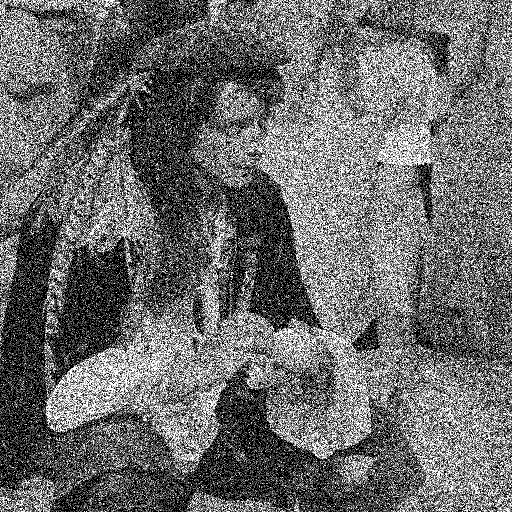}$};
		\node [rectangle, fill=none, node distance=4.15cm, right of=B] (C) {$\includegraphics[width=0.24\linewidth]{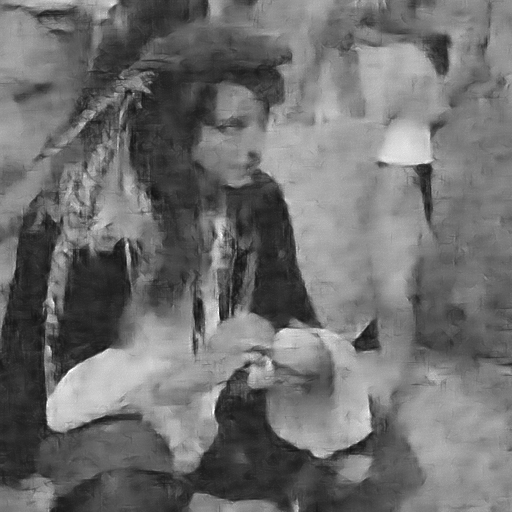}$};
		\node [rectangle, fill=none, node distance=4.15cm, right of=C] (D) {$\includegraphics[width=0.24\linewidth]{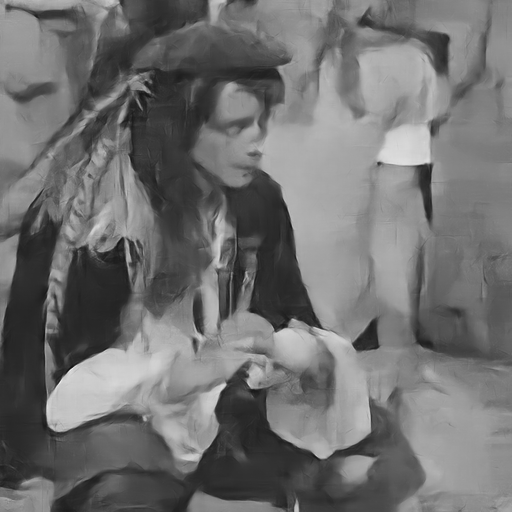}$};
				
		\node [rectangle, fill=none,  node distance=2.25cm,  above of=A] (text) {(a) Original};
		\node [rectangle, fill=none,  node distance=2.25cm,  above of=B] (text) {(b) Noisy peak$=4$};
		\node [rectangle, fill=none,  node distance=2.25cm,  above of=C] (text) {(c) DCEA-C};
		\node [rectangle, fill=none,  node distance=2.25cm,  above of=D] (text) {(d) DCEA-UC};     
	
%		\node [rectangle, fill=none, node distance=4.15cm, below of=A] (A) {$\includegraphics[width=0.24\linewidth]{}$};
%		\node [rectangle, fill=none, node distance=4.15cm, right of=A] (B) {$\includegraphics[width=0.24\linewidth]{}$};
%		\node [rectangle, fill=none, node distance=4.15cm, right of=B] (C) {$\includegraphics[width=0.24\linewidth]{}$};
%		\node [rectangle, fill=none, node distance=4.15cm, right of=C] (D) {$\includegraphics[width=0.24\linewidth]{}$};
	
		\node [rectangle, fill=none, node distance=4.15cm, below of=A] (A) {$\includegraphics[width=0.24\linewidth]{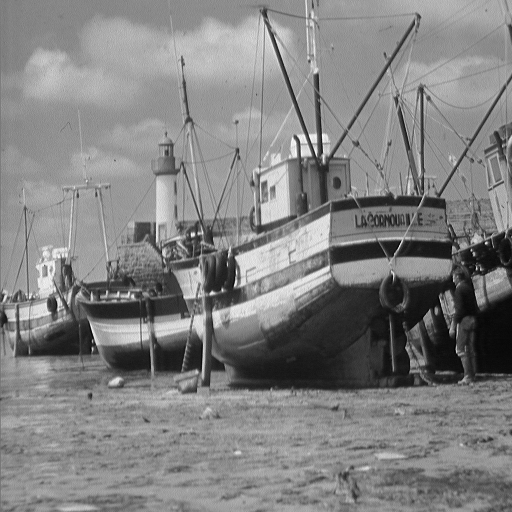}$};
		\node [rectangle, fill=none, node distance=4.15cm, right of=A] (B) {$\includegraphics[width=0.24\linewidth]{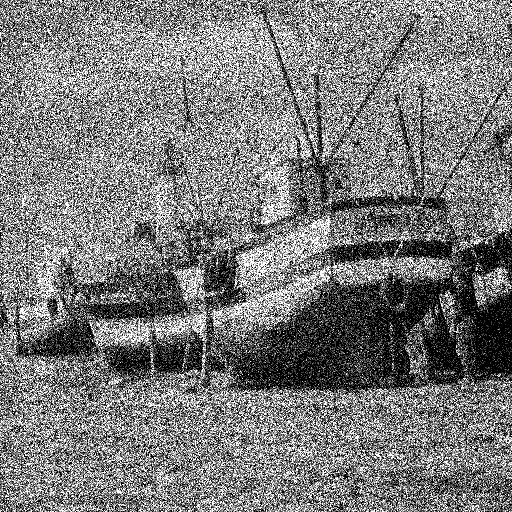}$};
		\node [rectangle, fill=none, node distance=4.15cm, right of=B] (C) {$\includegraphics[width=0.24\linewidth]{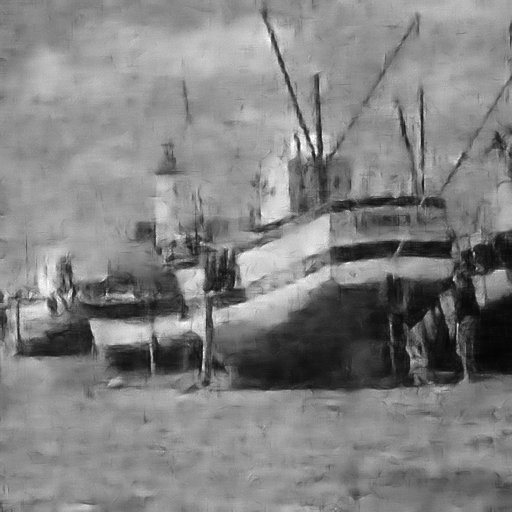}$};
		\node [rectangle, fill=none, node distance=4.15cm, right of=C] (D) {$\includegraphics[width=0.24\linewidth]{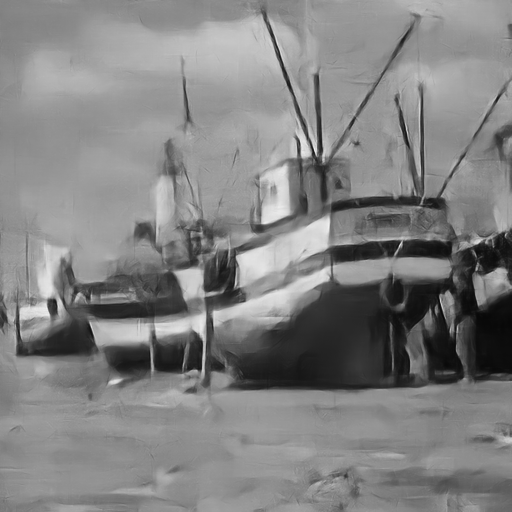}$};
			
		\node [rectangle, fill=none, node distance=4.15cm, below of=A] (A) {$\includegraphics[width=0.24\linewidth]{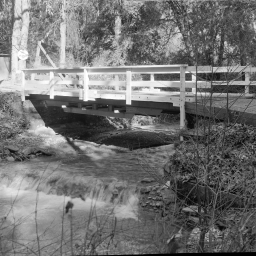}$};
		\node [rectangle, fill=none, node distance=4.15cm, right of=A] (B) {$\includegraphics[width=0.24\linewidth]{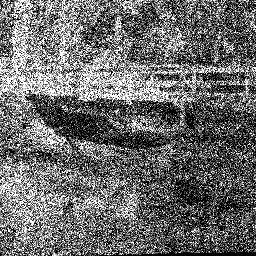}$};
		\node [rectangle, fill=none, node distance=4.15cm, right of=B] (C) {$\includegraphics[width=0.24\linewidth]{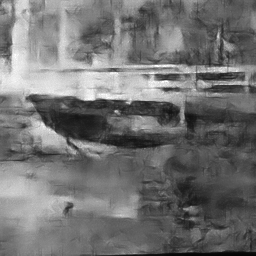}$};
		\node [rectangle, fill=none, node distance=4.15cm, right of=C] (D) {$\includegraphics[width=0.24\linewidth]{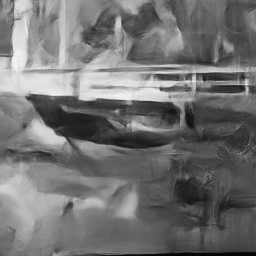}$};
		
		\node [rectangle, fill=none, node distance=4.15cm, below of=A] (A) {$\includegraphics[width=0.24\linewidth]{./figures/cameraman_clean}$};
		\node [rectangle, fill=none, node distance=4.15cm, right of=A] (B) {$\includegraphics[width=0.24\linewidth]{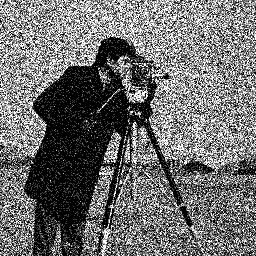}$};
		\node [rectangle, fill=none, node distance=4.15cm, right of=B] (C) {$\includegraphics[width=0.24\linewidth]{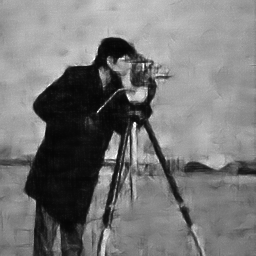}$};
		\node [rectangle, fill=none, node distance=4.15cm, right of=C] (D) {$\includegraphics[width=0.24\linewidth]{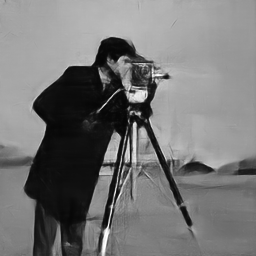}$};

		\node [rectangle, fill=none, node distance=4.15cm, below of=A] (A) {$\includegraphics[width=0.24\linewidth]{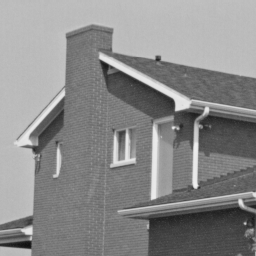}$};
		\node [rectangle, fill=none, node distance=4.15cm, right of=A] (B) {$\includegraphics[width=0.24\linewidth]{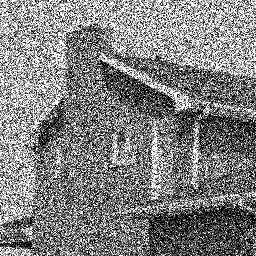}$};
		\node [rectangle, fill=none, node distance=4.15cm, right of=B] (C) {$\includegraphics[width=0.24\linewidth]{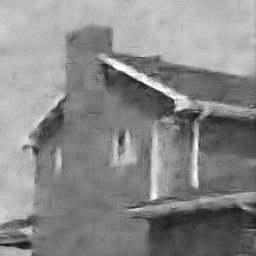}$};
		\node [rectangle, fill=none, node distance=4.15cm, right of=C] (D) {$\includegraphics[width=0.24\linewidth]{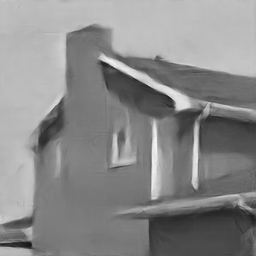}$};		
		
%		\node [rectangle, fill=none, node distance=4.15cm, below of=A] (A) {$\includegraphics[width=0.24\linewidth]{}$};
%		\node [rectangle, fill=none, node distance=4.15cm, right of=A] (B) {$\includegraphics[width=0.24\linewidth]{}$};
%		\node [rectangle, fill=none, node distance=4.15cm, right of=B] (C) {$\includegraphics[width=0.24\linewidth]{}$};
%		\node [rectangle, fill=none, node distance=4.15cm, right of=C] (D) {$\includegraphics[width=0.24\linewidth]{}$};
						
		\end{tikzpicture}
	\end{minipage}
	\vspace{-8mm}
	\caption{Denoising performance on test images with peak$=4$. (a) Original, (b) noisy, (c) DCEA-C, and (d) DCEA-UC.}
	\label{fig:house}
\end{figure}

\begin{figure}[h]
	\begin{minipage}[b]{1.0\linewidth}
		\centering
		\tikzstyle{input} = [coordinate]
		\tikzstyle{output} = [coordinate]
		\tikzstyle{pinstyle} = [pin edge={to-,thin,black}]
		\begin{tikzpicture}[auto, node distance=2cm,>=latex']
		cloud/.style={
			draw=red,
			thick,
			ellipse,
			fill=none,
			minimum height=1em}

		\node [input, name=input] {};
		
		\node [rectangle, fill=none, node distance=0.001cm, right of=input] (A) {$\includegraphics[width=0.24\linewidth]{./figures/man_clean}$};
		\node [rectangle, fill=none, node distance=4.15cm, right of=A] (B) {$\includegraphics[width=0.24\linewidth]{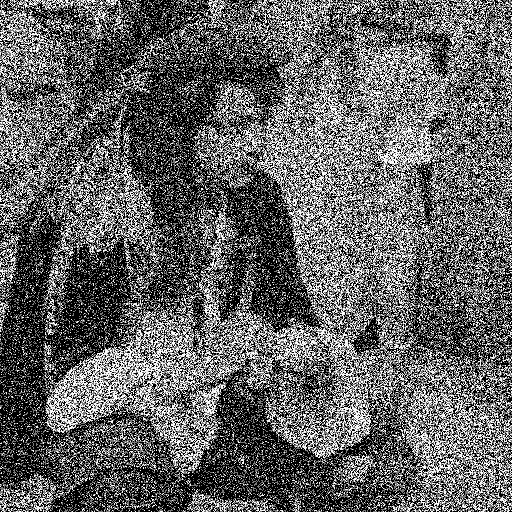}$};
		\node [rectangle, fill=none, node distance=4.15cm, right of=B] (C) {$\includegraphics[width=0.24\linewidth]{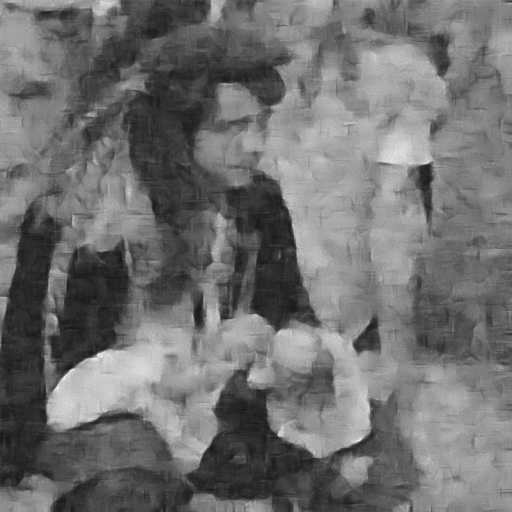}$};
		\node [rectangle, fill=none, node distance=4.15cm, right of=C] (D) {$\includegraphics[width=0.24\linewidth]{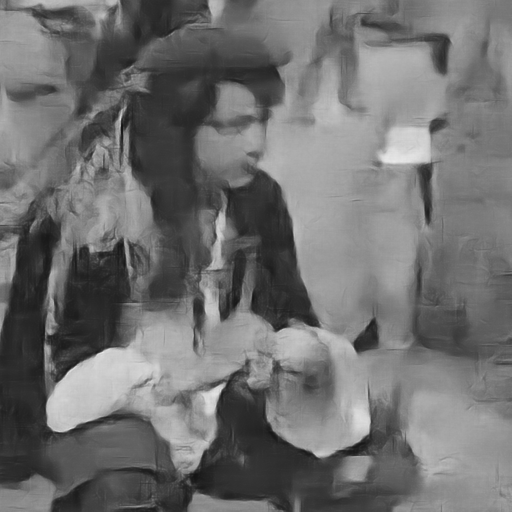}$};
				
		\node [rectangle, fill=none,  node distance=2.25cm,  above of=A] (text) {(a) Original};
		\node [rectangle, fill=none,  node distance=2.25cm,  above of=B] (text) {(b) Noisy peak$=2$};
		\node [rectangle, fill=none,  node distance=2.25cm,  above of=C] (text) {(c) DCEA-C};
		\node [rectangle, fill=none,  node distance=2.25cm,  above of=D] (text) {(d) DCEA-UC};     
	
%		\node [rectangle, fill=none, node distance=4.15cm, below of=A] (A) {$\includegraphics[width=0.24\linewidth]{./figures/couple_clean}$};
%		\node [rectangle, fill=none, node distance=4.15cm, right of=A] (B) {$\includegraphics[width=0.24\linewidth]{}$};
%		\node [rectangle, fill=none, node distance=4.15cm, right of=B] (C) {$\includegraphics[width=0.24\linewidth]{}$};
%		\node [rectangle, fill=none, node distance=4.15cm, right of=C] (D) {$\includegraphics[width=0.24\linewidth]{}$};
	
		\node [rectangle, fill=none, node distance=4.15cm, below of=A] (A) {$\includegraphics[width=0.24\linewidth]{./figures/boat_clean}$};
		\node [rectangle, fill=none, node distance=4.15cm, right of=A] (B) {$\includegraphics[width=0.24\linewidth]{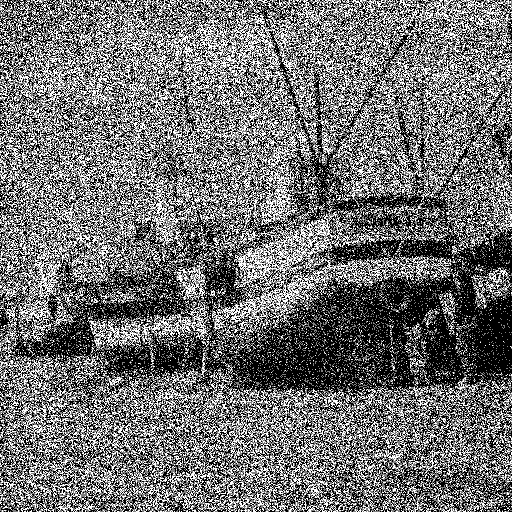}$};
		\node [rectangle, fill=none, node distance=4.15cm, right of=B] (C) {$\includegraphics[width=0.24\linewidth]{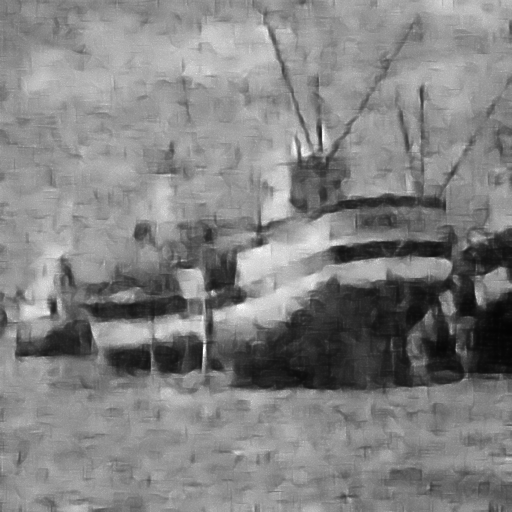}$};
		\node [rectangle, fill=none, node distance=4.15cm, right of=C] (D) {$\includegraphics[width=0.24\linewidth]{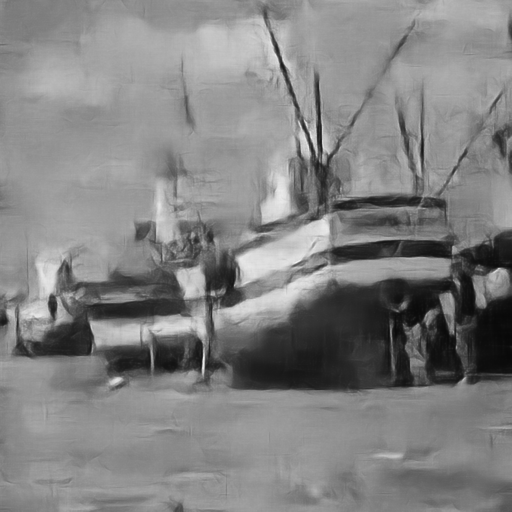}$};
			
		\node [rectangle, fill=none, node distance=4.15cm, below of=A] (A) {$\includegraphics[width=0.24\linewidth]{./figures/brdige_clean}$};
		\node [rectangle, fill=none, node distance=4.15cm, right of=A] (B) {$\includegraphics[width=0.24\linewidth]{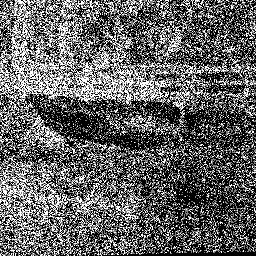}$};
		\node [rectangle, fill=none, node distance=4.15cm, right of=B] (C) {$\includegraphics[width=0.24\linewidth]{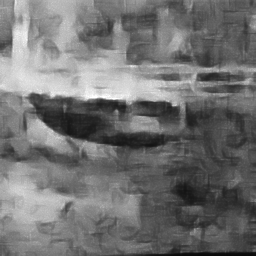}$};
		\node [rectangle, fill=none, node distance=4.15cm, right of=C] (D) {$\includegraphics[width=0.24\linewidth]{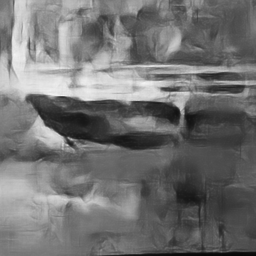}$};
		
		\node [rectangle, fill=none, node distance=4.15cm, below of=A] (A) {$\includegraphics[width=0.24\linewidth]{./figures/cameraman_clean}$};
		\node [rectangle, fill=none, node distance=4.15cm, right of=A] (B) {$\includegraphics[width=0.24\linewidth]{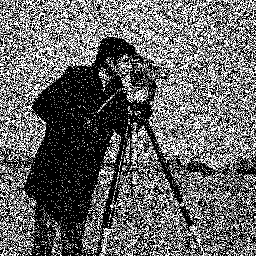}$};
		\node [rectangle, fill=none, node distance=4.15cm, right of=B] (C) {$\includegraphics[width=0.24\linewidth]{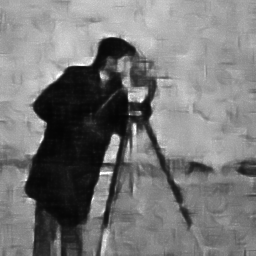}$};
		\node [rectangle, fill=none, node distance=4.15cm, right of=C] (D) {$\includegraphics[width=0.24\linewidth]{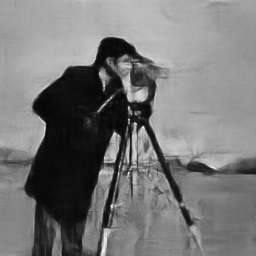}$};

		\node [rectangle, fill=none, node distance=4.15cm, below of=A] (A) {$\includegraphics[width=0.24\linewidth]{./figures/house_clean}$};
		\node [rectangle, fill=none, node distance=4.15cm, right of=A] (B) {$\includegraphics[width=0.24\linewidth]{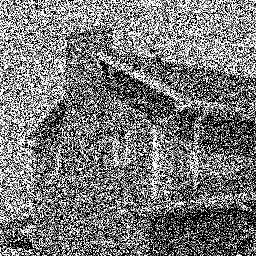}$};
		\node [rectangle, fill=none, node distance=4.15cm, right of=B] (C) {$\includegraphics[width=0.24\linewidth]{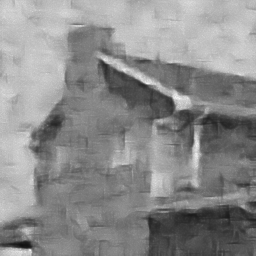}$};
		\node [rectangle, fill=none, node distance=4.15cm, right of=C] (D) {$\includegraphics[width=0.24\linewidth]{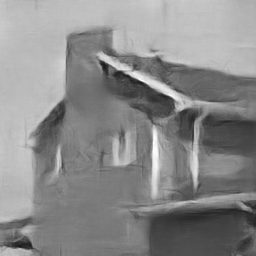}$};		
		
%		\node [rectangle, fill=none, node distance=4.15cm, below of=A] (A) {$\includegraphics[width=0.24\linewidth]{./figures/peppers_clean}$};
%		\node [rectangle, fill=none, node distance=4.15cm, right of=A] (B) {$\includegraphics[width=0.24\linewidth]{}$};
%		\node [rectangle, fill=none, node distance=4.15cm, right of=B] (C) {$\includegraphics[width=0.24\linewidth]{}$};
%		\node [rectangle, fill=none, node distance=4.15cm, right of=C] (D) {$\includegraphics[width=0.24\linewidth]{}$};
						
		\end{tikzpicture}
	\end{minipage}
	\vspace{-8mm}
	\caption{Denoising performance on test images with peak$=2$. (a) Original, (b) noisy, (c) DCEA-C, and (d) DCEA-UC.}
	\label{fig:house}
\end{figure}

\begin{figure}[h]
	\begin{minipage}[b]{1.0\linewidth}
		\centering
		\tikzstyle{input} = [coordinate]
		\tikzstyle{output} = [coordinate]
		\tikzstyle{pinstyle} = [pin edge={to-,thin,black}]
		\begin{tikzpicture}[auto, node distance=2cm,>=latex']
		cloud/.style={
			draw=red,
			thick,
			ellipse,
			fill=none,
			minimum height=1em}

		\node [input, name=input] {};
		
		\node [rectangle, fill=none, node distance=0.001cm, right of=input] (A) {$\includegraphics[width=0.24\linewidth]{./figures/man_clean}$};
		\node [rectangle, fill=none, node distance=4.15cm, right of=A] (B) {$\includegraphics[width=0.24\linewidth]{./figures/man_noisy_1}$};
		\node [rectangle, fill=none, node distance=4.15cm, right of=B] (C) {$\includegraphics[width=0.24\linewidth]{./figures/man_untied_1}$};
		\node [rectangle, fill=none, node distance=4.15cm, right of=C] (D) {$\includegraphics[width=0.24\linewidth]{./figures/man_untied_1}$};
				
		\node [rectangle, fill=none,  node distance=2.25cm,  above of=A] (text) {(a) Original};
		\node [rectangle, fill=none,  node distance=2.25cm,  above of=B] (text) {(b) Noisy peak$=1$};
		\node [rectangle, fill=none,  node distance=2.25cm,  above of=C] (text) {(c) DCEA-C};
		\node [rectangle, fill=none,  node distance=2.25cm,  above of=D] (text) {(d) DCEA-UC};     
	
%		\node [rectangle, fill=none, node distance=4.15cm, below of=A] (A) {$\includegraphics[width=0.24\linewidth]{./figures/couple_clean}$};
%		\node [rectangle, fill=none, node distance=4.15cm, right of=A] (B) {$\includegraphics[width=0.24\linewidth]{}$};
%		\node [rectangle, fill=none, node distance=4.15cm, right of=B] (C) {$\includegraphics[width=0.24\linewidth]{}$};
%		\node [rectangle, fill=none, node distance=4.15cm, right of=C] (D) {$\includegraphics[width=0.24\linewidth]{}$};
	
		\node [rectangle, fill=none, node distance=4.15cm, below of=A] (A) {$\includegraphics[width=0.24\linewidth]{./figures/boat_clean}$};
		\node [rectangle, fill=none, node distance=4.15cm, right of=A] (B) {$\includegraphics[width=0.24\linewidth]{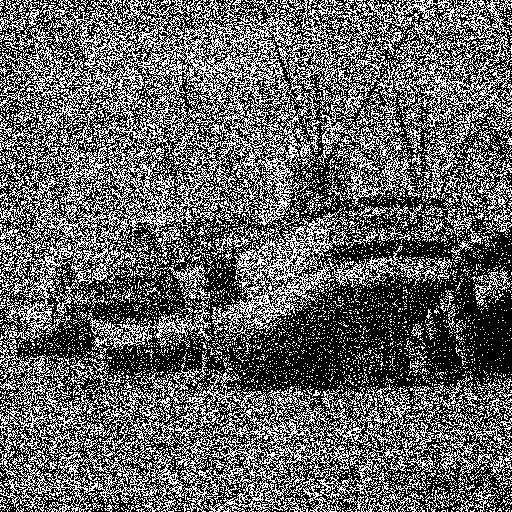}$};
		\node [rectangle, fill=none, node distance=4.15cm, right of=B] (C) {$\includegraphics[width=0.24\linewidth]{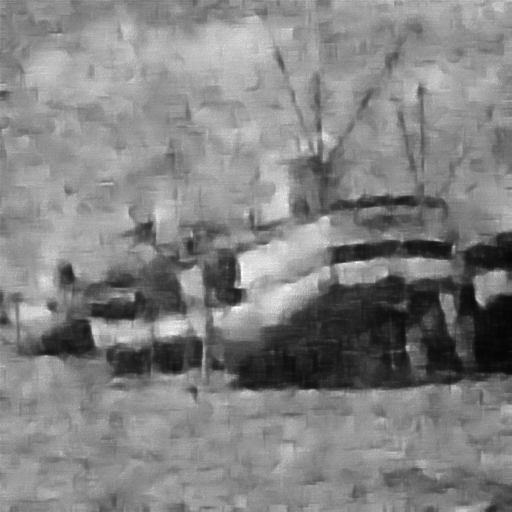}$};
		\node [rectangle, fill=none, node distance=4.15cm, right of=C] (D) {$\includegraphics[width=0.24\linewidth]{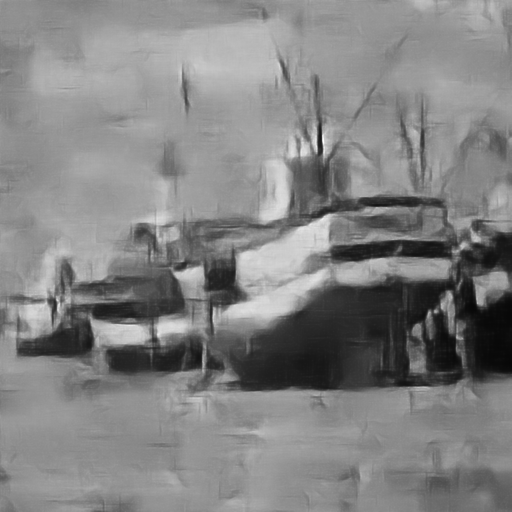}$};
			
		\node [rectangle, fill=none, node distance=4.15cm, below of=A] (A) {$\includegraphics[width=0.24\linewidth]{./figures/brdige_clean}$};
		\node [rectangle, fill=none, node distance=4.15cm, right of=A] (B) {$\includegraphics[width=0.24\linewidth]{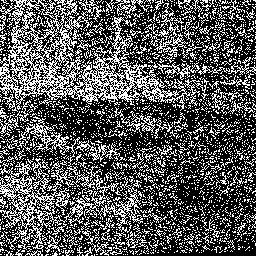}$};
		\node [rectangle, fill=none, node distance=4.15cm, right of=B] (C) {$\includegraphics[width=0.24\linewidth]{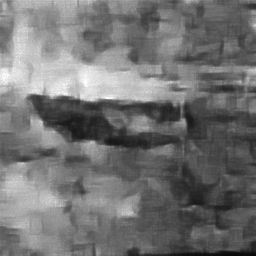}$};
		\node [rectangle, fill=none, node distance=4.15cm, right of=C] (D) {$\includegraphics[width=0.24\linewidth]{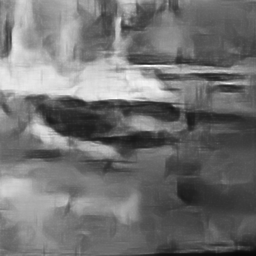}$};
		
		\node [rectangle, fill=none, node distance=4.15cm, below of=A] (A) {$\includegraphics[width=0.24\linewidth]{./figures/cameraman_clean}$};
		\node [rectangle, fill=none, node distance=4.15cm, right of=A] (B) {$\includegraphics[width=0.24\linewidth]{./figures/cameraman_noisy_1}$};
		\node [rectangle, fill=none, node distance=4.15cm, right of=B] (C) {$\includegraphics[width=0.24\linewidth]{./figures/cameraman_tied_1}$};
		\node [rectangle, fill=none, node distance=4.15cm, right of=C] (D) {$\includegraphics[width=0.24\linewidth]{./figures/cameraman_untied_1}$};

		\node [rectangle, fill=none, node distance=4.15cm, below of=A] (A) {$\includegraphics[width=0.24\linewidth]{./figures/house_clean}$};
		\node [rectangle, fill=none, node distance=4.15cm, right of=A] (B) {$\includegraphics[width=0.24\linewidth]{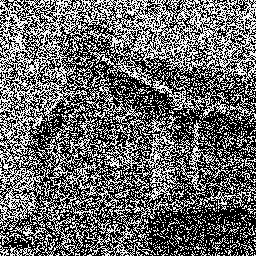}$};
		\node [rectangle, fill=none, node distance=4.15cm, right of=B] (C) {$\includegraphics[width=0.24\linewidth]{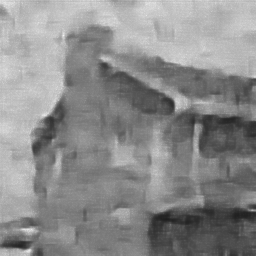}$};
		\node [rectangle, fill=none, node distance=4.15cm, right of=C] (D) {$\includegraphics[width=0.24\linewidth]{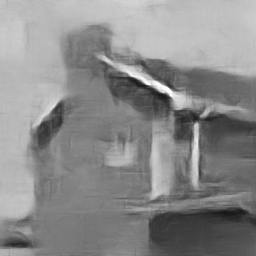}$};		
		
%		\node [rectangle, fill=none, node distance=4.15cm, below of=A] (A) {$\includegraphics[width=0.24\linewidth]{./figures/peppers_clean}$};
%		\node [rectangle, fill=none, node distance=4.15cm, right of=A] (B) {$\includegraphics[width=0.24\linewidth]{}$};
%		\node [rectangle, fill=none, node distance=4.15cm, right of=B] (C) {$\includegraphics[width=0.24\linewidth]{}$};
%		\node [rectangle, fill=none, node distance=4.15cm, right of=C] (D) {$\includegraphics[width=0.24\linewidth]{}$};
						
		\end{tikzpicture}
	\end{minipage}
	\vspace{-8mm}
	\caption{Denoising performance on test images with peak$=1$. (a) Original, (b) noisy, (c) DCEA-C, and (d) DCEA-UC.}
	\label{fig:house}
\end{figure}

\clearpage

\end{document}